\pgfplotsset{compat=1.15}
\newtheorem{theorem}{Theorem}[section]
\newtheorem{corollary}{Corollary}[theorem]
\newtheorem{lemma}[theorem]{Lemma}
\newtheorem{assumption}{Assumption}
\newcommand{\si}{\sum\limits_{i=1}^{M}}
\newcommand{\sj}{\sum\limits_{j=1}^{M}}
\newcommand{\sm}{\sum\limits_{m=1}^{M}}
\newcommand{\Q}{\tilde{Q}}
\newcommand{\A}{\tilde{A}}
\newtheorem{remark}{Note}
\newcommand{\sa}{\sum\limits_{a\in \cA}}
\newcommand{\del}{\frac{d}{d\theta}}
\title{Improper Reinforcement Learning with Gradient-based Policy Optimization}
\author[1]{Mohammadi Zaki}
\author[2]{Avinash Mohan}
\author[1]{Aditya Gopalan}
\author[2]{Shie Mannor}
\affil[1]{Indian Institute of Science, Bengaluru}
\affil[2]{Technion, Haifa}
\affil[ ]{Email: \texttt{mohammadi@iisc.ac.in, avinashmohan@campus.technion.ac.il, aditya@iisc.ac.in, shie@ee.technion.ac.il}}
\date{}
\begin{document}
\maketitle

\vskip 0.3in




\begin{abstract}
We consider an improper reinforcement learning setting where a
learner is given $M$ base controllers for an unknown Markov
decision process, and wishes to combine them optimally to produce
a potentially new controller that can outperform each of the base
ones. This can be useful in tuning across controllers, learnt possibly in mismatched or simulated  environments, to obtain a good controller for a given target environment with relatively few trials. 


\par We propose a gradient-based approach that operates over a
class of improper mixtures of the controllers.
We derive convergence rate guarantees for the approach assuming access to a
gradient oracle. 
The value function of the mixture and its gradient may not be available in
closed-form; however, we show that we can employ rollouts and
simultaneous perturbation stochastic approximation (SPSA) for
explicit gradient descent optimization. 
Numerical results on (i) the standard control theoretic benchmark of stabilizing an inverted pendulum and (ii) a  constrained
queueing task show that our improper policy optimization
algorithm can stabilize the system even when the base
policies at its disposal are unstable\footnote{Under review. Please do not distribute.}.
\end{abstract}
\section{Introduction}\label{sec:Introduction}

A natural approach to design effective controllers for large, complex systems is to first approximate the system using a tried-and-true Markov decision process (MDP) model, such as the Linear Quadratic Regulator (LQR) \cite{SarahDean} or tabular MDPs \cite{UCRL}, and then compute (near-) optimal policies for the assumed model. Though this yields favorable results in principle, it is quite possible that errors in describing or understanding the system --  leading to misspecified models -- may lead to `overfitting', resulting in subpar controllers in practice. An alternative to this is to construct an optimal controller in an \emph{online} fashion using a single (or multiple) chain of black-box interactions with the system. However, recent results on regret performance uncover exponential dependence on system parameters even for a system as simple as an LQR \cite{chen-hazan20black-box-control-linear-dynamical}, casting a shadow on online optimization as a viable option.

Moreover, in many cases, the stability of the designed controller may be  crucial or more desirable than optimizing a fine-grained cost function. From the controller design standpoint, it is often easier, cheaper and more interpretable to specify or hardcode control policies based on domain-specific principles, e.g., anti-lock braking system (ABS) controllers \cite{ABScitation}. For these reasons, we investigate in this paper a promising,  \emph{general-purpose} reinforcement learning (RL) approach towards designing controllers\footnote{We use the terms 'policy' and 'controller' interchangeably in this article.} given  pre-designed ensembles of \emph{basic} or \emph{atomic} controllers, which (a) allows for flexibly combining the given controllers to obtain richer policies than the atomic policies, and, at the same time, (b) can preserve the basic structure of the given class of controllers and confer a high degree of interpretability on the resulting hybrid policy. 

{\bf Overview of the approach.} We consider a situation where we are given `black-box' access to $M$ controllers (maps from state to action distributions) $\{k_1, \ldots, k_M\}$ for an unknown MDP . By this we mean that we can choose to invoke any of the given controllers at any point during the operation of the system. With the understanding that the given family of controllers is reasonable, we frame the problem of learning the best combination of the controllers by trial and error. We first set up an {\color{blue}improper policy class} of all randomized mixtures of the $M$ given controllers -- each such mixture is parameterized by a probability distribution over the $M$ base controllers. Applying an improper policy in this class amounts to selecting independently at each time a base controller according to this distribution and implementing the recommended action as a function of the present state of the system.  

The learner's goal, therefore, is to find the best performing mixture policy by iteratively testing from the pool of given controllers and observing the resulting state-action-reward trajectory. To this end we develop a new gradient-based RL optimization algorithm that operates on a softmax parameterization of each mixture (probability distribution) of the $M$ basic controllers, and takes steps by following the gradient of the return of the current probability distribution to reach the optimum mixture. This is reminiscent of the standard policy gradient (PG) method with a softmax parameterization of the policy over a discrete state and action space. 

However, there is a basic difference in that the underlying parameterization in our setting is over a set of given \emph{controllers} which could be potentially abstract and defined for complex MDPs with continuous state/action spaces, instead of the PG view where the parameterization directly defines the policy in terms of the state-action map. Our algorithm, therefore, hews more closely to a \emph{meta RL} framework, in that we operate over a set of controllers that have themselves been designed using some optimization framework to which we are agnostic. This confers a great deal of generality to our approach since the class of controllers can now be chosen to promote any desirable secondary characteristic such as interpretability, ease of implementation or cost effectiveness.

It is also worth noting that our approach is different from treating each of the base controllers as an `expert' and applying standard mixture-of-experts algorithms, e.g., Hedge or Exponentiated Gradient \cite{EXP-WTS, Exp3,KocakExpIX,Neu15a}. Whereas the latter approach is tailored to converge to the best single controller (under the usual gradient approximation framework) and hence qualifies as a 'proper' learning algorithm, the former optimization problem is in the improper class of mixture policies which not only contains each atomic controller but also allows for a true mixture (i.e., one which puts positive probability on at least two elements) of many atomic controllers to achieve optimality; we exhibit concrete examples where this is indeed possible. 

{\bf Our Contributions.} We make the following contributions in this context:

\begin{enumerate}

\item We develop a gradient-based RL algorithm to iteratively tune a softmax parameterization of an improper (mixture) policy defined over the base controllers (Algorithm~\ref{alg:mainPolicyGradMDP}). While this algorithm, \emph{\color{blue}Softmax Policy Gradient} (or Softmax PG), relies on the availability of value function gradients, we later propose a modification that we call  \emph{\color{blue}GradEst} (Algorithm~\ref{alg:gradEst}) to Softmax PG to rectify this. GradEst uses a combination of rollouts and Simultaneously Perturbed Stochastic Approximation (SPSA) \cite{Borkar} to estimate the value gradient at the current mixture distribution. 

\item We show a convergence rate of $\mathcal{O}(1/t)$ to the optimal value function for the finite state-action MDPs. To do this, we employ a novel Non-uniform Łojaseiwicz-type inequality \cite{lojasiewicz1963equations}, that lower bounds the 2-norm of the value gradient in terms of the suboptimality of the current mixture policy's value. Essentially, this helps establish that when the gradient of the value function hits zero, the value function is itself close to the optimum.
Along the way, we also establish the  $\beta$-smoothness of value function of our \emph{improper} controller, which may be of independent interest. 

\item We demonstrate the performance of Softmax PG with the instructive special case of Multi-armed Bandits (Sec.~\ref{subsec:theory-estimated-gradients}). For a horizon of $T$ steps, we recover the well-known $\mathcal{O}\left(\log(T)\right)$ bound on regret \cite{lattimoreszepesvari2020} with both perfect and estimated value function gradients. Further, when perfect value gradients are available, we show a $\mathcal{O}\left(1/t\right)$ rate of convergence to the optimal value function, $t$ being the current round.

\item We corroborate our theory using extensive simulation studies in two different settings (a) the well-known \emph{Inverted Pendulum} system and (b) a scheduling task in constrained queueing system. We discuss both these settings in detail in Sec.~\ref{sec:motivating-examples}, where we also demonstrate the power of our improper learning approach in finding control policies with provably good performance. In our experiments (see Sec.~\ref{sec:simulation-results}), we eschew access to exact value gradients and instead rely on a combination of roll outs and SPSA to \emph{estimate} them. Results show that our algorithm quickly converges to the correct mixture of available atomic controllers.

    
\end{enumerate}

\vspace{-0cm}

\subsection{Related Work}
Before we delve into our problem, it is vital to first distinguish the approach investigated in the present paper from the plethora of existing algorithms based on 'proper learning'. Essentially, these algorithms try to find an (approximately) optimal policy for the MDP under investigation. In stochastic control parlance, these proposals try to get close to the Bellman fixed point of the MDP. These approaches can broadly be classified in two groups: \emph{\color{blue}model-based} and \emph{\color{blue}model-free}. 

The former is based on first learning the dynamics of the unknown MDP followed by planning for this learnt model. Algorithms in this class  include Thompson Sampling-based approaches \cite{OsbandTS_NIPS2013,Ouyang2017LearningUM,pmlr-v40-Gopalan15}, Optimism-based approaches such as the UCRL algorithm \cite{UCRL}, both achieving order-wise optimal $\cO(\sqrt{T})$ regret bound.

A particular class of MDPs which has been studied extensively is the Linear Quadratic Regulator (LQR) which is a continuous state-action MDP with linear state dynamics and quadratic cost \cite{SarahDean}. Let $x_t\in \Real^m$ be the current state and let $u_t\in \Real^n$ be the action applied at time $t$. The infinite horizon average cost minimization problem for LQR is to find a policy to choose actions $\{u_t\}_{t\geq 1}$ so as to
\[\text{minimize} \lim\limits_{T\to \infty}\expect{\frac{1}{T}\sum\limits_{t=1}^T x_t\transpose Qx_t +u_t\transpose Ru_t}\]
such that $x_{t+1}=Ax_t+Bu_t+n(t)$, $n(t)$ is iid zero-mean noise. Here the matrices $A$ and $B$ are unknown to the learner.
Earlier works like \cite{abbasi-yadkori, Ibrahimi}  proposed algorithms based on the well-known optimism principle (with confidence ellipsoids around estimates of $A$ and $B$). These show regret bounds of $\cO(\sqrt{T})$. 

However, these approaches do not focus on the stability of the closed-loop system. \cite{SarahDean} describe a robust controller design which seeks  to  minimize  the  worst-case  performance  of  the  system  given  the error in the estimation process. They show a sample complexity analysis guaranteeing convergence rate of $\cO(1/\sqrt{N})$ to the optimal policy for the given LQR, $N$ being the number of rollouts. More recently, certainity equivalence \cite{mania2019certainty} was shown to achieve $\cO(\sqrt{T})$ regret for LQRs. Further, \cite{pmlr-v119-cassel20a} show  that it is possible to achieve $\cO(\log T)$ regret if either one of the matrices $A$ or $B$ are known to the learner, and also provided a lower bound showing that $\Omega(\sqrt{T})$ regret is unavoidable when both are unknown.

The model-free approach on the other hand, bypasses model estimation and directly learns the value function of the unknown MDP. While the most popular among these have historically been Q-learning, TD-learning \cite{Sutton1998} and SARSA \cite{Rummery94on-lineq-learning-SARSA}, algorithms based on gradient-based policy optimization have been gaining considerable attention of late, following their stunning success with playing the game of Go which has long been viewed as the most challenging of classic games for artificial intelligence owing to its enormous search space and the difficulty of evaluating board positions and moves. \cite{AlphaGo} and more recently \cite{AlphaGozero} use policy gradient method combined with a neural network representation to beat human experts. Indeed, the Policy Gradient method has become the cornerstone of modern RL and given birth to an entire class of highly efficient policy search algorithms such as TRPO \cite{TRPO}, PPO\cite{PPO}, and MADDPG \cite{MADDPG}. 

Despite its excellent empirical performance, 
not much was known about theoretical guarantees for this approach until recently. There is now a growing body of promising results showing convergence rates for PG algorithms over finite state-action MDPs \cite{Agarwal2020,Shani2020AdaptiveTR,Bhandari2019GlobalOG,Mei2020}, where the parameterization is over the entire space of state -action pairs, i.e., $\Real^{S\times A}$. In particular, \cite{Bhandari2019GlobalOG} show that projected gradient descent does not suffer from spurious local optima on the simplex, \cite{Agarwal2020} show that the with softmax parameterization PG converges to the global optima asymptotically. \cite{Shani2020AdaptiveTR} show a $\cO(1/\sqrt{t})$ convergence rate for mirror descent. \cite{Mei2020} show that with softmax policy gradient convergence to the global optima occurs at a rate $\cO(1/t)$ and at $\cO(e^{-t})$ with entropy regularization.
These advantages, however, are partially offset by negative results such as those in \cite{li-etal21softmax-policy-gradient-exponential-converge-time}, which show that the convergence time is $\Omega \left(\abs{\cS}^{2^{1/(1-\gamma)}}\right)$, where $\cS$ is the state space of the MDP and $\gamma$ the discount factor, even when exact gradient knowledge is assumed. 

We are thus left with a model-free technique, whose convergence rate shows desirable dependence on the number of iterations, but is exponential in system parameters. Our objective in this paper, therefore, is to attempt to alleviate precisely this latter issue.



 We end this section noting once again that all of the above works concern \emph{proper} learning.
Improper learning, on the other hand, has been separately studied in statistical learning theory in the iid setting \cite{Improper1, Improper2}.  In this framework, which is also called \emph{Representation Independent} learning, the learning algorithm is  not restricted to output a hypothesis from a given set of hypotheses. 
 
To our knowledge, \cite{hazan-etal20boosting-control-dynamical} is the only existing work that attempts to frame and solve policy optimization over an improper class via boosting a given class of controllers. However, the paper is situated in the rather different context of non-stochastic control and assumes perfect knowledge of (i) the memory-boundedness of the MDP, and (ii) the state noise vector in every round, which amounts to essentially knowing the MDP transition dynamics. We work in the stochastic MDP setting and moreover assume no access to the MDP's transition kernel. Further, \cite{hazan-etal20boosting-control-dynamical} also assumes that all the atomic controllers available to them are \emph{stabilizing} which, when working with an unknown MDP, is a very strong assumption to make. We make no such assumptions on our atomic controller class and, as we show in Sec.~\ref{sec:motivating-queueing-example} and Sec.~\ref{sec:simulation-results}, our algorithms even begin with provably unstable controllers and yet succeed in stabilizing the system.

In summary, the problem that we address concerns finding the best among a \emph{given} class of controllers. None of these need be optimal for the MDP at hand. Moreover, our PG algorithm could very well converge to an improper mixture of these controllers meaning that the output of our algorithms need not be any of the atomic controllers we are provided with. This setting, to the best of our knowledge has not been investigated in the RL literature hitherto.
\section{Motivating Examples}\label{sec:motivating-examples}
Given the novelty of our paradigm, we begin with examples that help illustrate the need for improper learning over a given set of atomic controllers. The two simple examples below concretely  demonstrate power of this approach to find (improper) control policies that go well beyond what the atomic set can accomplish, while retaining some of their desirable properties (such as interpretability and simplicity of implementation).

\subsection{Ergodic Control of the Inverted Pendulum System}\label{sec:motivating-cartpole-example}
\begin{wrapfigure}{L}{0.55\textwidth}
	\begin{minipage}{0.55\textwidth}
		\vspace{-0.5cm}
		\begin{figure}[H]
			\centering
			\tikzset{every picture/.style={line width=0.75pt}} 

\resizebox{5.50cm}{6.0cm}{
\begin{tikzpicture}[x=0.75pt,y=0.75pt,yscale=-1,xscale=1, scale=0.75]

\draw  [fill={rgb, 255:red, 155; green, 155; blue, 155 }  ,fill opacity=0.5 ][line width=2.25]  (209,268) -- (439,268) -- (439,334) -- (209,334) -- cycle ;
\draw  [fill={rgb, 255:red, 155; green, 155; blue, 155 }  ,fill opacity=0.5 ][line width=2.25]  (409.68,347.3) .. controls (410.55,339.22) and (417.8,333.37) .. (425.89,334.24) .. controls (433.97,335.11) and (439.82,342.36) .. (438.95,350.44) .. controls (438.09,358.53) and (430.83,364.38) .. (422.75,363.51) .. controls (414.67,362.64) and (408.82,355.39) .. (409.68,347.3) -- cycle ;
\draw  [fill={rgb, 255:red, 155; green, 155; blue, 155 }  ,fill opacity=0.5 ][line width=2.25]  (211.68,348.3) .. controls (212.55,340.22) and (219.8,334.37) .. (227.89,335.24) .. controls (235.97,336.11) and (241.82,343.36) .. (240.95,351.44) .. controls (240.09,359.53) and (232.83,365.38) .. (224.75,364.51) .. controls (216.67,363.64) and (210.82,356.39) .. (211.68,348.3) -- cycle ;
\draw [line width=2.25]    (190,365) -- (455,365) ;
\draw [line width=2.25]  [dash pattern={on 2.53pt off 3.02pt}]  (324,15) -- (324,436) ;
\draw [color={rgb, 255:red, 74; green, 74; blue, 74 }  ,draw opacity=1 ][fill={rgb, 255:red, 208; green, 2; blue, 27 }  ,fill opacity=1 ][line width=3.75]    (323,268) -- (458,51) ;
\draw  [draw opacity=0][dash pattern={on 1.69pt off 2.76pt}][line width=1.5]  (322.77,53.04) .. controls (333.78,37.52) and (360.79,30.92) .. (389.01,38.26) .. controls (417.32,45.63) and (437.72,64.66) .. (439.66,83.65) -- (378.7,77.86) -- cycle ; \draw  [dash pattern={on 1.69pt off 2.76pt}][line width=1.5]  (322.77,53.04) .. controls (333.78,37.52) and (360.79,30.92) .. (389.01,38.26) .. controls (417.32,45.63) and (437.72,64.66) .. (439.66,83.65) ;
\draw [line width=1.5]    (250,63) -- (250,97) ;
\draw [shift={(250,100)}, rotate = 270] [color={rgb, 255:red, 0; green, 0; blue, 0 }  ][line width=1.5]    (14.21,-4.28) .. controls (9.04,-1.82) and (4.3,-0.39) .. (0,0) .. controls (4.3,0.39) and (9.04,1.82) .. (14.21,4.28)   ;
\draw [line width=1.5]    (107,304) -- (189,304) ;
\draw [shift={(192,304)}, rotate = 180] [color={rgb, 255:red, 0; green, 0; blue, 0 }  ][line width=1.5]    (14.21,-4.28) .. controls (9.04,-1.82) and (4.3,-0.39) .. (0,0) .. controls (4.3,0.39) and (9.04,1.82) .. (14.21,4.28)   ;
\draw [line width=1.5]    (323,403) -- (405,403) ;
\draw [shift={(408,403)}, rotate = 180] [color={rgb, 255:red, 0; green, 0; blue, 0 }  ][line width=1.5]    (14.21,-4.28) .. controls (9.04,-1.82) and (4.3,-0.39) .. (0,0) .. controls (4.3,0.39) and (9.04,1.82) .. (14.21,4.28)   ;
\draw [line width=1.5]    (392,181) -- (334.61,271.47) ;
\draw [shift={(333,274)}, rotate = 302.39] [color={rgb, 255:red, 0; green, 0; blue, 0 }  ][line width=1.5]    (14.21,-4.28) .. controls (9.04,-1.82) and (4.3,-0.39) .. (0,0) .. controls (4.3,0.39) and (9.04,1.82) .. (14.21,4.28)   ;
\draw [line width=1.5]    (468.39,61.53) -- (411,152) ;
\draw [shift={(470,59)}, rotate = 122.39] [color={rgb, 255:red, 0; green, 0; blue, 0 }  ][line width=1.5]    (14.21,-4.28) .. controls (9.04,-1.82) and (4.3,-0.39) .. (0,0) .. controls (4.3,0.39) and (9.04,1.82) .. (14.21,4.28)   ;

\draw (359,3) node [anchor=north west][inner sep=0.75pt]  [font=\Large,color={rgb, 255:red, 16; green, 19; blue, 226 }  ,opacity=1 ] [align=left] {$\displaystyle \theta ,\dot{\theta }$};
\draw (255,55) node [anchor=north west][inner sep=0.75pt]  [font=\Large,color={rgb, 255:red, 16; green, 19; blue, 226 }  ,opacity=1 ] [align=left] {$\displaystyle g$};
\draw (395,156) node [anchor=north west][inner sep=0.75pt]  [font=\Large,color={rgb, 255:red, 16; green, 19; blue, 226 }  ,opacity=1 ] [align=left] {$\displaystyle 2l$};
\draw (389,77) node [anchor=north west][inner sep=0.75pt]  [font=\Large,color={rgb, 255:red, 16; green, 19; blue, 226 }  ,opacity=1 ] [align=left] {$\displaystyle m_{p}$};
\draw (142,273) node [anchor=north west][inner sep=0.75pt]  [font=\Large,color={rgb, 255:red, 16; green, 19; blue, 226 }  ,opacity=1 ] [align=left] {$\displaystyle F$};
\draw (352,375) node [anchor=north west][inner sep=0.75pt]  [font=\Large,color={rgb, 255:red, 16; green, 19; blue, 226 }  ,opacity=1 ] [align=left] {$\displaystyle s,\dot{s}$};
\draw (291,284) node [anchor=north west][inner sep=0.75pt]  [font=\Large,color={rgb, 255:red, 16; green, 19; blue, 226 }  ,opacity=1 ] [align=left] {$\displaystyle m_{k}$};

\end{tikzpicture}
}
			\caption{Motivating example: The Inverted Pendulum. The mass of the pendulum is denoted by $m_p$, that of the cart by $m_K,$ the force used to drive the cart by $F$, and the distance of the center of mass of the cart from its starting position by $s.$ $\theta$ denotes the angle the pendulum makes with the normal and its length is denoted by $2l.$ Gravity is denoted by $g.$ }
			\label{fig:inverted-pendulum}
		\end{figure}
		
	\end{minipage}
\end{wrapfigure}
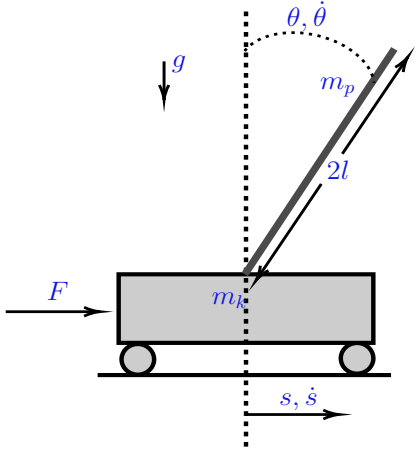

One of the most famous examples of the aforementioned \enquote{approximate-and-optimize} paradigm is the Inverted Pendulum system which has, over the years, become a benchmark for testing control strategies \cite{khalil15nonlinear-control}.
As shown in Fig.~\ref{fig:inverted-pendulum}, it comprises a pendulum (mass=$m_p$) whose pivot is mounted on a cart (mass=$m_k$). The cart can be moved in the horizontal direction by applying a force $F$. The objective is to modulate the direction and magnitude of this force $F$ to keep the pendulum from keeling over under the influence of gravity.

The state of the system at time $t,$ is given by the 4-tuple $\mathbf{x}(t):=[s,\Dot{s},\theta,\Dot{\theta}]$, with $\mathbf{x}(\cdot)=\mathbf{0}$ corresponding to the pendulum being upright and stationary. One of the strategies used to design control policies for this system is by first approximating the dynamics around $\mathbf{x}(\cdot)=\mathbf{0}$ with a linear, quadratic cost model and designing a linear controller for these approximate dynamics. This, after time discretization, reduces to finding a (potentially randomized) control policy $u\equiv \{u(t),t\geq0\}$ that solves
\begin{eqnarray}\label{eqn:cartpole-LQR-approximation}
\inf_{u} J(\mathbf{x}(0)) &=& \mathbb{E}_{u}\sum_{t=0}^\infty \mathbf{x}^\intercal(t) Q \mathbf{x}(t) + R u^2(t), \nonumber \\
s.t.~{\mathbf{x}}(t+1) &=& \underbrace{\begin{pmatrix}
	0 & 1 & 0                                                       & 0\\
	0 & 0 & \frac{g}{l\left(\frac{4}{3}-\frac{m_p}{m_p+m_k}\right)} & 0 \\
	0 & 0 &  0                                                      & 1 \\
	0 & 0 & \frac{g}{l\left(\frac{4}{3}-\frac{m_p}{m_p+m_k}\right)} & 0 \\
	\end{pmatrix}}_{\color{blue}A_{open}} \mathbf{x}(t)
+ \underbrace{\begin{pmatrix}
	0 \\
	\frac{1}{m_p+m_k}\\
	0 \\
	\frac{1}{l\left(\frac{4}{3}-\frac{m_p}{m_p+m_k}\right)}  \\
	\end{pmatrix}}_{\color{blue}\mathbf{b}} u(t).
\end{eqnarray}
Under standard assumptions of controllability and observability, \eqref{eqn:cartpole-LQR-approximation} has a stationary, linear solution $u^*(t) = -\mathbf{K}^\intercal\mathbf{x}(t)$ (details are available in \cite[Chap.~3]{bertsekas11dynamic}). Moreover, setting $A:=A_{open}-\mathbf{b}\mathbf{K}^\intercal$, it is well know that the dynamics ${\mathbf{x}}(t+1) = A \mathbf{x}(t)$
are stable. Now, a typical design strategy for a given Inverted Pendulum involves a combination of system identification, followed by linearization and computing the controller gain $\mathbf{K}$. This would typically produce a controller with tolerable performance fairly quickly, but would also suffer from nonidealities that parameter estimation invariably entails. 
To alleviate this problem, first consider a generic (ergodic) control policy that builds on this strategy by switching across a menu of controllers $\{K_1,\cdots,K_N\}$ produced via the above procedure. That is, at any time $t$, it chooses controller $K_i,~i\in[N],$ w.p. $p_i$, so that the control input at time $t$ is $u(t)=-\mathbf{K}_i^\intercal\mathbf{x}(t)$ w.p. $p_i.$ Let $A(i) := A_{open}-\mathbf{b}\mathbf{K}_i^\intercal$. The resulting controlled dynamics are given by
\begin{eqnarray}\label{eqn:EPLS-definition}
{\mathbf{x}}(t+1) &=& A(r(t)) \mathbf{x}(t)\nonumber\\
\mathbf{x}(0) &=& \mathbf{0},
\end{eqnarray}
where $r(t)=i$ w.p. $p_i,$ IID across time. In the literature, this belongs to a class of systems known as \emph{Ergodic Parameter Linear Systems} (EPLS) \cite{bolzern-etal08almost-sure-stability-ergodic-linear}, which are said to be \emph{Exponentially Almost Surely Stable} (EAS) if there exists $\rho>0$ such that 
for any $\mathbf{x}(0),$
\begin{eqnarray}
\mathbb{P}\left\lbrace\omega\in\Omega\bigg|\limsup_{t\rightarrow\infty}\frac{1}{t}\log{\norm{\mathbf{x}(t,\omega)}}\leq-\rho\right\rbrace = 1.
\end{eqnarray}
In other words, w.p.~1, the trajectories of the system decay to the origin exponentially fast. The random variable  $\lambda(\omega):=\limsup_{t\rightarrow\infty}\frac{1}{t}\log{\norm{\mathbf{x}(t,\omega)}}$ is called the Lyapunov Exponent of the system. For the EPLS in \eqref{eqn:EPLS-definition},
\begin{eqnarray}
\lambda(\omega) &=& \limsup_{t\rightarrow\infty}\frac{1}{t}\log{\norm{\mathbf{x}(t,\omega)}}= \limsup_{t\rightarrow\infty}\frac{1}{t}\log{\norm{\prod_{s=1}^tA(r(s,\omega))\mathbf{x}(0)}}\nonumber\\
&\leq& \limsup_{t\rightarrow\infty}\cancelto{0}{\frac{1}{t}\log{\norm{\mathbf{x}(0)}}} + \limsup_{t\rightarrow\infty}\frac{1}{t}\log{\norm{\prod_{s=1}^tA(r(s,\omega))}} \nonumber\\
&\leq& \limsup_{t\rightarrow\infty}\frac{1}{t}\sum_{s=1}^t\log{\norm{A(r(s,\omega))}} \stackrel{(\ast)}{=} \lim_{t\rightarrow\infty}\frac{1}{t}\sum_{s=1}^t\log{\norm{A(r(s,\omega))}}\nonumber\\
&\stackrel{(\dagger)}{=}& \mathbb{E}\log{\norm{A(r)}}= \sum_{i=1}^N p_i \log{\norm{A(i)}},\label{eqn:lyapExponentLQR}
\end{eqnarray}
where the equalities $(\ast)$ and $(\dagger)$ are due to the ergodic law of large numbers. A good mixture controller can now be designed by choosing $\{p_1,\cdots,p_N\}$ such that $\lambda(\omega)<-\rho$ for some $\rho>0$, ensuring exponentially almost sure stability (subject to $\log \norm{A(i)} < 0$ for some $i$). As we show in the sequel, our policy gradient algorithm (SoftMax PG) learns an improper mixture $\{p_1,\cdots,p_N\}$ that (i) can stabilize the system even when a majority of the constituent \emph{atomic} controllers $\{K_1,\cdots,K_N\}$ are \emph{unstable}, i.e., converges to a mixture that ensures that the average exponent $\lambda(\omega)<0$,  and (ii) shows better performance than that each of the atomic controllers. Stability corresponds to a specific, coarse-grained,  cost measure, so we can expect to see a similar phenomenon for more general cost structures.

\subsection{Scheduling in Constrained Queueing Networks} \label{sec:motivating-queueing-example}

Another ideal example that helps motivate the the need for improper learning, while simultaneously illustrating its capabilities, is the problem of scheduling in a constrained queueing network. Such systems are widely used to model communication networks in the literature \cite{bertsekas-gallager92computer-networks-neelyYY}. 

The system, shown in Fig.~\ref{subfig:the-two-queues}, comprises two queues fed by independent, stochastic arrival processes $A_i(t),i\in\{1,2\},t\in\mathbb{N}.$ The length of Queue~$i$, measured at the beginning of time slot $t,$ is denoted by $Q_i(t)\in\mathbb{Z}_+$. A common server serves both queues and can drain at most one packet from the system in a time slot\footnote{Hence, a \emph{constrained} queueing system.}. The server, therefore, needs to decide which of the two queues it intends to serve in a given slot (we assume that once the server chooses to serve a packet, service succeeds with probability 1). The server's decision is denoted by the vector $\mathbf{D}(t)\in\mathcal{A}:=\left\lbrace[0,0],[1,0],[0,1]\right\rbrace,$ where a \enquote{$1$} denotes service and a \enquote{$0$} denotes lack thereof.

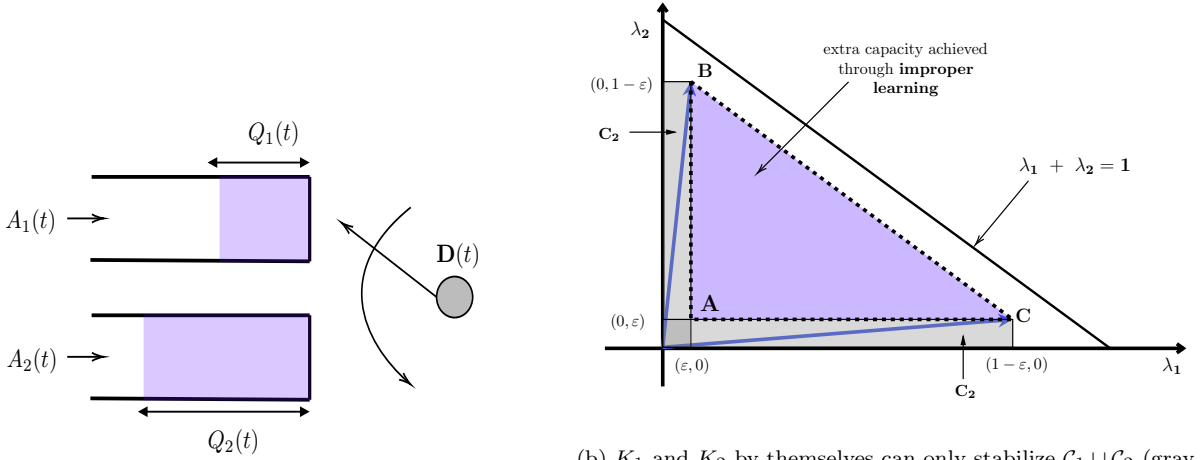
\begin{figure*}[tbh]
\centering
    \begin{subfigure}[b]{.49\textwidth}
    \centering
        \tikzset{every picture/.style={line width=0.75pt}} 

\resizebox{6.50cm}{4.5cm}{

\begin{tikzpicture}[x=0.5pt,y=0.5pt,yscale=-1,xscale=1]

\draw [line width=1.5]    (91.11,161.23) -- (292.11,161.23) ;
\draw [line width=1.5]    (89.99,227.37) -- (292.11,228.47) ;
\draw    (68.66,194.3) -- (100.34,194.3) ;
\draw [shift={(102.34,194.3)}, rotate = 180] [color={rgb, 255:red, 0; green, 0; blue, 0 }  ][line width=0.75]    (10.93,-3.29) .. controls (6.95,-1.4) and (3.31,-0.3) .. (0,0) .. controls (3.31,0.3) and (6.95,1.4) .. (10.93,3.29)   ;
\draw [line width=1.5]    (292.11,161.23) -- (292.11,228.47) ;
\draw    (201.01,39.58) -- (289.11,39.58) ;
\draw [shift={(292.11,39.58)}, rotate = 180] [fill={rgb, 255:red, 0; green, 0; blue, 0 }  ][line width=0.08]  [draw opacity=0] (8.93,-4.29) -- (0,0) -- (8.93,4.29) -- cycle    ;
\draw [shift={(198.01,39.58)}, rotate = 0] [fill={rgb, 255:red, 0; green, 0; blue, 0 }  ][line width=0.08]  [draw opacity=0] (8.93,-4.29) -- (0,0) -- (8.93,4.29) -- cycle    ;
\draw    (136.78,237.99) -- (289.11,237.99) ;
\draw [shift={(292.11,237.99)}, rotate = 180] [fill={rgb, 255:red, 0; green, 0; blue, 0 }  ][line width=0.08]  [draw opacity=0] (8.93,-4.29) -- (0,0) -- (8.93,4.29) -- cycle    ;
\draw [shift={(133.78,237.99)}, rotate = 0] [fill={rgb, 255:red, 0; green, 0; blue, 0 }  ][line width=0.08]  [draw opacity=0] (8.93,-4.29) -- (0,0) -- (8.93,4.29) -- cycle    ;
\draw  [fill={rgb, 255:red, 155; green, 155; blue, 155 }  ,fill opacity=0.67 ] (409.11,146.7) .. controls (409.11,137.68) and (416.56,130.37) .. (425.75,130.37) .. controls (434.94,130.37) and (442.39,137.68) .. (442.39,146.7) .. controls (442.39,155.72) and (434.94,163.04) .. (425.75,163.04) .. controls (416.56,163.04) and (409.11,155.72) .. (409.11,146.7) -- cycle ;
\draw    (385.31,75.25) .. controls (313.05,125.45) and (338.74,181.05) .. (383.93,215.31) ;
\draw [shift={(385.31,216.35)}, rotate = 216.57999999999998] [color={rgb, 255:red, 0; green, 0; blue, 0 }  ][line width=0.75]    (10.93,-3.29) .. controls (6.95,-1.4) and (3.31,-0.3) .. (0,0) .. controls (3.31,0.3) and (6.95,1.4) .. (10.93,3.29)   ;
\draw    (409.11,146.7) -- (322.96,88.5) ;
\draw [shift={(321.3,87.38)}, rotate = 394.03999999999996] [color={rgb, 255:red, 0; green, 0; blue, 0 }  ][line width=0.75]    (10.93,-3.29) .. controls (6.95,-1.4) and (3.31,-0.3) .. (0,0) .. controls (3.31,0.3) and (6.95,1.4) .. (10.93,3.29)   ;
\draw  [color={rgb, 255:red, 95; green, 19; blue, 254 }  ,draw opacity=0.23 ][fill={rgb, 255:red, 95; green, 19; blue, 254 }  ,fill opacity=0.23 ] (139.8,161.01) -- (292.11,161.01) -- (292.11,228.47) -- (139.8,228.47) -- cycle ;
\draw [line width=1.5]    (91.11,51) -- (292.11,51) ;
\draw [line width=1.5]    (89.99,117.14) -- (292.11,118.24) ;
\draw    (68.66,84.07) -- (100.34,84.07) ;
\draw [shift={(102.34,84.07)}, rotate = 180] [color={rgb, 255:red, 0; green, 0; blue, 0 }  ][line width=0.75]    (10.93,-3.29) .. controls (6.95,-1.4) and (3.31,-0.3) .. (0,0) .. controls (3.31,0.3) and (6.95,1.4) .. (10.93,3.29)   ;
\draw [line width=1.5]    (292.11,51) -- (292.11,118.24) ;
\draw  [color={rgb, 255:red, 95; green, 19; blue, 254 }  ,draw opacity=0.23 ][fill={rgb, 255:red, 95; green, 19; blue, 254 }  ,fill opacity=0.23 ] (209.8,50.78) -- (292.11,50.78) -- (292.11,118.24) -- (209.8,118.24) -- cycle ;

\draw (10.64,184.95) node [anchor=north west][inner sep=0.75pt]   [align=left] {$\displaystyle A_{2}( t)$};
\draw (9.52,74.72) node [anchor=north west][inner sep=0.75pt]   [align=left] {$\displaystyle A_{1}( t)$};
\draw (233.03,4.18) node [anchor=north west][inner sep=0.75pt]   [align=left] {$\displaystyle Q_{1}( t)$};
\draw (195.98,249.99) node [anchor=north west][inner sep=0.75pt]   [align=left] {$\displaystyle Q_{2}( t)$};
\draw (406.65,101.42) node [anchor=north west][inner sep=0.75pt]    {$\mathbf{D}( t)$};

\end{tikzpicture}
}
        \caption{$Q_i(t)$ is the length of Queue~$i$ ($i\in\{1,2\}$) at the beginning of time slot $t$, $A_i(t)$ is its packet arrival process and $\mathbf{D}(t)\in\left\lbrace[0,0],[1,0],[0,1]\right\rbrace.$}
        \label{subfig:the-two-queues}
    \end{subfigure}
    \hfill
    \begin{subfigure}[b]{.49\textwidth}
    \centering
        \tikzset{every picture/.style={line width=0.75pt}} 

\resizebox{8.00cm}{5.25cm}{

\begin{tikzpicture}[x=0.57pt,y=0.57pt,yscale=-1,xscale=1]

\draw [line width=2.25]  (22,380.5) -- (652.8,380.5)(85.08,6.97) -- (85.08,422) (645.8,375.5) -- (652.8,380.5) -- (645.8,385.5) (80.08,13.97) -- (85.08,6.97) -- (90.08,13.97)  ;
\draw [line width=1.5]    (85.8,22.98) -- (573.8,380.97) ;
\draw [color={rgb, 255:red, 74; green, 99; blue, 226 }  ,draw opacity=1 ][line width=2.25]    (85.08,379.4) -- (115.27,94.95) ;
\draw [shift={(115.8,89.98)}, rotate = 456.06] [fill={rgb, 255:red, 74; green, 99; blue, 226 }  ,fill opacity=1 ][line width=0.08]  [draw opacity=0] (16.07,-7.72) -- (0,0) -- (16.07,7.72) -- (10.67,0) -- cycle    ;
\draw [color={rgb, 255:red, 74; green, 99; blue, 226 }  ,draw opacity=1 ][line width=2.25]    (85.08,379.4) -- (461.82,349.37) ;
\draw [shift={(466.8,348.98)}, rotate = 535.44] [fill={rgb, 255:red, 74; green, 99; blue, 226 }  ,fill opacity=1 ][line width=0.08]  [draw opacity=0] (16.07,-7.72) -- (0,0) -- (16.07,7.72) -- (10.67,0) -- cycle    ;
\draw  [fill={rgb, 255:red, 128; green, 128; blue, 128 }  ,fill opacity=0.3 ] (85.08,89.98) -- (115.8,89.98) -- (115.8,379.4) -- (85.08,379.4) -- cycle ;
\draw  [fill={rgb, 255:red, 128; green, 128; blue, 128 }  ,fill opacity=0.3 ] (85.08,348.98) -- (466.8,348.98) -- (466.8,379.4) -- (85.08,379.4) -- cycle ;
\draw  [fill={rgb, 255:red, 87; green, 19; blue, 254 }  ,fill opacity=0.3 ][dash pattern={on 2.53pt off 3.02pt}][line width=2.25]  (115.8,89.98) -- (466.8,348.98) -- (115.8,348.98) -- cycle ;
\draw    (48.8,145) -- (77.8,145) -- (98.8,145) ;
\draw [shift={(100.8,145)}, rotate = 180] [fill={rgb, 255:red, 0; green, 0; blue, 0 }  ][line width=0.08]  [draw opacity=0] (12,-3) -- (0,0) -- (12,3) -- cycle    ;
\draw    (412.8,413.97) -- (412.8,366.97) ;
\draw [shift={(412.8,364.97)}, rotate = 450] [fill={rgb, 255:red, 0; green, 0; blue, 0 }  ][line width=0.08]  [draw opacity=0] (12,-3) -- (0,0) -- (12,3) -- cycle    ;
\draw    (287.8,90.2) -- (189.45,188.55) ;
\draw [shift={(188.04,189.96)}, rotate = 315] [color={rgb, 255:red, 0; green, 0; blue, 0 }  ][line width=0.75]    (10.93,-3.29) .. controls (6.95,-1.4) and (3.31,-0.3) .. (0,0) .. controls (3.31,0.3) and (6.95,1.4) .. (10.93,3.29)   ;
\draw    (484,198) -- (427.08,266.44) ;
\draw [shift={(425.8,267.97)}, rotate = 309.75] [color={rgb, 255:red, 0; green, 0; blue, 0 }  ][line width=0.75]    (10.93,-3.29) .. controls (6.95,-1.4) and (3.31,-0.3) .. (0,0) .. controls (3.31,0.3) and (6.95,1.4) .. (10.93,3.29)   ;

\draw (628,388.4) node [anchor=north west][inner sep=0.75pt]  [font=\large]  {$\mathbf{\lambda _{1}}$};
\draw (48,22.4) node [anchor=north west][inner sep=0.75pt]  [font=\large]  {$\mathbf{\lambda _{2}}$};
\draw (4,82.4) node [anchor=north west][inner sep=0.75pt]    {$( 0,1-\epsilon )$};
\draw (435,388.4) node [anchor=north west][inner sep=0.75pt]    {$( 1-\epsilon ,0)$};
\draw (26,341.4) node [anchor=north west][inner sep=0.75pt]    {$( 0,\epsilon )$};
\draw (96,389.4) node [anchor=north west][inner sep=0.75pt]    {$( \epsilon ,0)$};
\draw (124,320) node [anchor=north west][inner sep=0.75pt]   [align=left] {\textbf{{\Large A}}};
\draw (121,69) node [anchor=north west][inner sep=0.75pt]   [align=left] {\textbf{{\large B}}};
\draw (468,335) node [anchor=north west][inner sep=0.75pt]   [align=left] {\textbf{{\large C}}};
\draw (13,138.4) node [anchor=north west][inner sep=0.75pt]    {$\mathbf{C_{2}}$};
\draw (401.8,418.37) node [anchor=north west][inner sep=0.75pt]    {$\mathbf{C_{2}}$};
\draw (237,46) node [anchor=north west][inner sep=0.75pt]   [align=left] {\begin{minipage}[lt]{126.871pt}\setlength\topsep{0pt}
\begin{center}
extra capacity achieved\\through \textbf{improper learning}
\end{center}

\end{minipage}};
\draw (472,170.4) node [anchor=north west][inner sep=0.75pt]  [font=\large]  {$\mathbf{\lambda _{1} \ +\ \lambda _{2} =1}$};

\end{tikzpicture}
}
        \caption{$K_1$ and $K_2$ by themselves can only stabilize $\mathcal{C}_1\cup\mathcal{C}_2$ (gray rectangles). With improper learning, however, we enlarge the set of stabilizable arrival rates by the triangle $\Delta ABC$ shown in purple, above.}
        \label{subfig:capacity-and-stability-regions}
    \end{subfigure}
\caption{Motivating example: Constrained queueing network with 2 queues. The capacity region of this network (see Fig.~\ref{subfig:capacity-and-stability-regions}) is given by  $\Lambda:=\left\lbrace\boldsymbol{\lambda}\in\mathbb{R}^2_+:\lambda_1+\lambda_2<1\right\rbrace.$}
\label{fig:constrained-queueing-network}
\end{figure*}
For simplicity, we assume that the processes $\left(A_i(t)\right)_{t=0}^\infty$ are both IID Bernoulli, with $\mathbb{E}A_i(t)=\lambda_i.$ Note that the arrival rate $\boldsymbol{\lambda}=[\lambda_1,\lambda_2]$ is {\color{blue}unknown} to the learner.  Defining $(x)^+:=\max\{0,x\},~\forall~x\in\mathbb{R},$ queue length evolution is given by the equations
\begin{equation}
    Q_i(t+1) = \left(Q_i(t)-D_i(t)\right)^+ + A_i(t+1),~i\in\{1,2\}.
\end{equation}
Let $\mathcal{F}_t$ denote the state-action history until time $t,$ and $\mathcal{P}(\mathcal{A})$ the space of all probability distributions on $\mathcal{A}.$ We aim to find a policy $\pi: \mathcal{F}_t\rightarrow\mathcal{P}\left(\mathcal{A}\right)$ to minimize the discounted system backlog given by 
\begin{equation}
    J_\pi(\mathbf{Q}(0)):=\mathbb{E}^\pi_{\mathbf{Q}(0)}\sum_{t=0}^{\infty}\gamma^t\left(Q_1(t)+Q_2(t)\right).
    \label{eqn:queueing-discounted-cost}
\end{equation}
Any policy $\pi$ with $J_\pi(\mathbf{Q}(0))<\infty,~\forall\mathbf{Q}(0)\in\mathbb{Z}_+^2$ is said to be \emph{\color{blue}stabilizing} (or, equivalently, a \emph{stable} policy). It is well known that there exist stabilizing policies iff $\lambda_1+\lambda_2<1$ \cite{tassiulas-ephremides92stability-queueing}. A \emph{stationary} policy $\pi_{\mu_1,\mu_2}$ defined by 
\begin{equation}
    \pi_{\epsilon_1,\epsilon_2}(\mathbf{Q}) = \begin{cases}
    [1,0],\text{ w.p. }\mu_1,\\
    [0,1],\text{ w.p. }\mu_2,\text{ and }\\
    [0,0],\text{ w.p. }1-\mu_1-\mu_2,
    \end{cases}
    \forall~\mathbf{Q}\in\mathbb{Z}_+^2
\end{equation}
can provably stabilize a system iff $\mu_i>\lambda_i,\forall~i\in\{1,2\}$. Now, assume our control set consists of two stationary policies $K_1,K_2$ with $K_1\equiv\pi_{\epsilon,1-\epsilon}$, $K_1\equiv\pi_{1-\epsilon,\epsilon}$ and sufficiently small $\epsilon>0.$ That is, we have $M=2$ controllers $K_1,K_2.$ Clearly, neither of these can, by itself, stabilize a network with $\boldsymbol{\lambda}=[0.49,0.49].$ 

However, an \emph{improper} mixture of the two that selects $K_1$ and $K_2$ each with probability $1/2$ can. In fact, as Fig.~\ref{subfig:capacity-and-stability-regions} shows, our improper learning algorithm can stabilize all arrival rates in $\mathcal{C}_1\cup\mathcal{C}_2\cup\Delta ABC,$ without prior knowledge of $[\lambda_1,\lambda_2].$ In other words, our algorithm enlarges the stability region by the triangle $\Delta ABC,$ over and above $\mathcal{C}_1\cup\mathcal{C}_2$. 

We will return to these examples in Sec.~\ref{sec:simulation-results}, and show, using experiments, (1) how our improper learner converges to the stabilizing mixture of the available policies and (2) if the optimal policy is among the available controllers, our algorithm can find and converge to it. In addition, we will also demonstrate the effectiveness of our approach towards more complicated path interference graphs.

\section{Problem Statement and Notation}\label{sec:problemStatementAndNotation}

A (finite) Markov Decision Process {\color{blue}$(\cS, \cA, \tP, r,\rho, \gamma)$} is specified by a finite state
space $\cS$, a finite action space $\cA$, a transition probability matrix $\tP$, where $\tP\left(\Tilde{s}|s, a\right)$ is the probability of transitioning into state $\Tilde{s}$ upon taking action $a\in\cA$ in state $s$, a single stage reward function $r:\cS\times\cA\rightarrow\mathbb{R}$, a starting state distribution $\rho$ over $\cS$ and a discount factor $\gamma \in (0, 1)$.

A (stationary) \emph{policy} or \emph{\color{blue}controller} $\pi : \cS \rightarrow \cP(\cA)$ specifies a decision-making strategy in which the learner chooses actions ($a_t$) adaptively based on the current state ($s_t$), i.e., $a_t\sim\pi(s_t)$. $\pi$ and $\rho$, together with $\tP,$ induce a probability measure $\mathbb{P}^{\pi}_{\rho}$ on the space of all sample paths of the underlying Markov process and we denote by $\mathbb{E}^{\pi}_{\rho}$ the associated expectation operator. The {\color{blue}value function} of policy $\pi$ (also called the value of policy $\pi$), denoted by $V^{\pi}$ is the total discounted reward obtained by following $\pi$, i.e.,
\begin{equation}
V^{\pi}(\rho):=\mathbb{E}^{\pi}_{\rho}\sum_{t=0}^{\infty}\gamma^tr(s_t,a_t)
\end{equation}

{\bfseries{Improper Learning.}} We assume that the learner is provided with a finite number of (stationary) controllers $\mathcal{C}:=\{K_1,\cdots,K_M\}$ and, as described below, set up a parameterized improper policy class ${\color{blue}\mathcal{I}_{soft}(\mathcal{C})}$ that depends on $\mathcal{C}.$ The aim therefore, is to identify the best policy for the given MDP within this class, i.e.,
\begin{equation}\label{eq:main optimization problem}
\pi^* = \argmax\limits_{\pi\in \mathcal{I}_{soft}(\mathcal{C})} V^{\pi}(\rho).     
\end{equation}

We now describe the construction of the class $\mathcal{I}_{soft}(\mathcal{C}).$

{\bfseries{The Softmax Policy Class.}} We assign weights $\theta_m\in \Real$, to each controller $K_m\in\mathcal{C}$ and define  $\theta:=[\theta_1,\cdots,\theta_M]$. The improper class $\mathcal{I}_{soft}$ is parameterized by $\theta$ as follows. In each round, the policy $\pi_{\theta}\in\mathcal{I}_{soft}(\mathcal{C})$ chooses a controller drawn from {\color{blue}$\softmax(\theta)$}, i.e., the probability of choosing Controller~$K_m$ is given by,
\begin{equation}
  \pi_\theta(m):= \frac{e^{\theta_m}}{\sum\limits_{m'=1}^M e^{\theta_{m'}}}.  
  \label{eqn:softmax-defn}
\end{equation}
Note, therefore, that in every round, our algorithm interacts with the MDP only \emph{through} the controller sampled in that round (see Figure \ref{fig:black-box approach}). In the rest of the paper, we will deal exclusively with a fixed and given $\mathcal{C}$ and the resultant  $\mathcal{I}_{soft}$. therefore, we overload the notation $\pi_{\theta_t}(a|s)$ for any $a\in \cA$ and $s\in \cS$ to denote the probability with which the algorithm chooses action $a$ in state $s$ at time $t$. For ease of notation,  whenever the context is clear, we will also drop the subscript $\theta$ i.e., $\pi_{\theta_t}\equiv \pi_t$. Hence, we have at any time $t\geq 0$:
\begin{equation}\label{eq:policy action selection criteria}
    \pi_{\theta_t}(a|s) = \sum\limits_{m=1}^{M}\pi_{\theta_t}(m)K_m(s,a). 
\end{equation}
Since we deal with gradient-based methods in the sequel, we define the \emph{\color{blue}value gradient} of policy $\pi_{\theta}\in\mathcal{I}_{soft},$ by $\nabla_{\theta}V^{\pi_\theta}\equiv\frac{dV^{\pi_{\theta_t}}}{d\theta^t}$. We say that $V^{\pi_\theta}$ is {\color{blue}$\beta$-smooth} if $\nabla_{\theta}V^{\pi_\theta}$ is $\beta$-Lipschitz \cite{Agarwal2020}. Finally, let for any two integers $a$ and $b$, $\ind_{ab}$ denote the indicator that $a=b$.

\begin{figure}
	\centering
	\includegraphics[scale=0.42]{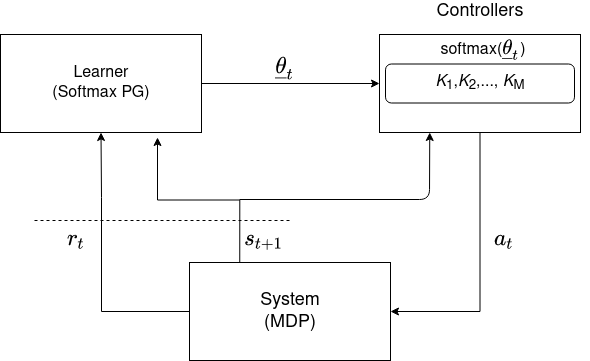}    
	\caption{A black-box view of our improper learning approach through softmax policy gradient.}
	\label{fig:black-box approach}
\end{figure}

\textbf{Contrast with traditional the PG approach:}
We emphasize that this problem is fundamentally different from the traditional policy gradient approach where the parameterization completely defines the policy in terms of the state-action mapping. One can use the methodology followed in \cite{Mei2020}, by assigning a parameter $\theta_{s,m}$ for every $s\in \cS, m\in [M]$. With some calculation, it can be shown that this is equivalent to the tabular setting with $S$ states and $M$ actions, with the new `reward' defined by $r(s,m)\bydef \sum\limits_{a\in \cA} K_m(s,a)r(s,a)$ where $r(s,a)$ is the usual expected reward obtained at state $s$ and playing action $a\in \cA$. By following the approach in \cite{Mei2020} on this modified setting, it can be shown that the policy converges for each $s\in \cS$, $\pi_\theta(m^*(s)\given s) \to 1$, for every $s\in \cS$, which is the optimum policy.

However, the problem that we address, is to select \emph{a single} controller (from within $\mathcal{I}_{soft}$, the convex hull of the given $M$ controllers) , which would guarantee maximum return if one plays that single mixture for all time, from among the given set of controllers. 

\section{Improper Learning using Gradients}

In this and the following sections, we propose and analyze a policy gradient-based algorithm that provably finds the best, potentially improper, mixture of controllers for the given MDP. While we employ gradient ascent to optimize the mixture weights, the fact that this procedure works at all is far from obvious. We begin by noting that $V^{\pi_\theta}$, as described in Section \ref{sec:problemStatementAndNotation}, is \emph{\color{blue}nonconcave} in $\theta$ for both direct and softmax parameterizations, which renders analysis with standard tools of convex optimization inapplicable. Formally,
\begin{lemma}(Non-concavity of Value function)\label{lemma:nonconcavity of V main}
	There is an MDP and a set of controllers, for which the maximization problem  of the value function (i.e. \eqref{eq:main optimization problem}) is non-concave for both the SoftMax and direct parameterizations, i.e., $\theta\mapsto V^{\pi_{\theta}}$ is non-concave.
\end{lemma}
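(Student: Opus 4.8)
\emph{Strategy.} The statement is purely existential, so I would prove it by exhibiting one explicit MDP and one controller set on which the concavity inequality fails, simultaneously for the softmax map $\theta\mapsto V^{\pi_\theta}$ and for the direct (simplex) parameterization $p\mapsto V^{\pi_p}$. I take the smallest nontrivial case $M=2$, so that a mixture is encoded by a single scalar weight $p\in[0,1]$ with $\pi_p(a|s)=pK_1(s,a)+(1-p)K_2(s,a)$ (cf.\ \eqref{eq:policy action selection criteria}), while softmax sends $\theta=(\theta_1,\theta_2)$ to the same mixture through $p=1/(1+e^{-(\theta_1-\theta_2)})$. The entire problem then reduces to one scalar curve $g(p):=V^{\pi_p}(\rho)$ and its composition with the logistic link. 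The structural reason non-concavity is the rule rather than the exception is that the reward vector and state-transition matrix of the mixture are both affine in $p$, so by $g(p)=\rho^\top(I-\gamma P_p)^{-1}r_p$ the value is a \emph{rational} function of $p$. For a single-state MDP this collapses to an affine (hence concave) function, which is exactly why one state never suffices for the direct parameterization; the work is to pick dynamics in which the matrix inverse genuinely curves $g$.

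The unifying device I would use is the sign of the second derivative at the midpoint: I aim to construct an instance with
\begin{equation}\label{eq:nonconcavity-midpoint}
g''(1/2)>0 .
\end{equation}
A clean candidate is a two-state chain $\{s_0,s_1\}$ with $\gamma=9/10$, reward $1$ at $s_0$ and $0$ at $s_1$, started at $s_0$, and two deterministic controllers: $K_1$ keeps the state fixed ($s_0\!\to\!s_0$, $s_1\!\to\!s_1$) while $K_2$ flips it ($s_0\!\to\!s_1$, $s_1\!\to\!s_0$). A short computation gives $g(p)=(1-\tfrac{9}{10}p)/(\tfrac{19}{100}-\tfrac{18}{100}p)$, which is strictly convex on $[0,1]$; indeed $g(0)=100/19\approx5.26$, $g(1)=10$, but $g(1/2)=5.5<\tfrac12\big(g(0)+g(1)\big)\approx7.63$, so the midpoint inequality for concavity is violated and \eqref{eq:nonconcavity-midpoint} holds.

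From \eqref{eq:nonconcavity-midpoint} both parameterizations follow. The direct case is immediate: $g$ is strictly convex near $p=1/2$, so $g(1/2)<\tfrac12 g(1/2-\delta)+\tfrac12 g(1/2+\delta)$ for small $\delta>0$, contradicting concavity of $p\mapsto V^{\pi_p}$. For softmax I would restrict $\theta$ to the line $\theta=(t,-t)$, giving $p(t)=1/(1+e^{-2t})$ with $p(0)=1/2$, $p'(0)=1/2$, and $p''(0)=0$ (the logistic has vanishing second derivative at its centre). Writing $\phi(t):=V^{\pi_\theta}=g(p(t))$ and differentiating twice yields $\phi''(0)=g''(1/2)\,(p'(0))^2+g'(1/2)\,p''(0)=\tfrac14\,g''(1/2)>0$, so $\phi$ is strictly convex at the origin and $\theta\mapsto V^{\pi_\theta}$ is non-concave as well. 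Thus a single instance satisfying \eqref{eq:nonconcavity-midpoint} settles both claims at once.

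The subtle point, and the part I expect to require care, is getting the \emph{sign} of the curvature right rather than merely producing curvature. Small MDPs are deceptively well behaved: placing the reward on the far state $s_1$ instead of $s_0$ in the construction above flips $g$ to a concave Möbius map, and with state-independent rewards one checks that $\det(I-\gamma P_p)$ is only affine in $p$, so $g$ has no inflection and is either globally concave or globally convex on $[0,1]$. One must therefore choose the reward placement (and, if needed, a third state) so that the rational function lands on the convex side across $p=1/2$, and then verify \eqref{eq:nonconcavity-midpoint} either by computing $g''$ or by exhibiting the midpoint-violating triple as above. Finally, I would note that one cannot simply import the known tabular-softmax counterexamples \cite{Agarwal2020,Mei2020}, since here the mixture weights are \emph{tied across states} (one $\theta_m$ per controller rather than one per state-controller pair), so non-concavity must be demonstrated inside this strictly smaller, state-independent policy class.
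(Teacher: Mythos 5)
Your proof is correct, but it takes a genuinely different route from the paper's. The paper (see the appendix section on non-concavity) builds a five-state episodic MDP with three terminal states and two stochastic controllers whose relevant action probabilities are $1/4$ and $3/4$; the value there is the product $\pi(a_1|s_1)\pi(a_2|s_2)\,r$ of two mixture-affine factors, i.e.\ a strictly convex quadratic in the mixture weight, and non-concavity for the direct parameterization is the midpoint violation $\tfrac12\left(r/16+9r/16\right)=10r/32 > r/4$. For softmax the paper evaluates the value at $\theta^{(1)}=(\log(1-\epsilon),\log\epsilon)$, $\theta^{(2)}=(\log\epsilon,\log(1-\epsilon))$ and at their average, and checks $(1/4+\epsilon/2)^2+(3/4-\epsilon/2)^2>1/2$. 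Your counterexample instead uses a two-state recurrent chain in which the curvature comes from the resolvent $(I-\gamma P_p)^{-1}$: the value is a M\"obius function of $p$ whose convexity you control through the sign of $ad-bc$ and the affine determinant, and your softmax step is differential rather than pointwise --- restrict to the line $\theta=(t,-t)$, use that the logistic link has $p''(0)=0$, and conclude $\phi''(0)=g''(1/2)/4>0$, which rules out concavity since concavity must hold along every line. Both arguments are complete. Yours buys a smaller instance (two states, deterministic controllers, no terminal states) and a reusable reduction --- any instance with $g''(1/2)>0$ defeats both parameterizations simultaneously --- while the paper's buys entirely elementary arithmetic (products of probabilities, no matrix inversion or calculus) and exhibits an explicit, non-infinitesimal pair of violating points for the softmax case. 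Your closing remarks are also well taken and consistent with the paper: the reward placement does determine the sign of the curvature in your construction (moving the reward to $s_1$ makes $g$ concave), and the known tabular-softmax counterexamples indeed cannot be imported here because the weight $\theta_m$ is shared across all states, which is exactly the paper's point that this non-concavity is distinct from the usual non-concavity over $\mathbb{R}^{S\times A}$.
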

The proof follows from a simple counterexample whose construction we show in Sec.~\ref{appendix:nonconcavity of V} in the Appendix.

		\begin{algorithm}[H]
			\caption{Softmax Policy Gradient ({\color{blue}SoftMax PG})}
			\label{alg:mainPolicyGradMDP}
			\begin{algorithmic}
				\STATE {\bfseries Input:} learning rate $\eta>0$, initial state distribution $\mu$
				\STATE Initialize each $\theta^1_m=1$, for all $m\in [M]$, $s_1\sim \mu$
				\FOR{$t=1$ {\bfseries to} $T$}
				\STATE Choose controller $m_t\sim \pi_t$.
				\STATE Play action $a_t \sim K_{m_t}(s_{t},:)$.
				\STATE Observe $s_{t+1}\sim \tP(.|s_t,a_t)$.
				\STATE Update: $\theta_{t+1} = \theta_{t} + \eta. \nabla_{\theta_t}V^{\pi_{\theta_t}}(\mu)$.\label{algo:softmaxpg-gradAscent}
				\ENDFOR
			\end{algorithmic}
		\end{algorithm}

Our policy gradient algorithm, {\color{blue}SoftMax PG}, is shown in Algorithm~\ref{alg:mainPolicyGradMDP}. The parameters $\theta\in \Real^M$ which define the policy are updated by following the gradient of the value function at the current policy parameters. The policy $\pi_{\theta}(m)$ is defined as in \eqref{eqn:softmax-defn}. The algorithm proceeds by first choosing a controller $\tt{m_t}\in [M]$ drawn according to $\pi_t$ and then playing an action  drawn from $K_{\tt{m_T}}(s_t,.)$. The parameters are updated via a gradient descent step based on the derivative of the value function evaluated with the current parameters $\theta_t$.


\begin{remark}
	Although Lemma \ref{lemma:nonconcavity of V main} suggests that the value function is non-concave in the parameter $\theta$, the motivating examples in Sec. \ref{sec:motivating-examples} of an inverted pendulum and a simple queuing network, show situations where a \emph{pure} mixture of the base controllers can perform strictly better than each them individually. 
\end{remark}


\section{Theoretical Convergence Results}\label{sec:theoretical-convergence-results}
In this section, we provide performance guarantees for SoftMaxPG, in terms of the rate of convergence to the optimal mixture. Notice that the \emph{Update} step  
requires knowledge of the value gradient $\nabla_{\theta_t} V^{\pi_{\theta_t}}$, which may not be available/computable in closed form. We divide this section into two parts depending on whether or not the exact value function gradient is available to the learner.

\subsection{Convergence Guarantees With Perfect Gradient Knowledge}
The following result shows that with SoftMax PG, the value function converges to that of the \emph{best in-class} policy at a rate $\cO\left(1/t\right)$. Furthermore, the theorem shows an explicit dependence on the number of controllers $M$, in place of the usual $\abs{\cS}$. 
\begin{restatable}[Convergence of Policy Gradient]{theorem}{maintheorem}\label{thm:convergence of policy gradient}
	With $\{\theta_t \}_{t\geq 1}$ generated as in Algorithm \ref{alg:mainPolicyGradMDP} and using a learning rate $\eta = \frac{\left(1-\gamma\right)^2}{7\gamma^2+4\gamma+5}$, for all $t\geq 1$,
	\begin{equation}\label{eqn:PGConvergeFiniteMDPs}
	V^*(\rho) -V^{\pi_{\theta_t}}(\rho) \leq {\color{blue}\frac{1}{t}} M \left(\frac{7\gamma^2+4\gamma+5}{c^2(1-\gamma)^3}\right)\norm{\frac{d_\mu^{\pi^*}}{\mu}}_{\infty}^2 \norm{\frac{1}{\mu}}_\infty. 
	\end{equation}
\end{restatable}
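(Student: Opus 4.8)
\emph{Proof strategy.} The plan is to follow the two-ingredient recipe for global convergence of softmax policy gradient, adapted to the improper (mixture-of-controllers) parameterization. The first ingredient is an ascent lemma: because $\theta\mapsto V^{\pi_\theta}$ is $\beta$-smooth, gradient ascent with step size $\eta=1/\beta$ guarantees a per-iteration increase $V^{\pi_{\theta_{t+1}}}(\mu)-V^{\pi_{\theta_t}}(\mu)\ge\frac{\eta}{2}\norm{\nabla_\theta V^{\pi_{\theta_t}}(\mu)}_2^2$. The second, and decisive, ingredient is a non-uniform \L ojasiewicz (gradient-domination) inequality that lower bounds $\norm{\nabla_\theta V^{\pi_{\theta_t}}(\mu)}_2$ by the suboptimality gap $\delta_t:=V^*(\mu)-V^{\pi_{\theta_t}}(\mu)$. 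Chaining the two produces a recursion $\delta_t-\delta_{t+1}\ge C\,\delta_t^2$, which inverts to $\delta_t=\mathcal{O}(1/t)$; a final change of measure transfers the guarantee from the optimization distribution $\mu$ to the target $\rho$.

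\emph{Gradient form and ascent.} First I would compute the gradient in closed form. The softmax Jacobian is $\partial\pi_\theta(m')/\partial\theta_m=\pi_\theta(m')(\mathbbm{1}\{m=m'\}-\pi_\theta(m))$, whence $\partial\pi_\theta(a\mid s)/\partial\theta_m=\pi_\theta(m)\bigl(K_m(s,a)-\pi_\theta(a\mid s)\bigr)$. Substituting into the policy gradient theorem and writing the per-controller advantage $A_m^{\pi_\theta}(s):=\sum_a K_m(s,a)Q^{\pi_\theta}(s,a)-V^{\pi_\theta}(s)$, I obtain
\begin{equation*}
\frac{\partial V^{\pi_\theta}(\mu)}{\partial\theta_m}=\frac{\pi_\theta(m)}{1-\gamma}\sum_s d_\mu^{\pi_\theta}(s)\,A_m^{\pi_\theta}(s).
\end{equation*}
Together with the $\beta$-smoothness of the improper value function (established separately, with $\beta=(7\gamma^2+4\gamma+5)/(1-\gamma)^2$, so that $\eta=1/\beta$ is exactly the step size in the statement) this yields the ascent inequality above.

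\emph{Non-uniform \L ojasiewicz inequality.} This is the heart of the argument. Using the performance difference lemma I would write $(1-\gamma)\,\delta_t=\sum_s d_\mu^{\pi^*}(s)\sum_m\pi^*(m)\,A_m^{\pi_{\theta_t}}(s)$, and then bound the gradient from below by projecting it onto the direction singled out by the optimal mixture $\pi^*$. The mechanical steps are: lower bound the discounted occupancy by the start distribution, $d_\mu^{\pi_{\theta_t}}(s)\ge(1-\gamma)\mu(s)$; absorb the mismatch between $d_\mu^{\pi^*}$ and $d_\mu^{\pi_{\theta_t}}$ into $\norm{d_\mu^{\pi^*}/\mu}_\infty$; and pay a factor $1/\sqrt{M}$ when passing from the $M$-dimensional $\ell_2$ gradient norm to the scalar gap. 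The target is an inequality of the form
\begin{equation*}
\norm{\nabla_\theta V^{\pi_{\theta_t}}(\mu)}_2\ \ge\ \frac{c}{\sqrt{M}\,\norm{d_\mu^{\pi^*}/\mu}_\infty}\,\delta_t,
\qquad
c:=\inf_{t\ge1}\min_{m}\pi_{\theta_t}(m),
\end{equation*}
with the residual powers of $(1-\gamma)$ arising from the performance-difference and occupancy bounds. Crucially, this inequality shows that stationary points ($\nabla_\theta V=0$) are globally optimal, even though $V^{\pi_\theta}$ is non-concave (Lemma~\ref{lemma:nonconcavity of V main}).

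\emph{Main obstacle and conclusion.} The hard part is verifying that the \L ojasiewicz constant $c$ does not degenerate, i.e. $\inf_{t}\min_m\pi_{\theta_t}(m)>0$ along the entire trajectory. This is subtle in the improper setting: unlike the multi-armed bandit special case where $V^{\pi_\theta}$ is linear in the mixture weights and the optimum concentrates on a single controller, here the optimal improper policy may spread mass over several controllers and $V^{\pi_\theta}$ is non-concave, so a priori the relevant coordinates could be driven toward zero. I would control $c$ by first establishing asymptotic convergence of $\{\pi_{\theta_t}\}$ (using the monotone value improvement from the ascent lemma, together with the fact that the \L ojasiewicz inequality rules out suboptimal limit points), and then arguing that the limiting mixture keeps the relevant coordinates strictly positive, so that the infimum over the convergent trajectory is bounded away from zero. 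With $c>0$ secured, the ascent and \L ojasiewicz inequalities combine into $\delta_t-\delta_{t+1}\ge\frac{\eta c^2}{2M\,\norm{d_\mu^{\pi^*}/\mu}_\infty^2}\,\delta_t^2$; dividing by $\delta_t\delta_{t+1}$, using monotonicity $\delta_{t+1}\le\delta_t$, and telescoping $1/\delta_{t+1}-1/\delta_t\ge\text{const}$ gives $\delta_t=\mathcal{O}(1/t)$. Finally, the change of measure $V^*(\rho)-V^{\pi_{\theta_t}}(\rho)\le\norm{1/\mu}_\infty\,\delta_t$ (using $\rho(s)\le1$ and the pointwise gap) introduces the last factor and produces exactly \eqref{eqn:PGConvergeFiniteMDPs}.
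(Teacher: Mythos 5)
Your proof skeleton is the same as the paper's: $\beta$-smoothness of the improper value function (Lemma~\ref{lemma:smoothness of V}) combined with the ascent lemma (Lemma~\ref{lemma:gradient ascent lemma}), the non-uniform {\L}ojasiewicz inequality (Lemma~\ref{lemma:nonuniform lojaseiwicz inequality}), the recursion $\delta_{t+1}\le\delta_t-\delta_t^2/\phi$, and a final change of measure from $\mu$ back to $\rho$ contributing the $\norm{\frac{1}{\mu}}_\infty$ factor. The only cosmetic deviation is how the recursion is inverted: the paper uses induction with the monotone map $g(x)=x-x^2/\phi$ on $[0,\phi/2]$, you telescope $1/\delta_{t+1}-1/\delta_t\ge\mathrm{const}$; both are standard and equivalent.

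The genuine gap is in your ``main obstacle'' paragraph. The paper does \emph{not} prove $c>0$: this is introduced as an explicit hypothesis (Assumption~\ref{assumption:positive c}), and the remark following the theorem states that no lower bound on $c$ is supplied, only empirical evidence that it stays bounded away from zero. Your proposed route to prove it is circular: you want to use the {\L}ojasiewicz inequality to ``rule out suboptimal limit points,'' but the prefactor of that inequality at iterate $t$ is exactly $\min_{m:\pi^*(m)>0}\pi_{\theta_t}(m)$, so it has any ruling-out power only if that quantity is bounded away from zero along the trajectory --- which is precisely the statement $c>0$ you are trying to establish. What smoothness plus monotone improvement actually give you is $\norm{\nabla_\theta V^{\pi_{\theta_t}}(\mu)}_2\to0$, hence $\bigl(\min_{m:\pi^*(m)>0}\pi_{\theta_t}(m)\bigr)\cdot\delta_t\to0$; nothing in your argument excludes the bad branch in which a coordinate in the support of $\pi^*$ decays to zero while $\delta_t$ stays bounded away from zero. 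This is exactly the open failure mode of the non-concave improper objective (the paper proves coordinate positivity only in the bandit special case, via an invariant-set argument under uniform initialization, and explicitly flags the MDP case as unresolved). Note the theorem itself does not require resolving this: $c$ appears in the bound, so if $c=0$ the right-hand side of \eqref{eqn:PGConvergeFiniteMDPs} is infinite and the claim is vacuous. The correct move is therefore to carry $c$ as a trajectory-dependent constant (or invoke the assumption), not to claim a proof of its positivity.

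A secondary omission: your ``mechanical steps'' for the {\L}ojasiewicz inequality need a sign condition you never supply. Unlike tabular softmax PG, where each $(s,a)$ has its own parameter and one can work with $\sum_s d(s)\,\abs{A^{\pi_\theta}(s,a^*(s))}$ so that every state term is nonnegative, here the gradient coordinate for controller $m$ contains the \emph{signed} sum $\sum_s d_\mu^{\pi_{\theta_t}}(s)\,\tilde{A}^{\pi_{\theta_t}}(s,m)$ over all states (the ``cross contamination'' the paper points out). To reweight state-by-state and absorb the mismatch between $d_\mu^{\pi^*}$ and $d_\mu^{\pi_{\theta_t}}$ you must know that $\sum_m\pi^*(m)\tilde{A}^{\pi_{\theta_t}}(s,m)\ge0$ for \emph{every} $s$; this is not automatic in the improper class, since $\pi^*$ is only best-in-class with respect to the start distribution rather than pointwise greedy, and the paper imposes it separately as Assumption~\ref{assumption:positivity of advantage}. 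Your pointwise change of measure from $\mu$ to $\rho$ at the end relies on the same kind of pointwise dominance, so it needs the same assumption.
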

\begin{remark}
	The quantity $c$ in the statement is the minimum probability that SoftMax PG puts on the controllers for which the best mixture $\pi^*$ puts positive probability mass, i.e, $c \bydef \inf_{t\geq 1}\min\limits_{m\in \{m'\in[M]:\pi^*(m') >0\}}\pi_{\theta_t}(m)$. While we currently do not supply a lower bound for $c,$ empirical studies presented in Sec.~\ref{sec:simulation-results} clearly show that $c$ is indeed strictly positively lower bounded, rendering the bound in \eqref{eqn:PGConvergeFiniteMDPs} non vacuous.
\end{remark}

\begin{proof}[Proof sketch of Theorem \ref{thm:convergence of policy gradient}] We highlight here the main steps of the proof. We begin by showing that $V^{\pi_\theta}(\mu)$ is $\beta-$ smooth, for some $\beta>0$.
	
	\begin{restatable} {lemma}{smoothnesslemmaMDP}\label{lemma:smoothness of V}
		${V}^{\pi_{\theta}}\left(\mu\right)$ is $\frac{7\gamma^2+4\gamma+5}{2\left(1-\gamma\right)^2}$-smooth.
	\end{restatable}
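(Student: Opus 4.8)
The plan is to prove smoothness by bounding the operator norm of the Hessian $\nabla_\theta^2 V^{\pi_\theta}(\mu)$ uniformly in $\theta$, since a uniform bound $\beta$ on $\|\nabla_\theta^2 V^{\pi_\theta}(\mu)\|_{op}$ gives $\beta$-Lipschitzness of the gradient by the mean-value inequality, i.e. exactly $\beta$-smoothness. Because the Hessian is symmetric, $\|\nabla_\theta^2 V^{\pi_\theta}(\mu)\|_{op}=\max_{\|u\|_2=1}\bigl|u^\top \nabla_\theta^2 V^{\pi_\theta}(\mu)\,u\bigr|$, and for a fixed unit vector $u$ this quadratic form equals the one-dimensional second derivative $\frac{d^2}{d\alpha^2}V^{\pi_{\theta+\alpha u}}(\mu)\big|_{\alpha=0}$. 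So the whole problem reduces to bounding $\bigl|\frac{d^2}{d\alpha^2}V^{\pi_{\theta+\alpha u}}(\mu)\bigr|$ by $\frac{7\gamma^2+4\gamma+5}{2(1-\gamma)^2}$ for every $\theta$ and every unit direction $u$; I will do this at the generic point $\alpha=0$, writing $\pi_\alpha:=\pi_{\theta+\alpha u}$.

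First I would record the two elementary policy-sensitivity estimates that drive everything. Writing $\dot\pi_\alpha(m),\ddot\pi_\alpha(m)$ for the $\alpha$-derivatives of the softmax weights, the standard softmax calculus gives $\dot\pi_\alpha(m)=\pi_\alpha(m)(u_m-\bar u)$ and $\ddot\pi_\alpha(m)=\pi_\alpha(m)\bigl[(u_m-\bar u)^2-\mathrm{Var}_{\pi_\alpha}(u)\bigr]$, with $\bar u=\sum_{m'}\pi_\alpha(m')u_{m'}$. These yield $\sum_m|\dot\pi_\alpha(m)|\le\sqrt{\mathrm{Var}_{\pi_\alpha}(u)}\le 1$ and $\sum_m|\ddot\pi_\alpha(m)|\le 2\,\mathrm{Var}_{\pi_\alpha}(u)\le 2$, together with the crucial identities $\sum_m\dot\pi_\alpha(m)=\sum_m\ddot\pi_\alpha(m)=0$. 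The feature distinguishing this setting from tabular softmax is that the mixing weights are state-independent and each $K_m(s,\cdot)$ is a probability distribution; hence, since $\pi_\alpha(a|s)=\sum_m\pi_\alpha(m)K_m(s,a)$, summing over $a$ collapses the $K_m(s,\cdot)$ masses to one and propagates the bounds to the induced policy: $\sum_a|\partial_\alpha\pi_\alpha(a|s)|\le 1$ and $\sum_a|\partial_\alpha^2\pi_\alpha(a|s)|\le 2$ uniformly in $s$.

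Next I would pass to the resolvent representation. Let $P_\alpha$ be the $|\mathcal{S}|\times|\mathcal{S}|$ state-transition matrix and $r_\alpha$ the expected-reward vector induced by $\pi_\alpha$, and set $M_\alpha:=(I-\gamma P_\alpha)^{-1}$, so $V_\alpha=M_\alpha r_\alpha$ and $V^{\pi_\alpha}(\mu)=\mu^\top V_\alpha$. Differentiating twice using $\dot M_\alpha=\gamma M_\alpha\dot P_\alpha M_\alpha$ gives the exact expansion
\[
\ddot V_\alpha = 2\gamma^2 M_\alpha\dot P_\alpha M_\alpha\dot P_\alpha V_\alpha + \gamma M_\alpha\ddot P_\alpha V_\alpha + 2\gamma M_\alpha\dot P_\alpha M_\alpha\dot r_\alpha + M_\alpha\ddot r_\alpha.
\]
I would bound each term in $\|\cdot\|_\infty$ using three facts: $\|M_\alpha\|_\infty\le\tfrac{1}{1-\gamma}$ (row-stochasticity of $P_\alpha$), $\|V_\alpha\|_\infty\le\tfrac{1}{1-\gamma}$ under the standard normalization $r\in[0,1]$, and the policy-sensitivity bounds of the previous step, which give $\|\dot P_\alpha\|_\infty\le 1$, $\|\ddot P_\alpha\|_\infty\le 2$, $\|\dot r_\alpha\|_\infty\le 1$, $\|\ddot r_\alpha\|_\infty\le 2$. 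Because $\dot P_\alpha$ and $\ddot P_\alpha$ inherit zero row sums from $\sum_m\dot\pi_\alpha(m)=\sum_m\ddot\pi_\alpha(m)=0$, I can replace $V_\alpha$ by a centering $V_\alpha-c\mathbf{1}$ before applying them, trading $\|V_\alpha\|_\infty$ for half its span; collecting the resulting coefficients is what is meant to produce the numerator $7\gamma^2+4\gamma+5$ and the overall factor $\tfrac12$.

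The main obstacle is the leading term $2\gamma^2 M_\alpha\dot P_\alpha M_\alpha\dot P_\alpha V_\alpha$: a naive operator-norm product bounds it only by $O\!\bigl(\gamma^2/(1-\gamma)^3\bigr)$, one power of $(1-\gamma)^{-1}$ worse than the target. Bringing this term down to the claimed $(1-\gamma)^{-2}$ rate is the crux, and is where the state-independence must be exploited: both $\dot P_\alpha$ factors are generated by the same global direction $u$ through the single distribution $\pi_\alpha$, so one would hope that, combined with the centering afforded by the zero row sums, one of the $(1-\gamma)^{-1}$ factors (the one arising in the tabular case from the state-visitation derivative) can be absorbed. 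Concretely I would isolate this term, rewrite $\gamma M_\alpha\dot P_\alpha V_\alpha$ via the first-order sensitivity $\dot V_\alpha=M_\alpha(\gamma\dot P_\alpha V_\alpha+\dot r_\alpha)$, and control the remaining contraction through the centered span estimate; verifying that this bookkeeping actually closes at $(1-\gamma)^{-2}$ rather than $(1-\gamma)^{-3}$ is the delicate point and the step I would scrutinize most carefully. The three remaining terms are routine once the policy-sensitivity bounds and the centering trick are in hand.
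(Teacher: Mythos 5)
Your proposal contains a genuine gap, and it sits exactly where you said you would scrutinize most carefully: the leading term $2\gamma^2 e_s^\top M_\alpha\dot P_\alpha M_\alpha\dot P_\alpha M_\alpha r_\alpha$ is never actually brought down from $O\bigl(\gamma^2/(1-\gamma)^3\bigr)$ to $O\bigl(\gamma^2/(1-\gamma)^2\bigr)$; that step is described as a hope, not proved. Moreover, the mechanism you propose cannot work in general. Centering exploits $\dot P_\alpha\mathbf{1}=0$ to replace $V_\alpha$ by $V_\alpha-c\mathbf{1}$, so the relevant quantity becomes the span of $V_\alpha$ rather than its sup-norm; but with rewards in $[0,1]$ the span of a discounted value function is itself $\Theta\bigl(1/(1-\gamma)\bigr)$ in the worst case (take one absorbing state in which every action yields reward $1$ and another in which every action yields reward $0$: the value gap is exactly $1/(1-\gamma)$, and the mixture weights can still be active at both states). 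So centering buys at most a constant factor, never a power of $(1-\gamma)$, and state-independence of the mixing distribution does not create any cancellation between the two $\dot P_\alpha$ factors: they act on different vectors ($V_\alpha$, and then $M_\alpha\dot P_\alpha V_\alpha$), and the naive product bound is tight for this term up to constants.

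It is worth knowing how this compares with the paper's own argument, because your ``naive'' computation essentially \emph{is} the paper's proof. The paper runs the identical route — directional second derivative $\partial^2_\alpha V^{\pi_{\theta+\alpha u}}$ at $\alpha=0$, bounds $\sum_a\bigl|\partial_\alpha\pi_{\theta_\alpha}(a|s)\bigr|\le 2\|u\|_2$ and $\sum_a\bigl|\partial^2_\alpha\pi_{\theta_\alpha}(a|s)\bigr|\le 3\|u\|_2^2$ (your versions with constants $1$ and $2$ are in fact slightly tighter), then the resolvent expansion into the same four terms — and bounds the leading term by $\tfrac{8\gamma^2}{(1-\gamma)^3}\|u\|_2^2$ with no improvement attempted. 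Summing $\tfrac{8\gamma^2}{(1-\gamma)^3}+\tfrac{7\gamma}{(1-\gamma)^2}+\tfrac{5/2}{1-\gamma}$, the appendix explicitly concludes that $V^{\pi_\theta}$ is $\tfrac{7\gamma^2+4\gamma+5}{2(1-\gamma)^3}$-smooth, i.e.\ with the \emph{cube}. The exponent $2$ in the lemma statement is therefore inconsistent with the paper's own proof of it (and with the generic scaling of second derivatives of discounted values, which involves two visitation factors and one value factor). In short: as a proof of the statement as literally written, your proposal is incomplete at the crux and the proposed fix fails; as a reconstruction of what the paper actually establishes, your first two steps together with the naive product bound already reproduce it in full.
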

	Next, we derive a new {\L}ojaseiwicz-type inequality for our probabilistic mixture class, which lower bounds the magnitude of the gradient of the value function.
	\begin{restatable}[Non-uniform {\L}ojaseiwicz inequality]{lemma}{nonuniformLE}\label{lemma:nonuniform lojaseiwicz inequality}
		\begin{equation*}
		\norm{\frac{\partial}{\partial\theta}V^{\pi_\theta}(\mu)}_2 \geq \frac{1}{\sqrt{M}}\left(\min\limits_{m:\pi^*_{\theta_{m}}>0} \pi_{\theta_m} \right) \times \norm{\frac{d_{\rho}^{\pi^*}}{d_{\mu}^{\pi_\theta}}}_{\infty}^{-1} \left[V^*(\rho) -V^{\pi_\theta}(\rho) \right]. 
		\end{equation*}
	\end{restatable}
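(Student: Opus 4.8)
My plan is to adapt the non-uniform {\L}ojasiewicz argument for tabular softmax policy gradient \cite{Mei2020,Agarwal2020} to the present mixture parameterization; the one genuinely new feature is that the comparator $\pi^*$ is an arbitrary point of the simplex over the $M$ controllers, not a deterministic (vertex) policy, and this is exactly what makes the change-of-measure step delicate. I would first obtain the gradient in closed form. Writing $A^{\pi_\theta}(s,K_m):=\sum_a K_m(s,a)Q^{\pi_\theta}(s,a)-V^{\pi_\theta}(s)$ for the one-step advantage of invoking controller $m$ at state $s$, the policy gradient theorem applied to $\pi_\theta(a\mid s)=\sum_m\pi_\theta(m)K_m(s,a)$ together with the softmax identity $\partial\pi_\theta(m')/\partial\theta_m=\pi_\theta(m')(\ind_{m'm}-\pi_\theta(m))$ yields, after the baseline term cancels,
\[
\frac{\partial V^{\pi_\theta}(\mu)}{\partial\theta_m}=\frac{\pi_\theta(m)}{1-\gamma}\sum_s d_\mu^{\pi_\theta}(s)\,A^{\pi_\theta}(s,K_m).
\]

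Next, rather than bounding coordinate norms directly, I would lower-bound $\norm{\frac{\partial}{\partial\theta}V^{\pi_\theta}(\mu)}_2$ by a single well-chosen directional derivative, which produces the constants $1/\sqrt M$ and $\min_{m:\pi^*(m)>0}\pi_\theta(m)$ simultaneously. Take the test direction $w$ with $w_m=\pi^*(m)/\pi_\theta(m)-1$. Because $\sum_m(\pi^*(m)-\pi_\theta(m))=0$, the factor $\pi_\theta(m)$ in the gradient cancels the denominator of $w_m$ and the $V^{\pi_\theta}$-baseline drops out (indeed $\sum_m\pi_\theta(m)A^{\pi_\theta}(\cdot,K_m)=0$), giving the clean identity
\[
\Big\langle \tfrac{\partial}{\partial\theta}V^{\pi_\theta}(\mu),\,w\Big\rangle=\frac{1}{1-\gamma}\sum_s d_\mu^{\pi_\theta}(s)\,B(s),\qquad B(s):=\sum_m\pi^*(m)\,A^{\pi_\theta}(s,K_m),
\]
where $B(s)$ is the one-step advantage of the optimal mixture over $\pi_\theta$ at $s$. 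Since $\norm{\frac{\partial}{\partial\theta}V^{\pi_\theta}(\mu)}_2\ge \langle\frac{\partial}{\partial\theta}V^{\pi_\theta}(\mu),w\rangle/\norm{w}_2$ and a direct estimate gives $\norm{w}_2\le \sqrt M/\min_{m:\pi^*(m)>0}\pi_\theta(m)$, this already reproduces the $1/\sqrt M$ and minimum-probability factors of the claim.

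What remains is to show $\frac{1}{1-\gamma}\sum_s d_\mu^{\pi_\theta}(s)B(s)\ge \norm{d_\rho^{\pi^*}/d_\mu^{\pi_\theta}}_\infty^{-1}\,[V^*(\rho)-V^{\pi_\theta}(\rho)]$. By the performance difference lemma the suboptimality is itself $\frac{1}{1-\gamma}\sum_s d_\rho^{\pi^*}(s)B(s)$, so the required bound is exactly the change-of-occupancy inequality $\sum_s d_\mu^{\pi_\theta}(s)B(s)\ge \norm{d_\rho^{\pi^*}/d_\mu^{\pi_\theta}}_\infty^{-1}\sum_s d_\rho^{\pi^*}(s)B(s)$, and this is the step I expect to be the main obstacle. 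If $B(s)\ge0$ at every state it is immediate from $d_\rho^{\pi^*}(s)\le\norm{d_\rho^{\pi^*}/d_\mu^{\pi_\theta}}_\infty\,d_\mu^{\pi_\theta}(s)$; but $B(s)$ is sign-indefinite, which is precisely where the deterministic-comparator shortcut of \cite{Mei2020} is unavailable (there one has a single nonnegative coordinate per state). The way I would attack it is to pass from advantages to the nonnegative state-action values $Q^{\pi_\theta}(s,K_m)\ge0$ (taking rewards in $[0,1]$ without loss of generality), for which the occupancy swap is monotone on the coordinates with $\pi^*(m)>\pi_\theta(m)$, and then reinstate the $V^{\pi_\theta}$-baseline using $\sum_m(\pi^*(m)-\pi_\theta(m))=0$ so that it does not pollute the final constant. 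Making the nonnegativity device and the mean-zero cancellation coexist — i.e.\ controlling the states where $B(s)<0$ without destroying the clean factorization obtained above — is the crux; once it is in place, rearranging the assembled inequality yields the stated lower bound on $\norm{\frac{\partial}{\partial\theta}V^{\pi_\theta}(\mu)}_2$.
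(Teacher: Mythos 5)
Your first half is sound, and it is in fact a slightly cleaner route to the constants than the paper's: you lower-bound the gradient norm by the directional derivative along $w_m=\pi^*(m)/\pi_\theta(m)-1$ (using $\sum_m\pi_\theta(m)\tilde{A}^{\pi_\theta}(s,m)=0$ and $\norm{w}_2\le \sqrt{M}/\min_{m:\pi^*(m)>0}\pi_\theta(m)$), whereas the paper gets the same factors from $\norm{\cdot}_2\ge \norm{\cdot}_1/\sqrt{M}$, the termwise bounds $\pi^*(m)\le 1$ and $\pi_\theta(m)\ge \min_{m':\pi^*(m')>0}\pi_\theta(m')$, and the triangle inequality, starting from the same gradient formula (Lemma \ref{lemma:Gradient simplification}). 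Both reductions terminate at the identical remaining claim: with $B(s):=\sum_m \pi^*(m)\tilde{A}^{\pi_\theta}(s,m)$, one needs $\sum_s d_\mu^{\pi_\theta}(s)B(s)\ \ge\ \norm{d_\rho^{\pi^*}/d_\mu^{\pi_\theta}}_\infty^{-1}\sum_s d_\rho^{\pi^*}(s)B(s)$, where the right-hand sum equals $(1-\gamma)\left[V^*(\rho)-V^{\pi_\theta}(\rho)\right]$ by the performance-difference lemma (Lemma \ref{lemma:value diffence lemma}).

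However, your proof stops exactly at this point, and the device you sketch for closing it cannot work. Split $B=B_+-B_-$ pointwise and set $\kappa:=\norm{d_\rho^{\pi^*}/d_\mu^{\pi_\theta}}_\infty$. The ratio bound $d_\rho^{\pi^*}(s)\le \kappa\, d_\mu^{\pi_\theta}(s)$ gives $\kappa^{-1}\sum_s d_\rho^{\pi^*}(s)B_+(s)\le \sum_s d_\mu^{\pi_\theta}(s)B_+(s)$, which is the right direction; but for the negative part you would need $\kappa\sum_s d_\mu^{\pi_\theta}(s)B_-(s)\le \sum_s d_\rho^{\pi^*}(s)B_-(s)$, i.e.\ a \emph{lower} bound on the ratio $d_\rho^{\pi^*}/d_\mu^{\pi_\theta}$, which $\kappa$ does not supply — the available inequality runs the opposite way. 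Passing to nonnegative $\tilde{Q}$ values and reinstating the baseline does not evade this, since any such rearrangement still only has the upper ratio bound at its disposal. The resolution in the paper is that this step is not proved at all: Assumption \ref{assumption:positivity of advantage} posits precisely $B(s)\ge 0$ for every $s\in\cS$ and every $\pi_\theta$, and the paper's chain invokes it twice (once to remove an absolute value, once to justify the change of measure, which under pointwise nonnegativity of $B$ is immediate). So you correctly located the crux, but what you hoped to derive is, in the paper, a standing hypothesis; to obtain the lemma you should state the nonnegativity of the aggregated advantage as an assumption, after which your argument closes immediately.
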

	The proof of Theorem \ref{thm:convergence of policy gradient}, then follows by combining Lemmas \ref{lemma:smoothness of V} and \ref{lemma:nonuniform lojaseiwicz inequality} followed by an induction argument over $t\geq 1$. Please see the appendix for details of the proof.
\end{proof}
\begin{itemize}
	\item \textbf{Analytical Novelties.} We note here that while the basic recipe for the analysis of Theorem \ref{thm:convergence of policy gradient} is similar to \cite{Mei2020}, we stress that our setting does not directly inherit the intuition of standard PG (sPG) analysis. 
	\begin{itemize}
		\item With $\abs{\mathcal{S}\times\mathcal{A}}<\infty,$ the sPG analysis critically depends on the fact that a  deterministic optimal policy exists and shows convergence to it. Our setting enjoys no such guarantee.
		\item The value function gradient in sPG has no `cross contamination' from other states, so modifying the parameter of one state does not affect the values of the others. Our setting cannot leverage this since the value function gradient possesses contributions from \emph{all} states (see Lemma \ref{lemma:Gradient simplification} in appendix). Hence, our analysis becomes more intricate than existing techniques or simple modifications thereof.
	\end{itemize}

	\item \textbf{Bandit-over-bandits.} For the special case of $S=1$, which is the Multiarmed Bandits, each controller is a probability distribution over the $A$ arms of the bandit. This is different from the standard MABs because the learner cannot choose the actions directly, instead chooses from a \emph{given} set of controllers, to play actions. We call this special case as \emph{\color{blue}bandits-over-bandits}. 
	We obtain a convergence rate of $\mathcal{O}\left(M^2/t\right)$  to the optimum and recover the well-known ${\color{blue}{M^2}}\log T$ regret bound when our softmax PG algorithm is applied to this special case. We refer the readers to the appendix for details of this result, 
	and move to the special case of MAB when the learner uses estimates of the gradient of the value function.
\end{itemize}

\subsection{Convergence Guarantees With Estimated Gradients} 
\label{subsec:theory-estimated-gradients}
For the bandits-over-bandits case when exact value gradients are unavailable, we parameterize the policy simplex $\cP([M])$ \textit{directly}, i.e., $\pi_t(m)=\theta_t(m), \forall m\in [M]$ (see Algorithm \ref{alg:ProjectionFreePolicyGradient}). At each round $t\geq 1$, the learning rate for $\eta$ is chosen asynchronously for each controller $m$, to be $\alpha \pi_t(m)^2$, to ensure that we remain inside the simplex, for some $\alpha \in (0,1)$. To justify its name as a policy gradient algorithm, observe that in order to minimize regret, we need to solve the following optimization problem:
\[\min\limits_{\pi\in \cP([M])} \sm \pi(m)(\kr_\mu(m^*)- \kr_\mu(m)). \]
A direct gradient with respect to the parameters $\pi(m)$ gives us a rule for the policy gradient algorithm. The other changes in the update step (eq \ref{algstep:update noisy gradient}), stem from the fact that true means of the arms are unavailable and importance sampling.
\begin{algorithm}[tb]
	\caption{Projection-free Policy Gradient (for MABs)}
	\label{alg:ProjectionFreePolicyGradient}
	\begin{algorithmic}
		\STATE {\bfseries Input:} learning rate $\eta\in (0,1)$
		\STATE Initialize each $\pi_1(m)=\frac{1}{M}$, for all $m\in [M]$.
		\FOR{$t=1$ {\bfseries to} $T$}
		\STATE $m_*(t)\leftarrow \argmax\limits_{m\in [M]} \pi_t(m)$
		\STATE Choose controller $m_t\sim \pi_t$.
		\STATE Play action $a_t \sim K_{m_t}$.
		\STATE Receive reward  $R_{m_t}$ by pulling arm $a_t$.
		\STATE Update $\forall m\in [M], m\neq m_*(t): $ 
		\begin{equation}\label{algstep:update noisy gradient}
		\pi_{t+1}(m) = \pi_t(m) + \eta\left(\frac{R_{m}\ind_{m}}{\pi_t(m)} - \frac{R_{m_*(t)}\ind_{m_*(t)}}{\pi_t(m_*(t))}\right)    
		\end{equation}
		\STATE Set $\pi_{t+1}(m_*(t)) = 1- \sum\limits_{m\neq m_*(t)}\pi_{t+1}(m)$.
		\ENDFOR
	\end{algorithmic}
\end{algorithm}

We have the following result for the bandit-over-bandits improper learning problem (the proof appears in the   Appendix).
\begin{restatable}{theorem}{regretnoisyMABshort}\label{thm:regret noisy gradient short}
	For $\alpha$ chosen sufficiently small, $\left(\pi_t\right)$ is a Markov process, with $\pi_t(m^*)\to 1$ as $t\to \infty, a.s.$ Further the regret till any time $T$ is bounded as $\cR(T) = \mathcal{O}\left(\frac{1}{1-\gamma}\sum\limits_{m\neq m^*} \frac{\Delta_m}{\alpha\Delta_{min}^2} {\color{blue}\log T}\right)$,
	where $\Delta_j\bydef r(m^*)-r(j), j\in [M]$ and $\Delta_{min}\bydef \min\limits_{m\in [M]}\Delta_m.$
\end{restatable}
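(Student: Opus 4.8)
The plan is to read Algorithm~\ref{alg:ProjectionFreePolicyGradient} as a stochastic-approximation recursion on the simplex $\cP([M])$ and to analyse it through its conditional drift. The Markov property is immediate: given $\pi_t$, the iterate $\pi_{t+1}$ is a fixed function of $\pi_t$, of the sampled controller $m_t\sim\pi_t$, and of the fresh reward $R_{m_t}$, so the transition kernel depends only on the current state. The first substantive step is a drift computation. Writing $r(m):=\sum_a K_m(a)r(a)$ for the per-step reward of controller $m$ and using $\mathbb{E}[\mathbbm{1}\{m_t=m\}R_m\mid\mathcal{F}_t]=\pi_t(m)r(m)$ together with the per-controller step size $\eta=\alpha\pi_t(m)^2$, I would show for every non-leader $m\neq m_*(t)$ that
\[
\mathbb{E}\bigl[\pi_{t+1}(m)-\pi_t(m)\mid\mathcal{F}_t\bigr]=\alpha\,\pi_t(m)^2\bigl(r(m)-r(m_*(t))\bigr).
\]
Two corollaries organize the whole argument. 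First, the drift of $\pi_t(m^*)$ is nonnegative in every configuration: if $m_*(t)=m^*$ it equals $\alpha\sum_{m\neq m^*}\pi_t(m)^2\Delta_m$ by the normalization step, while if the leader is some $j\neq m^*$ it equals $\alpha\pi_t(m^*)^2\Delta_j$; hence $\pi_t(m^*)$ is a bounded submartingale. Second, the mean reward $\bar r_t:=\sum_m\pi_t(m)r(m)$ is a bounded submartingale with drift $\alpha\sum_{m\neq m_*(t)}\pi_t(m)^2\bigl(r(m)-r(m_*(t))\bigr)^2\ge0$. Martingale convergence gives a.s.\ limits for both scalars, and the Doob decomposition of the bounded $\bar r_t$ forces $\sum_t\pi_t(m)^2(r(m)-r(m_*(t)))^2<\infty$ a.s., so the configuration must collapse onto a single vertex.

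The delicate part is to show the vertex is $e_{m^*}$, i.e.\ to rule out absorption at a suboptimal $e_j$. The drift pushing $\pi_t(m^*)$ off $0$ is only quadratic ($\sim\pi_t(m^*)^2$), so a linearization is inconclusive; the remedy is to pass to logarithmic coordinates. Writing the non-leader update multiplicatively, $\pi_{t+1}(m^*)=\pi_t(m^*)\bigl(1+\alpha R_{m^*}\mathbbm{1}\{m_t=m^*\}-\alpha\pi_t(m^*)\tfrac{R_{m_*(t)}\mathbbm{1}\{m_t=m_*(t)\}}{\pi_t(m_*(t))}\bigr)$, a first-order expansion gives conditional log-drift $\alpha\pi_t(m^*)\Delta_{m_*(t)}-O(\alpha^2\pi_t(m^*))>0$ for $\alpha$ small whenever the leader is suboptimal. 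Thus $\log\pi_t(m^*)$ drifts strictly upward near the boundary and $\pi_t(m^*)$ cannot converge to $0$, giving $\liminf_t\pi_t(m^*)>0$ a.s.; since the leader always satisfies $\pi_t(m_*(t))\ge1/M$, this forces $m_*(t)=m^*$ for all large $t$, after which the negative drift $-\alpha\pi_t(m)^2\Delta_m$ drives each $\pi_t(m)$, $m\neq m^*$, to $0$. Making the escape-from-$0$ rigorous---controlling the martingale noise, whose conditional standard deviation is $O(\alpha\pi_t(m^*)^{3/2})$, against the $O(\alpha\pi_t(m^*))$ log-drift---is the main obstacle; I would settle it either by a supermartingale argument on $-\log\pi_t(m^*)$ or by invoking non-convergence-to-unstable-equilibria results for stochastic approximation~\cite{Borkar}.

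For the regret, I first reduce to the soft pull counts: since each controller has value $\tfrac{1}{1-\gamma}r(m)$,
\[
\cR(T)=\frac{1}{1-\gamma}\sum_{m\neq m^*}\Delta_m\sum_{t=1}^{T}\mathbb{E}\,\pi_t(m),
\]
so it suffices to bound $\sum_t\mathbb{E}\pi_t(m)$ by $\mathcal{O}(\alpha^{-1}\Delta_{\min}^{-2}\log T)$ for each suboptimal $m$. After the a.s.\ finite time $\tau$ at which $m^*$ becomes the permanent leader, two potentials do the work. Tracking the reciprocal $1/\pi_t(m)$, whose conditional drift is $\alpha\Delta_m(1+O(\alpha))$, gives $\mathbb{E}[1/\pi_T(m)]\le M+C\alpha\Delta_m T$ and hence, by Jensen, $\mathbb{E}\log(1/\pi_T(m))=\mathcal{O}(\log T)$; this is the rigorous form of the $1/t$ (not exponential) decay predicted by the mean flow $\dot\pi=-\alpha\Delta_m\pi^2$. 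Tracking $\log\pi_t(m)$, whose conditional drift is at most $-\tfrac12\alpha\Delta_m\pi_t(m)$ for $\alpha$ small, and telescoping then yields $\tfrac12\alpha\Delta_m\sum_t\mathbb{E}\pi_t(m)\le\mathbb{E}\log(1/\pi_T(m))=\mathcal{O}(\log T)$, i.e.\ $\sum_t\mathbb{E}\pi_t(m)=\mathcal{O}(\alpha^{-1}\Delta_m^{-1}\log T)$. The remaining work---bounding the transient $t\le\tau$ and the probability of erroneous leadership through a concentration bound on the submartingale increments---is where the extra $\Delta_{\min}$ factor enters, producing the stated $\Delta_{\min}^2$ in the denominator. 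I expect these last two points, the boundary-escape argument in the almost-sure claim and the transient/leader-stability control in the regret, to be the only genuinely delicate steps.
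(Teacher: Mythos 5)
Your proposal shares the paper's two structural pillars---the per-arm drift identity $\mathbb{E}\left[\pi_{t+1}(m)-\pi_t(m)\mid\mathcal{F}_t\right]=\alpha\,\pi_t(m)^2\bigl(r(m)-r(m_*(t))\bigr)$ and the reduction of regret to $\frac{1}{1-\gamma}\sum_{m\neq m^*}\Delta_m\sum_{t}\mathbb{E}\,\pi_t(m)$---but the machinery you build on them is genuinely different. The paper (following \cite{Denisov2020RegretAO}) works in the reciprocal coordinate $y_t=1/\pi_t(m^*)$: while $\pi_t(m^*)<1/2$, the conditional drift of $y_t$ is bounded above by a strictly negative \emph{constant} $-c$ (this is where the condition $\alpha<\Delta_{min}/(r(m^*)-\Delta_{min})$ enters), so the Optional Stopping Theorem gives a finite expected hitting time of $\{\pi(m^*)>1/2\}$. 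It then builds an embedded chain $p(s)$ from the excursions on which $\pi_t(m^*)\geq 1/2$, shows via Jensen that $\mathbb{E}\left[1-p(s)\right]\leq\bigl(1+\alpha\Delta_{min}^2 s/\sum_{m'\neq m^*}\Delta_{m'}\bigr)^{-1}$, and obtains the $\log T$ term by summing this $1/s$ bound, with all time spent below $1/2$ absorbed into the finite constant $\sum_t\mathbb{P}\{\pi_t(m^*)<1/2\}<\infty$. You instead use the bounded submartingales $\pi_t(m^*)$ and $\bar r_t$ plus logarithmic coordinates for the boundary escape, and per-arm potentials $1/\pi_t(m)$ and $\log\pi_t(m)$ for the regret. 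Your route is viable, and your post-transient bound $\sum_t\mathbb{E}\,\pi_t(m)=\mathcal{O}(\alpha^{-1}\Delta_m^{-1}\log T)$ would even be sharper in the main term than the paper's; but the reciprocal coordinate is the cleaner device exactly where your argument is softest, since your log-drift $\alpha\pi_t(m^*)(\Delta_{min}-\mathcal{O}(\alpha))$ vanishes at the boundary while the paper's $1/\pi$ drift does not.

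Two concrete gaps remain. First, your regret bookkeeping conditions on ``the a.s.\ finite time $\tau$ at which $m^*$ becomes the permanent leader.'' Permanence depends on the entire future, so $\tau$ is not a stopping time, and telescoping the conditional drift inequalities ``after $\tau$'' is not valid as written. The paper avoids this by splitting on the $\mathcal{F}_t$-measurable event $\{\pi_t(m^*)\geq 1/2\}$ (which already forces $m^*$ to be the \emph{current} leader) and pushing everything below $1/2$ into the constant via the summability $\sum_t\mathbb{P}\{\pi_t(m^*)<1/2\}<\infty$; your argument needs this device (or the embedded chain) in place of a permanent-leader time. Second, in the escape-from-zero step you compare a noise standard deviation of order $\alpha\pi_t(m^*)^{3/2}$ against a log-drift of order $\alpha\pi_t(m^*)$; these live in different coordinates. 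In log coordinates the per-step noise is $\mathcal{O}(\alpha\pi_t(m^*)^{1/2})$, which \emph{dominates} the drift pointwise as $\pi_t(m^*)\to 0$, so no pointwise drift-versus-noise comparison can close the argument. What does work---and is the repair you half-propose---is the pure supermartingale route: $-\log\pi_t(m^*)\geq 0$ has bounded increments and nonpositive drift while the leader is suboptimal (for $\alpha$ small), and a nonnegative supermartingale can neither diverge to $+\infty$ nor converge to a point of strictly negative drift, so the process must exit the bad region almost surely; the paper's reciprocal supermartingale accomplishes the same thing with a constant drift and an explicit hitting-time bound. A minor further point: your claim that the $\bar r_t$ argument forces collapse ``onto a single vertex'' fails if two suboptimal arms share the same reward (mass can concentrate on that face); fortunately your exclusion argument only needs the leader to be suboptimal, so this is cosmetic.
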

Although we obtain a similar $\log T$ regret bound for the case of noisy gradient estimates, we note that the techniques used are quite different from those used in Theorem \ref{thm:convergence of policy gradient}. 
The proof proceeds by showing that the the expected time for $\pi_t(m^*)$ to cross any fixed threshold in $(0,1]$ is finite. This, along with showing that the process $\{\pi_t(m^*)\}$ is a supermartingale and invoking Doob's convergence theorem, helps to prove the regret bound.

\begin{remark}\label{remark:relation between true and noisy grad}
	The ``cost" of not knowing the true gradient seems to cause the dependence on $\Delta_{min}$ in the regret, as is not the case when true gradient is available (see Theorem \ref{thm:convergence for bandit} and Corollary \ref{cor:regret true gradient MAB}). The dependence on $\Delta_{min}$ as is well known from the work of \cite{LAI19854}, is unavoidable.
\end{remark}

\begin{remark}\label{remark:dependence on delta}
	The dependence of $\alpha$  on $\Delta_{min}$ can be removed by a more sophisticated choice of learning rate, at the cost of an extra $\log T$ dependence on regret \cite{Denisov2020RegretAO}.  
\end{remark}

We note that it is an open and challenging task to show convergence guarantees for our policy gradient approach over improper mixtures for general MDPs with estimated (noisy) gradients; indeed, such rates are not yet known even for the basic softmax PG scheme for the tabular MDP setting. The difficulty primarily seems to lie in the fact that the constant $c$ for the perfect gradient case now becomes stochastic, and showing that it stays bounded away from $0$ in some probabilistic sense is non-trivial. 

\vspace{-10pt}



\section{Simulation results}\label{sec:simulation-results}
We now discuss the results of implementing our algorithms on the inverted pendulum and the constrained queueing examples described in Sec.~\ref{sec:motivating-examples}. 
Since neither value functions nor value gradients for these problems are available in closed-form, we modify SoftMax PG (Algorithm~\ref{alg:mainPolicyGradMDP}) to make it generally implementable using a combination of (1) \emph{rollouts} to estimate the value function of the current (improper) policy and (2) \emph{simultaneous perturbation stochastic approximation} (SPSA) to estimate its value gradient. The gradient estimation algorithm, \emph{\color{blue}GradEst}, is shown in Algorithm \ref{alg:gradEst}. 

\subsection{Approximation of Softmax PG}

In order to estimate the value gradient, we use the approach in \cite{Flaxman05}, noting that for a function $V:\Real^M\to\Real$, the gradient, $\nabla V$, 
\begin{equation}\label{eq:approximation of grad}
    \nabla V(\theta) \approx \expect{\left(V(\theta+\alpha. u)-V(\theta) \right)u}.\frac{M}{\alpha}. 
\end{equation}
where $\alpha \in (0,1)$.
If $u$ is chosen to be uniformly random on unit sphere, the second term is zero, ie., $ \expect{\left(V(\theta+\alpha u)-V(\theta) \right)u}.\frac{M}{\alpha} = \expect{\left(V(\theta+\alpha u) \right)u}.\frac{M}{\alpha}$. 

The expression above requires evaluation of the value function at the point $(\theta+\alpha.u)$. Since the value function may not be explicitly computable, we employ rollouts, for its evaluation. 


\setlength{\textfloatsep}{10pt}
 \begin{minipage}{0.38\textwidth}
	\begin{algorithm}[H]
		\centering
		\caption{Softmax PG with Gradient Estimation ({\color{blue}SPGE})}
		\label{alg:mainPolicyGradMDPgradEst}
		\begin{algorithmic}[1]
			\STATE {\bfseries Input:} learning rate $\eta>0$, perturbation parameter $\alpha>0$, Initial state distribution $\mu$
			\STATE Initialize each $\theta^1_m=1$, for all $m\in [M]$, $s_1\sim \mu$
			\FOR{$t=1$ {\bfseries to} $T$}
			\STATE Choose controller $m_t\sim \pi_t$.
			\STATE Play action $a_t \sim K_{m_t}(s_{t},:)$.
			\STATE Observe $s_{t+1}\sim \tP(.|s_t,a_t)$.
			\STATE ${\widehat{\nabla_{\theta^t} V^{\pi_{\theta_t}}(\mu)}}= \text{\tt GradEst}({\theta_t},\alpha,\mu)$
			\STATE Update: \\$\theta^{t+1} = \theta^{t}+ \eta. \widehat{\nabla_{\theta^t} V^{\pi_{\theta_t}}(\mu)}$.
			\ENDFOR    
		\end{algorithmic}
	\end{algorithm}
\end{minipage}
\hfill
\begin{minipage}{0.55\textwidth}
	\begin{algorithm}[H]
		\centering
		\caption{GradEst ({\color{blue}subroutine for SPGE})}
		\label{alg:gradEst}
		\begin{algorithmic}[1]
			\STATE {\bfseries Input:} Policy parameters $\theta$, parameter $\alpha>0$, Initial state distribution $\mu$.
			\FOR{$i=1$ {\bfseries to} $\#\tt{runs}$}
			\STATE $u^i\sim Unif(\mathbb{S}^{M-1}).$
			\STATE $\theta_\alpha = \theta+\alpha.u^i$
			\STATE $\pi_\alpha = \softmax(\theta_\alpha)$
			\FOR{$l=1$ {\bfseries to} $\#\tt{rollouts}$}
			\STATE Generate trajectory $(s_0,a_0,r_0,s_1,a_1,r_1,\ldots, s_{\tt{lt}},a_{\tt{lt}},r_{\tt{lt}})$ using the policy $\pi_\alpha:$ and $s_0 \sim \mu$. 
			\STATE $\tt{reward}^l=\sum\limits_{j=0}^{lt}\gamma^jr_j$
			\ENDFOR
			\STATE  $\tt{mr}(i) = \tt{mean}(\tt{reward})$
			\ENDFOR
			\STATE $\tt{GradValue} = \frac{1}{\#runs}\sum\limits_{i=1}^{\#\tt{runs}}{\tt{mr}}(i).u^i.\frac{M}{\alpha}.$
			\STATE {\bfseries Return:} $\tt{GradValue}$.
		\end{algorithmic}
	\end{algorithm}
\end{minipage}


Note that all of the simulations shown have been averaged over $\#\text{\tt L}=20$ trials, and the mean and standard deviations plotted. We also show empirically that the constant $c$ in Theorem \ref{thm:convergence of policy gradient} is indeed strictly positive. In the sequel, for every trial $l\in[\#\text{\tt L}]$, let $\bar{c}_t^l:=\inf\limits_{1\leq s\leq t} \min\limits_{m\in \{m'\in [M]: \pi^*(m')>0\}}\pi_{\theta_s}(m),$ and  $\bar{c}_t:= \frac{1}{\#\text{\tt L}} \sum\limits_{l=1}^{\#\text{\tt L}}\bar{c}_t^l.$ Also let $\bar{c} := \min\limits_{l\in[\#\text{\tt L}]}\min\limits_{1\leq T}\bar{c}_t^l$. That is the sequences $\{\bar{c}_t^l\}_{t=1, l=1}^{T,\#\text{\tt L}}$ define the minimum probabilities that the algorithm puts, over rounds $1:t$ in trial $l$, on controllers with $\pi^*(\cdot)>0$. $\{\bar{c}_t\}_{t=1}^T$ represents its average across the different trials, and $\bar{c}$ is the minimum such probability that the algorithm learns across all rounds $1\leq t\leq T$ and across trials. We note her that in all the simulations, the empirical trajectories of $\bar{c}_t$ and $\bar{c}$ become flat after some initial rounds and are bounded away from zero, supporting our conjecture that the constant $c$ in Theorem \ref{thm:convergence of policy gradient} does not decay to zero.

\subsection{The Inverted Pendulum System}
\begin{figure*}[ht]
	\centering
	\begin{subfigure}[t]{.45\textwidth}
		\includegraphics[scale = 0.75]{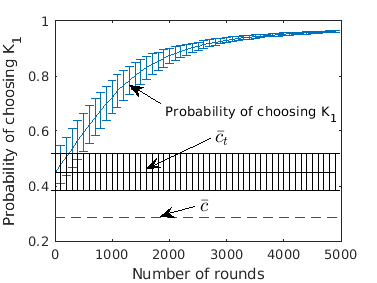}
		\caption{Cartpole simulation with $\{K_1=K_{opt}, K_2=K_{opt}+\Delta\}$. our algorithm converging to the best controller. }
		\label{subfig:cartpole--asym}
	\end{subfigure}%
	\hfill
	\begin{subfigure}[t]{.45\textwidth}
		\includegraphics[scale = 0.5]{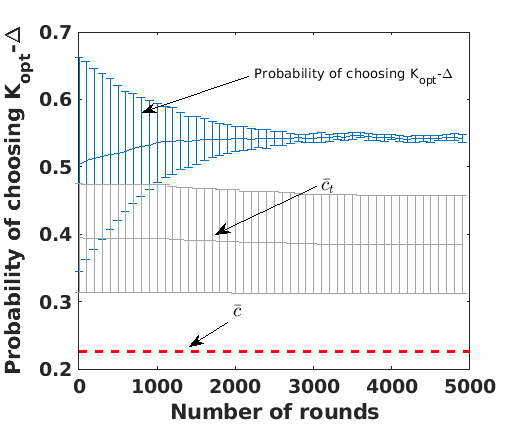}
		\caption{Cartpole simulation with $\{K_1=K_{opt}-\Delta, K_2=K_{opt}+\Delta\}$. Softmax PG algorithm converges to a improper mixture of the two base controllers. }
		\label{subfig:cartpole--symm}
	\end{subfigure}%
\caption{Softmax policy gradient algorithm applied to the cartpole control. Each plot shows (a) the  learnt probabilities of various base controllers over time, and (b) the minimum probability $\bar{c_t}$ and $\bar{c}$ as described in the text.}
\label{fig:simulations2}
\end{figure*}

We study two different settings for the Inverted Pendulum example. Let $K_{opt}$ be the optimal controller for the given system, computed via standard procedures (details can be found in \cite{bertsekas11dynamic}). We set $M=2$ and consider two scenarios: (i) the two base controllers are $\cC\equiv\{K_{opt}, K_{opt}+\Delta\}$, where $\Delta$ is a random matrix, each entry of which is drawn IID $\cN(0,0.1)$, (ii) $\cC\equiv\{K_{opt}-\Delta, K_{opt}+\Delta\}$. In the first case a corner point of the simplex is optimal. In the second case a strict improper mixture of the available controllers is optimum. As we can see in Fig. \ref{subfig:cartpole--asym} and \ref{subfig:cartpole--symm} our policy gradient algorithm converges to the best controller/mixture in both the cases. The details of all the hyperparameters for this setting are provided in the appendix.
We note here that in the second setting even though none of the controllers, applied individually, stabilizes the system, our Softmax PG algorithm finds and follows a improper mixture of the controllers which stabilizes the given Inverted Pendulum. 

We investigate further the example in our simulation in which the two constituent controllers are $K_{opt}+\Delta$ and $K_{opt}-\Delta$. We use OpenAI gym to simulate this situation. In the Figure ~\ref{subfig:cartpole--symm}, it was shown our Softmax PG algorithm (with estimated values and gradients) converged to a improper mixture of the two controllers, i.e., $\approx(0.53,0.47)$. Let $K_{\tt{conv}}$ be defined as the (randomized) controller which chooses $K_1$ with probability 0.53, and $K_2$ with probability 0.47. Recall from Sec.~\ref{sec:motivating-cartpole-example} that this control law converts the linearized cartpole into an Ergodic Parameter Linear System (EPLS).  In Table~\ref{table:meanfalls} we report the average number of rounds the pendulum stays upright when different controllers are applied for all time, over trajectories of length 500 rounds. The third column displays an interesting feature of our algorithm. Over 100 trials, the base controllers do not stabilize the pendulum for a relatively large number of trials, however, $K_{\tt{conv}}$ successfully does so most of the times.
\begin{table}[ht]\caption{A table showing the number of rounds the constituent controllers manage to keep the inverted pendulum upright.}
	\centering 
	\begin{tabular}{c c c}
		\hline\hline                     
		Controller & \makecell{Mean number of rounds \\before the pendulum falls $\land$ 500} & \makecell{$\#$ Trials out of 100 in which \\the pendulum falls before 500 rounds} \\ [0.5ex]
		\hline              
		$K_1(K_{opt}+\Delta)$ & 403  & 38  \\

		${K_2(K_{opt}-\Delta)}$ & 355 &  46  \\
		${\color{blue}K_{\tt{conv}}}$ & 465 &  \textbf{\color{blue}8}  \\ [1ex]
		\hline
	\end{tabular}\label{table:meanfalls}
\end{table}

We mention here that if one follows $K^*$, which is the optimum controller matrix one obtains by solving the standard Discrete-time Algebraic Riccatti Equation (DARE) \cite{bertsekas11dynamic}, the inverted pendulum does not fall over 100 trials. However, as indicated in Sec.\ref{sec:Introduction}, constructing the optimum controller for this system from scratch requires exponential, in the number of state dimension, sample complexity \cite{chen-hazan20black-box-control-linear-dynamical}. On the other hand $K_{\tt{conv}}$ performs very close to the optimum, while being sample efficient.

\subsection{Constrained Queueing Network}
We present simulation results for the following networks.\\

{\bfseries{A Two Queue Network With Fixed Arrival Rates.}} We study two different settings here: (1) in the first case the optimal policy is a strict improper combination of the available controllers and second (2) where it is at a corner point, i.e., one of the available controllers itself is optimal. Our simulations show that in both the cases, PG converges to the correct controller distribution. We provide all details about hyperparameters in Sec.~\ref{sec:simulation-details} in the Appendix.

Recall the example that we discussed in Sec.~\ref{sec:motivating-queueing-example}. We consider the case with Bernoulli arrivals with rates $\boldsymbol{\lambda}=[\lambda_1,\lambda_2]$ and are given two base/atomic controllers $\{K_1,K_2\}$, where controller $K_i$ serves Queue~$i$ with probability $1$, $i=1,2$. As can be seen in Fig.~\ref{subfig:equal arrival rate} when $\boldsymbol{\lambda}=[0.49,0.49]$ (equal arrival rates), GradEst converges to an improper mixture policy that serves each queue with probability $[0.5,0.5]$. Note that this strategy will also stabilize the system whereas both the base controllers lead to instability (the queue length of the unserved queue would obviously increase without bound). Figure \ref{subfig:unequal arrival rate}, shows that with unequal arrival rates too, GradEst quickly converges to the best policy. 

Fig.~\ref{subfig:Value function comparisom} shows the evolution of the value function of GradEst (in blue) compared with those of the base controllers (red) and the \emph{Longest Queue First} policy (LQF) which, as the name suggests, always serves the longest queue in the system (black). LQF, like any policy that always serves a nonempty queue in the system whenever there is one\footnote{Tie-breaking rule is irrelevant.}, is known to be optimal in the sense of delay minimization for this system \cite{LQF2016}. See Sec.~\ref{sec:simulation-details} in the Appendix for more details about this experiment.

Finally, Fig.~\ref{subfig:3experts} shows the result of the second experimental setting with three base controllers, one of which is delay optimal. The first two are $K_1,K_2$ as before and the third controller, $K_3$, is LQF. Notice that $K_1,K_2$ are both queue length-agnostic, meaning they could attempt to serve empty queues as well. LQF, on the other hand, always and only serves nonempty queues. Hence, in this case the optimal policy is attained at one of the corner points, i.e., $[0,0,1]$. The plot shows the PG algorithm converging to the correct point on the simplex. 

\begin{figure*}[ht]
	\centering
	\begin{subfigure}[t]{.23\textwidth}
		\includegraphics[scale = 0.29]{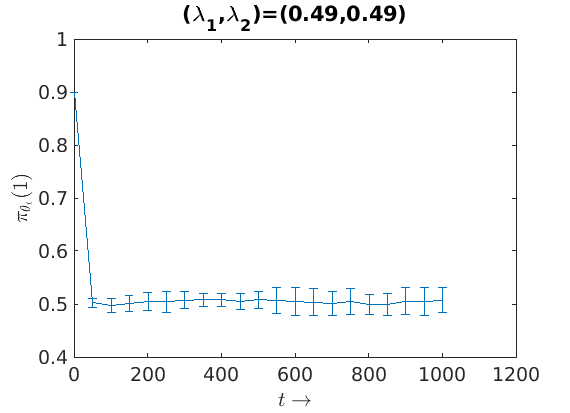}
		\caption{Arrival rate:$(\lambda_1,\lambda_2)=(0.49,0.49)$ }
		\label{subfig:equal arrival rate}
	\end{subfigure}%
	\hfill
	\begin{subfigure}[t]{.23\textwidth}
		\includegraphics[scale = 0.29]{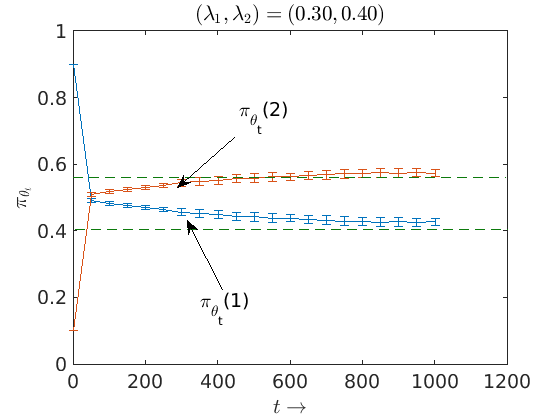}
		\caption{Arrival rate:$(\lambda_1,\lambda_2)=(0.3,0.4)$ }
		\label{subfig:unequal arrival rate}
	\end{subfigure}%
	\hfill
	\begin{subfigure}[t]{.23\textwidth}
		\includegraphics[scale = 0.29]{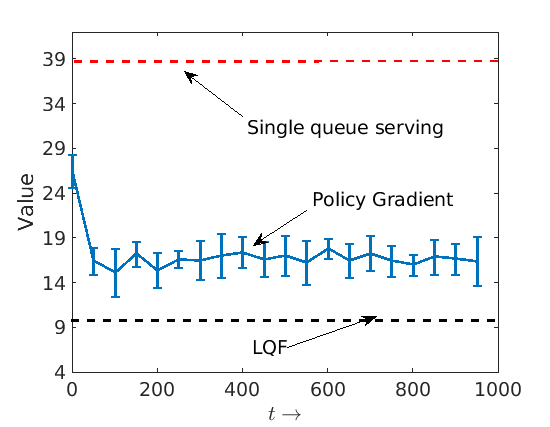}
		\caption{(Estimated) Value functions for case with the two base policies and Longest Queue First (``LQF") }
		\label{subfig:Value function comparisom}
	\end{subfigure}%
	\hfill
	\begin{subfigure}[t]{.23\textwidth}
		\includegraphics[scale = 0.29]{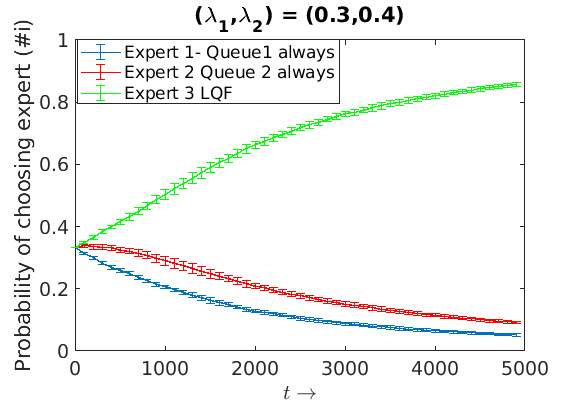}
		\caption{Case with 3 experts: Always Queue~1, Always Queue~2 and LQF.}
		\label{subfig:3experts}
	\end{subfigure}%
	\caption{Softmax policy gradient algorithm applies show convergence to the best mixture policy.}
	\label{fig:simulations}
\end{figure*}

{\bfseries{Non-stationary arrival rates.}} Recall the example that we discussed in Sec.~\ref{sec:motivating-queueing-example} of two queues. The scheduler there is now given two base/atomic controllers $\mathcal{C}:=\{K_1,K_2\}$, i.e. $M=2$. Controller $K_i$ serves Queue~$i$ with probability $1$, $i=1,2$. As can be seen in Fig.~\ref{subfig:nonstatbernoulliqueues}, the arrival rates $\boldsymbol{\lambda}$ to the two queues vary over time (adversarially) during the learning. In particular, $\boldsymbol{\lambda}$ varies from $(0.3,0.6)\to (0.6,0.3)\to(0.49,0.49)$. Our PG algorithm successfully \emph{tracks} this change and \emph{adapts} to the optimal improper stationary policies in each case. In all three cases a mixed controller is optimal, and is correctly tracked by our PG algorithm.

{\bfseries{Path Graph Networks.}} Consider a system of parallel transmitter-receiver pairs as shown in Figure ~\ref{subfig:PGN-TxRx}. Due to the physical arrangement of the Tx-Rx pairs, no two adjacent systems can be served simultaneously because of interference. This type of communication system is commonly referred to as a \textit{path graph network} \cite{Mohan2020ThroughputOD}. Figure ~\ref{subfig:PGN-conflict graph} shows the corresponding \emph{conflict graph}. Each Tx-Rx pair can be thought of as a queue, and the edges between them represent that the two connecting queues, cannot be served simultaneously. On the other hand, the sets of queues which can be served simultaneously are called \emph{independent sets} in the queuing theory literature. In the figure above, the independent sets are $\{\emptyset,\{1\}, \{2\}, \{3\}, \{4\}, \{1,3\}, \{2,4\}, \{1,4\} \}$. 

The scheduling constraints here dictate that Queues $i$ and $i+1$ cannot be served simultaneously for $i\in[N-1]$ in any round $t\geq0$. 
In each round $t$, the scheduler selects an independent set to serve the queues therein.

\begin{figure*}[ht]
	\centering
	\begin{subfigure}[t]{.46\textwidth}
		\includegraphics[scale = 0.37]{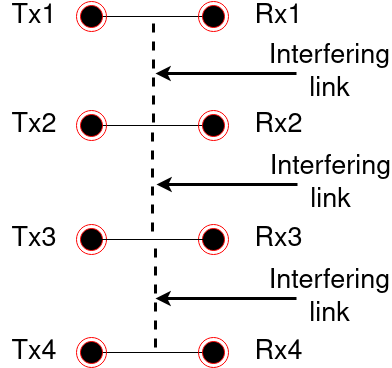}
		\caption{A basic path-graph interference system with $N=4$ communication links.}
		\label{subfig:PGN-TxRx}
	\end{subfigure}%
	\hfill
	\begin{subfigure}[t]{.46\textwidth}
		\includegraphics[scale = 0.3]{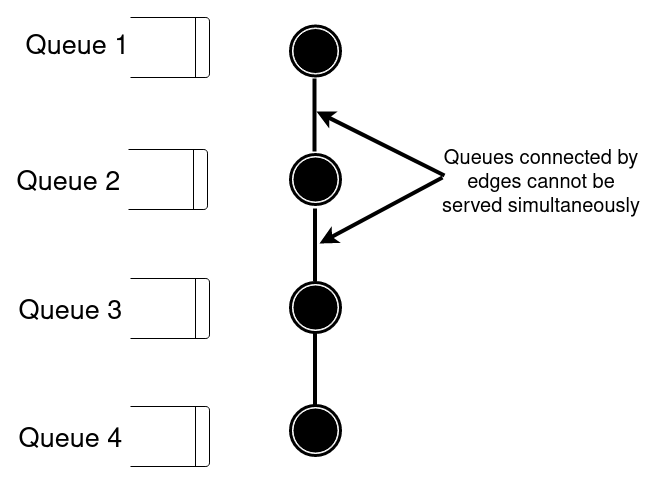}
		\caption{The associated conflict (interference) graph is a \emph{path-graph}.}
		\label{subfig:PGN-conflict graph}
	\end{subfigure}%
	\caption{An example of a path graph network. The interference constraints are such that physically adjacent queues cannot be served simultaneously.}
	\label{fig:PGN}
\end{figure*}

Let $Q_j(t)$ be the backlog of Queue~$j$ at time $t$. We use the following base controllers: (i) $K_1:$ Max Weight (MW) controller \cite{tassiulas-ephremides92stability-queueing} chooses a set $s_t:=\argmax\limits_{\underbar{s}\in \mathcal{A}}\sum\limits_{j\in \underbar{s}} Q_j(t)$, i.e, the set with the largest backlog, (ii) $K_2:$ Maximum Egress Rate (MER) controller chooses a set $s_t:=\argmax\limits_{\underbar{s}\in \mathcal{A}}\sum\limits_{j\in \underbar{s}} \ind\{Q_j(t)>0\}$, i.e, the set which has the maximum number of non-empty queues.
We also choose $K_3,K_4$ and $K_5$ which serve the sets $\{1,3\}, \{2,4\}, \{1,4\}$ respectively with probability 1. We fix the arrival rates to the queues $(0.495,0.495,0.495,0.495)$. It is well known that the MER rule is mean-delay optimal in this case \cite{Mohan2020ThroughputOD}. In Fig. \ref{subfig:BQ5}, we plot the probability of choosing $K_i, i\in [5]$, learnt by our algorithm. The probability of choosing MER indeed converges to 1.

Finally, in Table \ref{table:meandelay}, we report the mean delay values of the 5 base controllers we used in our simulation Fig. \ref{subfig:BQ5}, Sec.\ref{sec:simulation-results}. We see the controller $K_2$ which was chosen to be MER, indeed has the lowest cost associated, and as shown in Fig. ~\ref{subfig:BQ5}, our Softmax PG algorithm (with estimated value functions and gradients) converges to it.

\begin{table}[ht]\caption{Mean Packet Delay Values of Path Graph Network Simulation.}
	\centering 
	\begin{tabular}{c c c}
		\hline\hline                     
		Controller & Mean delay (\# time slots) over 200 trials & Standard deviation \\ [0.5ex]
		\hline              
		$K_1(MW)$ & 22.11  & 0.63  \\
		${\color{blue}K_2(MER)}$ & \textbf{\color{blue}20.96} &  0.65  \\
		$K_3(\{1,3\})$ & 80.10 &  0.92  \\
		$K_4(\{2,4\})$ & 80.22 & 0.90 \\
		$K_5(\{1,4\})$ & 80.13 &  0.91 \\ [1ex]      
		\hline
	\end{tabular}\label{table:meandelay}
\end{table}
\begin{figure*}[ht]
	\centering
		\begin{subfigure}[t]{.45\textwidth}
			\includegraphics[scale = 0.57]{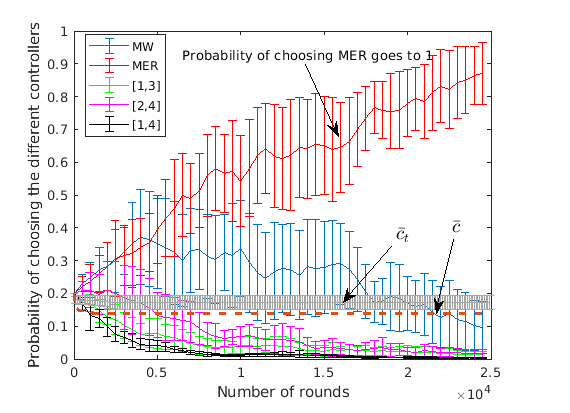}
			\caption{Softmax PG applied to a Path Graph Network, shows our algorithm converging to the best controller.}
			\label{subfig:BQ5}
		\end{subfigure}%
		\hfill
	\begin{subfigure}[t]{.45\textwidth}
		\includegraphics[scale = 0.35]{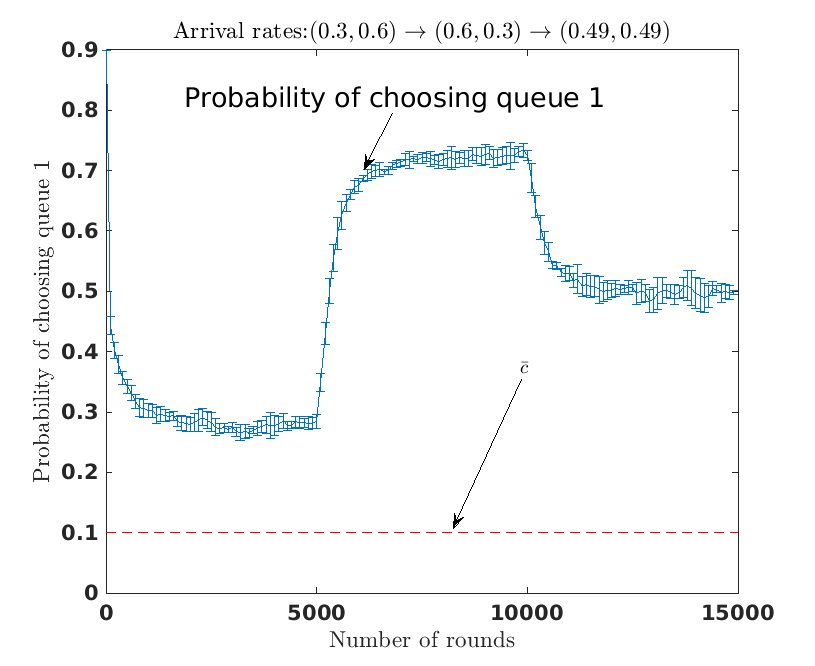}
		\caption{Softmax PG applied to a simple 2 queue system with time-varying arrival rates }
		\label{subfig:nonstatbernoulliqueues}
	\end{subfigure}%
	\caption{Softmax policy gradient algorithm applied to the path graph scheduling task and 2-queue example with non-stationary arrival rates. Each plot shows (a) the  learnt probabilities of various base controllers over time, and (b) the minimum probability $\bar{c_t}$ and $\bar{c}$ as described in the text.} 
	\label{fig:simulations3}
\end{figure*}

\subsection{State Dependent controllers -- Chain MDP} We consider a linear chain MDP as shown in Figure \ref{fig:linearChainMDP}. As evident from the figure, $\abs{\cS}=10$ and the learner has only two actions available, which are $\cA = \{\tt{left}, \tt{right}\}$. Hence the name `chain'. The numbers on the arrows represent the reward obtained with the transition. The initial state is $s_1$. We let $s_10$ as the terminal state. Let us define 2 base controllers, $K_1$ and $K_2$, as follows. 
\begin{align*}
K_1(\tt{left}\given s_j) &=\begin{cases}
1, & j\in[9]\backslash \{5\}\\
0.1, & j = 5\\
0, & j=10.
\end{cases}
\end{align*}
\begin{align*}
K_2(\tt{left}\given s_j) &=\begin{cases}
1, & j\in[9]\backslash \{6\}\\
0.1, & j = 6\\
0, & j=10.
\end{cases}
\end{align*}
and obviously $K_i(\tt{right}|s_j) = 1-K_i(\tt{left}|s_j)$ for $i=1,2$.
An improper mixture of the two controllers, i.e., $(K_1+K_2)/2$ is the optimal in this case. We show that our policy gradient indeed converges to the `correct' combination, see Figure ~\ref{fig:Chainfigure}. 
\begin{figure*}[tb]
	\centering
\begin{subfigure}[t]{0.45\textwidth}
	\includegraphics[scale=0.5]{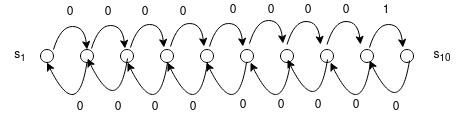}
	\caption{A chain MDP with 10 states.}
	\label{fig:linearChainMDP}
\end{subfigure}%
\hfill
\begin{subfigure}[t]{0.45\textwidth}
	\includegraphics[scale=0.5]{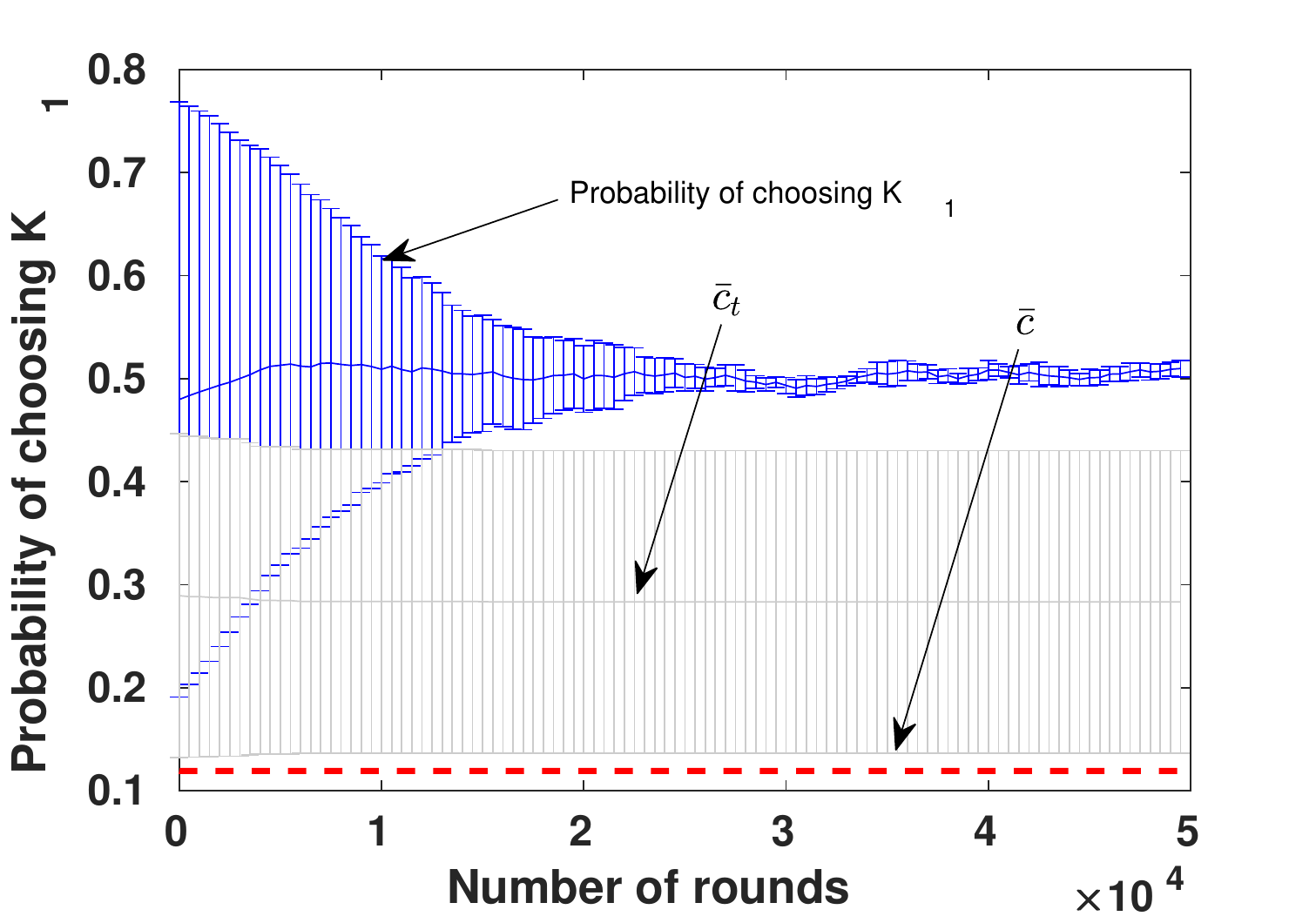}
	\caption{Softmax PG alg applied to the linear Chain MDP with various randomly chosen initial distribution. Plot shows probability of choosing controller $K_1$ averaged over $\#\tt{trials}$}.
	\label{fig:Chainfigure}
\end{subfigure}%
\caption{An example where the otpimum controller is state-dependent.}
\end{figure*}
We here provide an elementary calculation of our claim that the mixture $K_{\tt{mix}}:=(K_1+K_2)/2$ is indeed better than applying $K_1$ or $K_2$ for all time. We first analyze the value function due to $K_i, i=1,2$ (which are the same due to \emph{symmetry} of the problem and the probability values described).
\begin{align*}
V^{K_i}(s_1) &= \expect{\sum\limits_{t\geq0} \gamma^tr_t(a_t,s_t)}
=0.1\times \gamma^9+0.1\times 0.9\times 0.1\times \gamma^{11} + 0.1\times 0.9\times 0.1\times0.9\times 0.1\times \gamma^{13} \ldots\\
&=0.1\times \gamma^9 \left(1+\left(0.1\times 0.9 \gamma^2\right)+ \left(0.1\times 0.9 \gamma^2\right)^2+\ldots \right)=\frac{0.1\times \gamma^9}{1-0.1\times0.9\times \gamma^2}.
\end{align*}
We will next analyze the value if a true mixture controller i.e., $K_{\tt{mix}}$ is applied to the above MDP. The analysis is a little more intricate than the above. We make use of the following key observations, which are elementary but crucial. 
\begin{enumerate}
	\item Let $\tt{Paths}$ be the set of all sequence of states starting from $s_1$, which terminate at $s_{10}$ which can be generated under the policy $K_{\tt{mix}}$. Observe that 
	\begin{equation}
	V^{K_{\tt{mix}}}(s_1) = \sum\limits_{\underline{p}\in \tt{Paths}}\gamma^{{\tt{length}}(\underline{p})}\prob{\underline{p}}.1.
	\end{equation}
	Recall that reward obtained from the transition $s_9\to s_{10}$ is 1.
	\item Number of distinct paths with exactly $n$ loops: $2^n$.
	\item Probability of each such distinct path with $n$ cycles:
	\begin{align*}
	&=\underbrace{(0.55\times 0.45)\times (0.55\times 0.45)\times \ldots (0.55\times 0.45)}_{n \,\tt{ times}}\times 0.55\times 0.55\times \gamma^{9+2n}\\
	&=\left(0.55\right)^2\times \gamma^9\left(0.55\times 0.45\times \gamma^2 \right)^n.
	\end{align*}
	\item Finally, we put everything together to get:
	\begin{align*}
	V^{K_{\tt{mix}}}(s_1) = &\sum\limits_{n=0}^{\infty} 2^n\times \left(0.55\right)^2 \times \gamma^9\times \left(0.55\times 0.45\times \gamma^2 \right)^n\\
	&=\frac{\left(0.55\right)^2\times \gamma^9}{1-2\times 0.55\times0.45\times \gamma^2} > V^{K_i}(s_1).
	\end{align*}
\end{enumerate}
This shows that a mixture performs better than the constituent controllers. The plot shown in Fig. ~\ref{fig:Chainfigure} shows the Softmax PG algorithm (even with estimated gradients and value functions) converges to a (0.5,0.5) mixture correctly.

In all the simulations shown above we note that the empirical trajectories of $\bar{c}_t$ and $\bar{c}$ become flat after some initial rounds and are bounded away from zero. This supports our conjecture that the constant $c$ in Theorem \ref{thm:convergence of policy gradient} does not decay to zero, rendering the theorem statement non-vacuous. 

We further note that our algorithm performs well in challenging scenarios, \emph{even} with estimates of the value function and its gradient. Analyzing convergence with estimated gradients will form part of future work.

We thus see that our algorithm performs well in challenging scenarios, even with estimates of the value function and its gradient. Analyzing convergence with estimated gradients will form part of future work. 
\section{Conclusion and Discussion}
\label{sec:conc}

In this paper, we considered the problem of choosing the best mixture of controllers for Reinforcement Learning and made the first attempt at improper learning in the RL setting. One natural option in this case is to run each controller separately for a long time and then choose the best one based on estimated returns. While quite plausible, this \enquote{explore-then-exploit} approach is likely to be severely suboptimal in terms of rates. Moreover, it is not clear how to obtain the best mixture of base controllers as opposed to the best base controller. We recall the queuing example (Sec.~\ref{sec:motivating-queueing-example}) where the best mixture may be strictly superior each base controller.

This work opens up a plethora of avenues. One can consider a richer class of mixtures that can look at the current state and mix accordingly. For example, an attention model can be used to choose which controller to use, or other state-dependent models can be relevant. The learning architecture should not change dramatically since we are using gradients for the selection process which currently is simple, but may be replaced by a more complex architecture. Another example is to artificially force switching across controllers to occur less frequently than in every round. The can help create \emph{momentum} and allow the controlled process to 'mix' better, when using complex controllers.

Finally, in the present setting, the base controllers are fixed. It would be interesting to consider adding adaptive, or 'learning' controllers as well as the fixed ones. Including the base controllers can provide baseline performance below which the performance of the learning controllers would not drop.

\bibliography{main_arxiv}

\begin{thebibliography}{10}

\bibitem{abbasi-yadkori}
Yasin Abbasi-Yadkori and Csaba Szepesvári.
\newblock Regret bounds for the adaptive control of linear quadratic systems.
\newblock In {\em Proceedings of the 24th Annual Conference on Learning
  Theory}, volume~19 of {\em Proceedings of Machine Learning Research}, pages
  1--26, Budapest, Hungary, 09--11 Jun 2011. JMLR Workshop and Conference
  Proceedings.

\bibitem{Agarwal2020}
Alekh Agarwal, Sham~M Kakade, Jason~D Lee, and Gaurav Mahajan.
\newblock Optimality and approximation with policy gradient methods in markov
  decision processes.
\newblock In {\em Proceedings of Thirty Third Conference on Learning Theory},
  pages 64--66. PMLR, 2020.

\bibitem{hazan-etal20boosting-control-dynamical}
Naman Agarwal, Nataly Brukhim, Elad Hazan, and Zhou Lu.
\newblock Boosting for control of dynamical systems.
\newblock In Hal~Daumé III and Aarti Singh, editors, {\em Proceedings of the
  37th International Conference on Machine Learning}, volume 119 of {\em
  Proceedings of Machine Learning Research}, pages 96--103. PMLR, 13--18 Jul
  2020.

\bibitem{Exp3}
P.~{Auer}, N.~{Cesa-Bianchi}, Y.~{Freund}, and R.~E. {Schapire}.
\newblock Gambling in a rigged casino: The adversarial multi-armed bandit
  problem.
\newblock In {\em Proceedings of IEEE 36th Annual Foundations of Computer
  Science}, pages 322--331, 1995.

\bibitem{UCRL}
Peter Auer, Thomas Jaksch, and Ronald Ortner.
\newblock Near-optimal regret bounds for reinforcement learning.
\newblock In {\em Advances in Neural Information Processing Systems},
  volume~21, pages 89--96. Curran Associates, Inc., 2009.

\bibitem{bertsekas-gallager92computer-networks-neelyYY}
Dimitri Bertsekas and Robert Gallager.
\newblock {\em Data Networks (2nd Ed.)}.
\newblock Prentice-Hall, Inc., USA, 1992.

\bibitem{bertsekas11dynamic}
Dimitri~P Bertsekas.
\newblock Dynamic programming and optimal control 3rd edition, volume ii.
\newblock {\em Belmont, MA: Athena Scientific}, 2011.

\bibitem{Bhandari2019GlobalOG}
Jalaj Bhandari and D.~Russo.
\newblock Global optimality guarantees for policy gradient methods.
\newblock {\em ArXiv}, abs/1906.01786, 2019.

\bibitem{bolzern-etal08almost-sure-stability-ergodic-linear}
Paolo Bolzern, Patrizio Colaneri, and Giuseppe De~Nicolao.
\newblock Almost sure stability of stochastic linear systems with ergodic
  parameters.
\newblock {\em European Journal of Control}, 14(2):114--123, 2008.

\bibitem{Borkar}
Vivek~S. Borkar.
\newblock {\em {Stochastic Approximation}}.
\newblock Cambridge Books. Cambridge University Press, December 2008.

\bibitem{pmlr-v119-cassel20a}
Asaf Cassel, Alon Cohen, and Tomer Koren.
\newblock Logarithmic regret for learning linear quadratic regulators
  efficiently.
\newblock In {\em Proceedings of the 37th International Conference on Machine
  Learning}, volume 119 of {\em Proceedings of Machine Learning Research},
  pages 1328--1337. PMLR, 13--18 Jul 2020.

\bibitem{chen-hazan20black-box-control-linear-dynamical}
Xinyi Chen and Elad Hazan.
\newblock Black-box control for linear dynamical systems.
\newblock {\em arXiv preprint arXiv:2007.06650}, 2020.

\bibitem{Improper2}
Amit Daniely, Nati Linial, and Shai Shalev-Shwartz.
\newblock More data speeds up training time in learning halfspaces over sparse
  vectors.
\newblock In {\em Advances in Neural Information Processing Systems},
  volume~26, pages 145--153. Curran Associates, Inc., 2013.

\bibitem{Improper1}
Amit Daniely, Nati Linial, and Shai Shalev-Shwartz.
\newblock From average case complexity to improper learning complexity.
\newblock In {\em Proceedings of the Forty-Sixth Annual ACM Symposium on Theory
  of Computing}, STOC '14, page 441–448, New York, NY, USA, 2014. Association
  for Computing Machinery.

\bibitem{SarahDean}
Sarah {Dean}, Horia {Mania}, Nikolai {Matni}, Benjamin {Recht}, and Stephen
  {Tu}.
\newblock {On the Sample Complexity of the Linear Quadratic Regulator}.
\newblock {\em arXiv e-prints}, October 2017.

\bibitem{Denisov2020RegretAO}
D.~Denisov and N.~Walton.
\newblock Regret analysis of a markov policy gradient algorithm for multi-arm
  bandits.
\newblock {\em ArXiv}, abs/2007.10229, 2020.

\bibitem{Durrett11probability:theory}
Rick Durrett.
\newblock Probability: Theory and examples, 2011.

\bibitem{Flaxman05}
Abraham~D. Flaxman, Adam~Tauman Kalai, and H.~Brendan McMahan.
\newblock Online convex optimization in the bandit setting: Gradient descent
  without a gradient.
\newblock SODA '05, page 385–394, USA, 2005. Society for Industrial and
  Applied Mathematics.

\bibitem{pmlr-v40-Gopalan15}
Aditya Gopalan and Shie Mannor.
\newblock {Thompson Sampling for Learning Parameterized Markov Decision
  Processes}.
\newblock In {\em Proceedings of The 28th Conference on Learning Theory},
  volume~40 of {\em Proceedings of Machine Learning Research}, pages 861--898,
  Paris, France, 03--06 Jul 2015. PMLR.

\bibitem{Ibrahimi}
Morteza Ibrahimi, Adel Javanmard, and Benjamin Roy.
\newblock Efficient reinforcement learning for high dimensional linear
  quadratic systems.
\newblock In {\em Advances in Neural Information Processing Systems},
  volume~25, pages 2636--2644. Curran Associates, Inc., 2012.

\bibitem{khalil15nonlinear-control}
Hassan~K. Khalil.
\newblock {\em Nonlinear Control}.
\newblock Pearson, 2015.

\bibitem{KocakExpIX}
Tom\'{a}\v{s} Koc\'{a}k, Gergely Neu, Michal Valko, and Remi Munos.
\newblock Efficient learning by implicit exploration in bandit problems with
  side observations.
\newblock In {\em Advances in Neural Information Processing Systems},
  volume~27, pages 613--621. Curran Associates, Inc., 2014.

\bibitem{LAI19854}
T.L Lai and Herbert Robbins.
\newblock Asymptotically efficient adaptive allocation rules.
\newblock {\em Advances in Applied Mathematics}, 6(1):4 -- 22, 1985.

\bibitem{lattimoreszepesvari2020}
Tor Lattimore and Csaba Szepesvári.
\newblock {\em Bandit Algorithms}.
\newblock Cambridge University Press, 2020.

\bibitem{li-etal21softmax-policy-gradient-exponential-converge-time}
Gen Li, Yuting Wei, Yuejie Chi, Yuantao Gu, and Yuxin Chen.
\newblock Softmax policy gradient methods can take exponential time to
  converge.
\newblock {\em arXiv preprint arXiv:2102.11270}, 2021.

\bibitem{EXP-WTS}
N.~Littlestone and M.~K. Warmuth.
\newblock The weighted majority algorithm.
\newblock {\em Inform. Comput.}, 108(2):212--261, 1994.

\bibitem{lojasiewicz1963equations}
S~{\L}ojasiewicz.
\newblock Les {\'e}quations aux d{\'e}riv{\'e}es partielles (paris, 1962),
  1963.

\bibitem{MADDPG}
Ryan Lowe, Yi~Wu, Aviv Tamar, Jean Harb, Pieter Abbeel, and Igor Mordatch.
\newblock Multi-agent actor-critic for mixed cooperative-competitive
  environments, 2020.

\bibitem{mania2019certainty}
Horia Mania, Stephen Tu, and Benjamin Recht.
\newblock Certainty equivalence is efficient for linear quadratic control.
\newblock In {\em Advances in Neural Information Processing Systems},
  volume~32, pages 10154--10164. Curran Associates, Inc., 2019.

\bibitem{Mei2020}
Jincheng Mei, Chenjun Xiao, Csaba Szepesvari, and Dale Schuurmans.
\newblock On the global convergence rates of softmax policy gradient methods.
\newblock In {\em Proceedings of the 37th International Conference on Machine
  Learning}, pages 6820--6829. PMLR, 2020.

\bibitem{LQF2016}
A.~Mohan, A.~Chattopadhyay, and A.~Kumar.
\newblock Hybrid mac protocols for low-delay scheduling.
\newblock In {\em 2016 IEEE 13th International Conference on Mobile Ad Hoc and
  Sensor Systems (MASS)}, pages 47--55, Los Alamitos, CA, USA, oct 2016. IEEE
  Computer Society.

\bibitem{Mohan2020ThroughputOD}
Avinash Mohan, Aditya Gopalan, and Anurag Kumar.
\newblock Throughput optimal decentralized scheduling with single-bit state
  feedback for a class of queueing systems.
\newblock {\em ArXiv}, abs/2002.08141, 2020.

\bibitem{Neu15a}
Gergely Neu.
\newblock Explore no more: Improved high-probability regret bounds for
  non-stochastic bandits.
\newblock In {\em Advances in Neural Information Processing Systems},
  volume~28, pages 3168--3176. Curran Associates, Inc., 2015.

\bibitem{OsbandTS_NIPS2013}
Ian Osband, Daniel Russo, and Benjamin Van~Roy.
\newblock (more) efficient reinforcement learning via posterior sampling.
\newblock In {\em Advances in Neural Information Processing Systems},
  volume~26, pages 3003--3011. Curran Associates, Inc., 2013.

\bibitem{Ouyang2017LearningUM}
Y.~Ouyang, Mukul Gagrani, A.~Nayyar, and R.~Jain.
\newblock Learning unknown markov decision processes: A thompson sampling
  approach.
\newblock In {\em NIPS}, 2017.

\bibitem{ABScitation}
Mircea-Bogdan Radac and Radu-Emil Precup.
\newblock Data-driven model-free slip control of anti-lock braking systems
  using reinforcement q-learning.
\newblock {\em Neurocomput.}, 275(C):317–329, January 2018.

\bibitem{Rummery94on-lineq-learning-SARSA}
G.~A. Rummery and M.~Niranjan.
\newblock On-line q-learning using connectionist systems.
\newblock Technical report, 1994.

\bibitem{TRPO}
John Schulman, Sergey Levine, Pieter Abbeel, Michael Jordan, and Philipp
  Moritz.
\newblock Trust region policy optimization.
\newblock In Francis Bach and David Blei, editors, {\em Proceedings of the 32nd
  International Conference on Machine Learning}, volume~37 of {\em Proceedings
  of Machine Learning Research}, pages 1889--1897, Lille, France, 07--09 Jul
  2015. PMLR.

\bibitem{PPO}
John Schulman, Filip Wolski, Prafulla Dhariwal, Alec Radford, and Oleg Klimov.
\newblock Proximal policy optimization algorithms, 2017.

\bibitem{Shani2020AdaptiveTR}
Lior Shani, Yonathan Efroni, and Shie Mannor.
\newblock Adaptive trust region policy optimization: Global convergence and
  faster rates for regularized mdps.
\newblock {\em ArXiv}, abs/1909.02769, 2020.

\bibitem{AlphaGo}
David Silver, Aja Huang, Christopher~J. Maddison, Arthur Guez, Laurent Sifre,
  George van~den Driessche, Julian Schrittwieser, Ioannis Antonoglou, Veda
  Panneershelvam, Marc Lanctot, Sander Dieleman, Dominik Grewe, John Nham, Nal
  Kalchbrenner, Ilya Sutskever, Timothy Lillicrap, Madeleine Leach, Koray
  Kavukcuoglu, Thore Graepel, and Demis Hassabis.
\newblock Mastering the game of go with deep neural networks and tree search.
\newblock {\em Nature}, 529:484--503, 2016.

\bibitem{AlphaGozero}
Satinder {Singh}, Andy {Okun}, and Andrew {Jackson}.
\newblock {Artificial intelligence: Learning to play Go from scratch}.
\newblock 550(7676):336--337, October 2017.

\bibitem{Sutton1998}
Richard~S. Sutton and Andrew~G. Barto.
\newblock {\em Reinforcement Learning: An Introduction}.
\newblock The MIT Press, second edition, 2018.

\bibitem{Sutton2000}
Richard~S Sutton, David McAllester, Satinder Singh, and Yishay Mansour.
\newblock Policy gradient methods for reinforcement learning with function
  approximation.
\newblock In {\em Advances in Neural Information Processing Systems},
  volume~12, pages 1057--1063. MIT Press, 2000.

\bibitem{tassiulas-ephremides92stability-queueing}
L.~{Tassiulas} and A.~{Ephremides}.
\newblock Stability properties of constrained queueing systems and scheduling
  policies for maximum throughput in multihop radio networks.
\newblock {\em IEEE Transactions on Automatic Control}, 37(12):1936--1948,
  1992.

\end{thebibliography}
\bibliographystyle{plain}
\newpage
\onecolumn
\appendix
\section{Glossary of Symbols}
\begin{enumerate}
    \item $\cS$: State space
    \item $\cA:$ Action space
    \item $S:$ Cardinality of $\cS$
    \item $A:$ Cardinality of $\cA$
    \item $M:$ Number of controllers
    \item $K_i$ Controller $i$, $i=1,\cdots,M$. For finite SA space MDP, $K_i$ is a matrix of size $S\times A$, where each row is a probability distribution over the actions. 
    \item $\cC:$ Given collection of $M$ controllers.
    \item $\cI_{soft}(\cC):$ Improper policy class setup by the learner.
    \item  $\theta\in \Real^M:$ Parameter assigned to the controllers to controllers, representing weights, updated each round by the learner. 
    \item $\pi(.):$ Probability of choosing controllers
    \item $\pi(.\given s)$ Probability of choosing action given state $s$. Note that in our setting, given $\pi(.)$ over controllers (see previous item) and the set of controllers, $\pi(.\given s)$ is completely defined, i.e., $\pi(a\given s)=\sm \pi(m)K_m(s,a)$. Hence we use simply $\pi$ to denote the policy followed, whenever the context is clear.
    \item $r(s,a):$ Immediate (one-step) reward obtained if action $a$ is played in state $s$.
    \item $\tP(s'\given s,a)$ Probability of transitioning to state $s'$ from state $s$ having taken action $a$.
    \item $V^{\pi}(\rho):=\mathbb{E}_{s_0\sim \rho}\left[V^\pi(s_0) \right]  = \mathbb{E}^{\pi}_{\rho}\sum_{t=0}^{\infty}\gamma^tr(s_t,a_t)$ Value function starting with initial distribution $\rho$ over states, and following policy $\pi$. 
    \item $Q^\pi(s,a):= \expect{r(s,a) + \gamma\sum\limits_{s'\in \cS}\tP(s'\given s,a)V^\pi(s')}$. 
    \item ${\Q}^\pi(s,m):= \expect{\sum\limits_{a\in \cA}K_m(s,a)r(s,a) + \gamma\sum\limits_{s'\in \cS}\tP(s'\given s,a)V^\pi(s')}$.
    \item $A^\pi(s,a):=Q^\pi(s,a)-V^\pi(s)$
    \item $\tilde{A}(s,m):=\Q^\pi(s,m)-V^\pi(s)$.
    \item $d_\nu^\pi:=\mathbb{E}_{s_0\sim \nu}\left[(1-\gamma)\sum\limits_{t=0}^\infty \prob{s_t=s\given s_o,\pi,\tP} \right]$. Denotes a distribution over the states, is called the ``discounted state visitation measure"
    \item $c : \inf_{t\geq 1}\min\limits_{m\in \{m'\in[M]:\pi^*(m') >0\}}\pi_{\theta_t}(m)$.
    \item $\norm{\frac{d_\mu^{\pi^*}}{\mu}}_\infty = \max_s \frac{d_\mu^{\pi^*}(s)}{\mu(s)}$.
    \item $\norm{\frac{1}{\mu}}_\infty = \max_s \frac{1}{\mu(s)}$.
\end{enumerate}
\section{Details of simulations settings for the inverted pendulum system}
In this section we supply the adjustments we made for specifically for the cartpole experiments. We first mention that we scale down the estimated gradient of the value function returned by the GradEst subroutine (Algorithm ~\ref{alg:gradEst}) (in the inverted pendulum simulation only). The scaling that worked for us is $\frac{10}{\norm{\widehat{{\nabla}V^{\pi}(\mu)}}}$.

Next, we provide the values of the constants that were described in Sec. ~\ref{sec:motivating-cartpole-example} in Table ~\ref{table:hyperparameters of inverted pend}.
\begin{table}[H]
\centering
\begin{tabular}{c c}
\hline\hline
Parameter & Value \\
\hline
 Gravity $g$  &    9.8\\
 Mass of pole $m_p$  &  0.1\\
 Length of pole $l$  & 1\\
 Mass of cart $m_k$  &  1\\
 Total mass $m_t$  & 1.1\\
 \hline
\end{tabular}
\caption{Values of the hyperparameters used for the Inverted Pendulum simulation
}
\label{table:hyperparameters of inverted pend}
\end{table}
\section{Non-concavity of the Value function}\label{appendix:nonconcavity of V}
We show here that the value function, over improper mixtures, is in general non-concave, and hence standard convex optimization techniques for maximization may get stuck in local optima. We note once again that this is \emph{different} from the non-concavity of $V^\pi$ when the parameterization is over the entire state-action space, i.e., $\Real^{S\times A}$. 

We show here that for both SoftMax and direct parameterization, the value function is non-concave where, by \enquote{direct} parameterization we mean that the controllers $K_m$ are parameterized by weights $\theta_m\in \Real$, where $\theta_i\geq 0,~\forall i\in[M]$ and $\sum\limits_{i=1}^M\theta_i=1$.
A similar argument holds for softmax parameterization, which we outline in Note \ref{remark:nonconcavity for softmax}.
\begin{lemma}(Non-concavity of Value function)\label{lemma:nonconcavity of V}
There is an MDP and a set of controllers, for which the maximization problem  of the value function (i.e. \eqref{eq:main optimization problem}) is non-concave for SoftMax parameterization, i.e., $\theta\mapsto V^{\pi_{\theta}}$ is non-concave.
\end{lemma}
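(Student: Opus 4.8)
The plan is to exhibit an explicit counterexample: a finite MDP together with $M=2$ base controllers for which $\theta\mapsto V^{\pi_\theta}(\rho)$ fails to be concave. Since a function on $\mathbb{R}^M$ is concave if and only if its restriction to every line is concave, it suffices to produce one line in $\theta$-space along which $V^{\pi_\theta}$ is non-concave. For $M=2$ this reduction is especially clean, because the softmax weight $\pi_\theta(1)=e^{\theta_1}/(e^{\theta_1}+e^{\theta_2})$ depends only on $\delta:=\theta_1-\theta_2$; hence $V^{\pi_\theta}(\rho)$ is a function of the single scalar $\delta$, with mixing probability $p=\sigma(\delta):=e^{\delta}/(1+e^{\delta})$. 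Restricting to the line $\theta_2=0$ and writing $h(\delta):=V^{\pi_\theta}(\rho)$, I only need to show that $h$ is non-concave in $\delta\in\mathbb{R}$.

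The second step is to make the construction symmetric so that the check collapses to a single inequality. Writing $P_{\pi_\theta}=p\,P_{K_1}+(1-p)P_{K_2}$ and $r_{\pi_\theta}=p\,r_{K_1}+(1-p)r_{K_2}$, both affine in $p$, we have $V^{\pi_\theta}(\rho)=\rho^\top(I-\gamma P_{\pi_\theta})^{-1}r_{\pi_\theta}$, a ratio of polynomials in $p$ composed with the smooth map $\sigma$; so $h$ is smooth. I would pick the two controllers to be mirror images under a symmetry of the MDP, forcing $V(p)=V(1-p)$ and hence making $h(\delta)=h(-\delta)$ an \emph{even} function. For an even continuous function, non-concavity follows from a single three-point chord violation at the centre: taking $\delta=-a,0,a$, the chord value at $0$ equals $\tfrac12\bigl(h(-a)+h(a)\bigr)=h(a)$, so concavity would require $h(0)\ge h(a)$. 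Thus it suffices to find a finite $a$ with $h(0)<h(a)$, and by continuity (with $h(a)\to V(1)$ as $a\to\infty$) this holds as soon as the balanced mixture is strictly worse than a pure controller, i.e. $V(1/2)<V(1)=V(0)$.

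It therefore remains only to design a symmetric MDP in which randomising between the two controllers strictly hurts. A convenient candidate is a small ``two-routes'' MDP: from a start state $s_0$, controller $K_1$ commits to a left route and $K_2$ to a right route, each route leading deterministically to a common rewarding absorbing state; crucially, on the left route $K_2$ plays a move that bounces the state back to $s_0$, and symmetrically on the right route for $K_1$. Each pure controller then reaches the reward along its own route without interruption, whereas the even mixture repeatedly mismatches route and move, stalling the trajectory and strictly lowering the discounted return. This yields exactly $V(1/2)<V(0)=V(1)$, and the construction is manifestly symmetric, so $h$ is even and the chord criterion of the previous paragraph applies.

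The step I expect to be the main obstacle is this last one: fixing explicit transition and reward values for which $V(1/2)<V(1)$ can be certified by \emph{exact} arithmetic (summing the relevant geometric series for each trajectory class) rather than numerically, while keeping the computation short enough to present. I would also record that the direct-parameterization claim comes from the \emph{same} example with even less work, since there one verifies the chord inequality directly in $p$ without composing with $\sigma$; the softmax case needs only the elementary observation that evenness in $\delta$ makes $\delta=0$ the relevant chord midpoint, which is the content deferred to Note~\ref{remark:nonconcavity for softmax}.
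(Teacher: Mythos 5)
Your proposal is correct, but it takes a genuinely different route from the paper's. The paper's counterexample is a $5$-state MDP in which the only reward sits on the transition $s_2\rightarrow s_4$, so the value factors as a product of two action probabilities, $V^{\pi}(s_1)=\pi(a_1|s_1)\pi(a_2|s_2)\,r$; with its two mirror-image controllers the value along the direct-parameterization segment becomes the strictly convex parabola $\left(3/4-p/2\right)^2 r$, and non-concavity is read off from a single midpoint-chord violation ($r/32+9r/32>r/4$). For softmax, the paper picks the specific points $\theta^{(1)}=(\log(1-\epsilon),\log\epsilon)$ and $\theta^{(2)}=(\log\epsilon,\log(1-\epsilon))$, whose softmax images are the near-pure mixtures and whose $\theta$-midpoint maps exactly to the uniform mixture, then checks $\left(1/4+\epsilon/2\right)^2+\left(3/4-\epsilon/2\right)^2>1/2$. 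Your argument instead exploits that for $M=2$ the softmax depends only on $\delta=\theta_1-\theta_2$, reduces to a one-dimensional function $h$, uses an MDP symmetry to make $h$ even, and thereby replaces the choice of clever $\theta$-points by the single comparison $V(1/2)<V(1)=V(0)$; your "two-routes" MDP delivers this because mixing causes route/move mismatches. The step you flagged as the obstacle is in fact routine: with reward $r$ on the "continue" action at either route state and a bounce back to $s_0$ otherwise, one gets
\begin{equation*}
V(p)=\frac{\gamma\left(p^2+(1-p)^2\right)r}{1-2p(1-p)\gamma^2},\qquad\text{so}\qquad V(1/2)=\frac{\gamma r}{2-\gamma^2}<\gamma r=V(0)=V(1),
\end{equation*}
which certifies the chord violation exactly. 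What each approach buys: the paper's example is maximally explicit, requires only arithmetic on products of probabilities, and the same MDP serves both parameterizations; yours is more structural, the evenness trick makes the softmax case automatic rather than requiring a bespoke choice of $\theta$'s, and it exhibits a qualitatively stronger failure mode — the uniform mixture is strictly worse than \emph{both} pure controllers, whereas in the paper's example the mixture's value lies between the two pure values and merely falls below the chord. One small caution: your limiting step $h(a)\to V(1)$ needs continuity of $V$ in $p$ and of $\sigma$, which you correctly justify via $V(p)=\rho^\top(I-\gamma P_p)^{-1}r_p$ with $P_p$, $r_p$ affine in $p$ and $\gamma<1$.
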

\begin{proof}
\begin{figure}
    \centering
\includegraphics[scale=0.3]{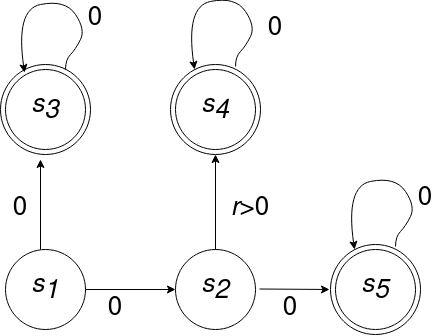}    
\caption{An example of an MDP with controllers as defined in \eqref{eqn:controllers-for-non-concavity-counterexample} having a non-concave value function. The MDP has $S=5$ states and $A=2$ actions. States $s_3,s_4\text{ and }s_5$ are terminal states. The only transition with nonzero reward is $s_2\rightarrow s_4.$}
    \label{fig:example showing non-concavity of the value function}
\end{figure}

Consider the MDP shown in Figure \ref{fig:example showing non-concavity of the value function} with 5 states, $s_1,\ldots,s_5$ . States $s_3,s_4$ and $s_5$ are terminal states. In the figure we also show the allowed transitions and the rewards obtained by those transitions. Let the action set $\cA$ consists of only three actions $\{a_1,a_2,a_3\} \equiv \{\tt{right}, \tt{up}, \tt{null}\}$, where 'null' is a dummy action included to accommodate the three terminal states.  Let us consider the case when $M=2$. The two controllers $K_i\in \Real^{S\times A}$, $i=1,2$ (where each row is probability distribution over $\cA$) are shown below.
\begin{equation}
    K_1 = \begin{bmatrix}1/4 & 3/4 & 0\\ 3/4 & 1/4 & 0\\ 0 & 0 & 1\\ 0 & 0 & 1\\ 0 & 0 & 1 \end{bmatrix}, K_2 = \begin{bmatrix}3/4 & 1/4 & 0\\ 1/4 & 3/4 & 0\\ 0 & 0 & 1\\ 0 & 0 & 1\\ 0 & 0 & 1 \end{bmatrix}.    
    \label{eqn:controllers-for-non-concavity-counterexample}
\end{equation}
Let $\theta^{(1)} = (1,0)\transpose$ and $\theta^{(2)} = (0,1)\transpose$. Let us fix the initial state to be $s_1$. Since a nonzero reward is only earned during a $s_2\rightarrow s_4$ transition, we note for any policy $\pi:\cA\to\cS$ that $V^\pi(s_1)=\pi(a_1|s_1)\pi(a_2|s_2)r$.
We also have,
\[(K_1 + K_2)/2 = \begin{bmatrix}1/2 & 1/2 & 0\\ 1/2 & 1/2 & 0\\ 0 & 0 & 1\\ 0 & 0 & 1\\ 0 & 0 & 1 \end{bmatrix}.\]

We will show that $\frac{1}{2}V^{\pi_{\theta^{(1)}}}+\frac{1}{2}V^{\pi_{\theta^{(2)}}} > V^{\pi_{\left(\theta^{(1)}+\theta^{(2)}\right)/2}}$.
\newline We observe the following.
\begin{align*}
    V^{\pi_{\theta^{(1)}}}(s_1) &= V^{K_1}(s_1) = (1/4).(1/4).r=r/16.\\
    V^{\pi_{\theta^{(2)}}}(s_1) &= V^{K_2}(s_1) = (3/4).(3/4).r=9r/16.
\end{align*}
where $V^{K}(s)$ denotes the value obtained by starting from state $s$ and following a controller matrix $K$ for all time.

Also, on the other hand we have, 
\[V^{\pi_{\left(\theta^{(1)}+\theta^{(2)}\right)/2}} = V^{\left(K_1+K_2\right)/2}(s_1) = (1/2).(1/2).r=r/4. \]
Hence we see that, \[\frac{1}{2}V^{\pi_{\theta^{(1)}}}+\frac{1}{2}V^{\pi_{\theta^{(2)}}} = r/32+9r/32 = 10r/32 =1.25r/4 > r/4 = V^{\pi_{\left(\theta^{(1)}+\theta^{(2)}\right)/2}}. \]
This shows that $\theta\mapsto V^{\pi_{\theta}}$ is non-concave, which concludes the proof for direct parameterization.
\begin{remark}\label{remark:nonconcavity for softmax}
For softmax parametrization, we choose the same 2 controllers $K_1,K_2$ as above. Fix some $\epsilon\in (0,1)$ and set $\theta^{(1)}= \left(\log(1-\epsilon), \log\epsilon \right)\transpose$ and $\theta^{(2)}= \left(\log\epsilon, \log(1-\epsilon) \right)\transpose$. A similar calculation using softmax projection, and using the fact that $\pi_\theta(a|s) = \sum\limits_{m=1}^M\pi_\theta(m)K_m(s,a)$, shows that under $\theta^{(1)}$ we follow matrix $(1-\epsilon)K_1+\epsilon K_2$, which yields a Value of $\left(1/4+\epsilon/2\right)^2r$. Under $\theta^{(2)}$ we follow matrix $\epsilon K_1+(1-\epsilon) K_2$, which yields a Value of $\left(3/4-\epsilon/2\right)^2r$. On the other hand,  $(\theta^{(1)}+\theta^{(2)})/2$ amounts to playing the matrix $(K_1+K_2)/2$, yielding the a value of $r/4$, as above.  One can verify easily that $\left(1/4+\epsilon/2\right)^2r + \left(3/4-\epsilon/2\right)^2r > 2.r/4$. This shows the non-concavity of $\theta\mapsto V^{\pi_{\theta}}$ under softmax parameterization.
\end{remark}
\end{proof}
\section{Proof details for Bandit-over-bandits}\label{appendix:proofs for MABs}

In this section we consider the instructive sub-case when $S=1$, which is also called the Multiarmed Bandit. We provide regret bounds for two cases (1) when the value gradient $\frac{dV^{\pi_{\theta_t}}(\mu)}{d\theta^t}$ (in the gradient update) is available in each round, and (2) when it needs to be estimated.

Note that each controller in this case, is a probability distribution over the $A$ arms of the bandit. We consider the scenario where the agent at each time $t\geq 1$, has to choose a probability distribution $K_{m_t}$ from a set of $M$ probability distributions over actions $\cA$. She then plays an action $a_t\sim K_{m_t}$. This is different from the standard MABs because the learner cannot choose the actions directly, instead chooses from a \emph{given} set of controllers, to play actions. 
Note the $V$ function has no argument as $S=1$.
Let $\mu\in [0,1]^A$ be the mean vector of the arms $\cA$. The value function for any given mixture $\pi\in \cP([M])$, 
\begin{align}\label{eq:bandits-value function}
    V^\pi &\bydef \expect{\sum\limits_{t=0}^\infty \gamma^tr_t\given \pi } = \sum\limits_{t=0}^\infty \gamma^t\expect{r_t\given \pi}\nonumber\\
    &=\sum\limits_{t=0}^\infty \gamma^t\sum\limits_{a\in \cA}\sm  \pi(m)K_m(a)\mu_a. \nonumber\\
    &= \frac{1}{1-\gamma} \sm \pi_m \mu\transpose K_m = \frac{1}{1-\gamma} \sm \pi_m \mathfrak{r}^\mu_m.
\end{align}
where the interpretation of $\kr_m^\mu$ is that it is the mean reward one obtains if the controller $m$ is chosen at any round $t$. 
 Since $V^{\pi}$ is linear in $\pi$, the maximum is attained at one of the base controllers $\pi^*$ puts mass 1 on $m^*$ where
 $m^*:= \argmax\limits_{m\in [M]} V^{K_m},$
and $V^{K_m}$ is the value obtained using $K_m$ for all time. In the sequel, we assume $\Delta_i\bydef \mathfrak{r}^\mu_{m^*}-\mathfrak{r}^\mu_i>0$.

\subsection{Proofs for MABs with perfect gradient knowledge}

With access to the exact value gradient at each step, we have the following result, when Softmax PG (Algorithm ~\ref{alg:mainPolicyGradMDP}) is applied for the bandits-over-bandits case.
\begin{restatable}{theorem}{convergencebandits}\label{thm:convergence for bandit}
With $\eta=\frac{2(1-\gamma)}{5}$ and with $\theta^{(1)}_m=1/M$ for all $m\in [M]$, with the availability for true gradient, we have $\forall t\geq 1$,
\[V^{\pi^*}-V^{\pi_{\theta_t}} \leq \frac{5}{1-\gamma} \frac{M^2}{t}.\]
\end{restatable}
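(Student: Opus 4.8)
The plan is to follow the same smoothness-plus-\L ojasiewicz recipe underlying Theorem~\ref{thm:convergence of policy gradient}, but to exploit the \emph{linearity} of the bandit value function $V^{\pi_\theta}=\frac{1}{1-\gamma}\sum_m \pi_\theta(m)\mathfrak{r}_m$ (from \eqref{eq:bandits-value function}) to sharpen every constant and, crucially, to replace the opaque constant $c$ by the explicit $1/M$. The first step is to compute the gradient in closed form. Using the softmax Jacobian $\partial\pi_\theta(m)/\partial\theta_k=\pi_\theta(m)(\mathbbm{1}\{m=k\}-\pi_\theta(k))$, one obtains $\partial V^{\pi_\theta}/\partial\theta_k=\frac{1}{1-\gamma}\pi_\theta(k)\big(\mathfrak{r}_k-(1-\gamma)V^{\pi_\theta}\big)$. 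Since $\mathfrak{r}_{m^*}=(1-\gamma)V^{\pi^*}$, the $m^*$-coordinate collapses to the clean expression $\partial V^{\pi_\theta}/\partial\theta_{m^*}=\pi_\theta(m^*)\big(V^{\pi^*}-V^{\pi_\theta}\big)$, which drives the entire argument.

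Next I would specialize the two structural ingredients. Because $S=1$ removes all state-transition contributions, $\theta\mapsto V^{\pi_\theta}$ is a scaled softmax expectation of rewards in $[0,1]$, and the $S=1$ instance of Lemma~\ref{lemma:smoothness of V} (equivalently a direct Hessian bound) shows it is $\frac{5}{2(1-\gamma)}$-smooth; thus $\eta=\frac{2(1-\gamma)}{5}$ is exactly the inverse-smoothness step $1/\beta$. For the \L ojasiewicz inequality I would simply keep the $m^*$-coordinate and discard the rest, giving $\big\|\nabla_\theta V^{\pi_\theta}\big\|_2\ge \pi_\theta(m^*)\big(V^{\pi^*}-V^{\pi_\theta}\big)$, which is the single-optimal-controller, $S=1$ form of Lemma~\ref{lemma:nonuniform lojaseiwicz inequality}.

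The crux of the proof, and what I expect to be the main obstacle, is turning $\pi_t(m^*)$ into the explicit lower bound $1/M$. I would prove the invariant $\theta_{t,m^*}=\max_k\theta_{t,k}$ for all $t$ by induction. It holds at $t=1$ by the uniform initialization. For the inductive step, writing $g_k$ for the $k$-th gradient coordinate and $\bar{\mathfrak r}:=(1-\gamma)V^{\pi_t}=\sum_m\pi_t(m)\mathfrak r_m$, the update gives $\theta_{t+1,m^*}-\theta_{t+1,k}=(\theta_{t,m^*}-\theta_{t,k})+\eta(g_{m^*}-g_k)$, where $g_{m^*}-g_k=\frac{1}{1-\gamma}\big[\pi_t(m^*)(\mathfrak r_{m^*}-\bar{\mathfrak r})-\pi_t(k)(\mathfrak r_k-\bar{\mathfrak r})\big]$. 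Using $\mathfrak r_{m^*}=\max_m\mathfrak r_m\ge\bar{\mathfrak r}$, $\mathfrak r_{m^*}\ge\mathfrak r_k$, and $\pi_t(m^*)\ge\pi_t(k)$ (the hypothesis in softmax form), a two-line case check on whether $\mathfrak r_k$ lies above or below $\bar{\mathfrak r}$ shows $g_{m^*}-g_k\ge0$. Hence the gap never shrinks, the invariant persists, and $\pi_t(m^*)=\max_k\pi_t(k)\ge\frac1M\sum_k\pi_t(k)=\frac1M$. Note this requires no step-size restriction, since a nonnegative quantity is being added; the bandit structure (comparing only against the \emph{global} maximizer, never an arbitrary pair of below-average controllers) is exactly what makes this clean, whereas the general-MDP constant $c$ resists such a bound.

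Finally I would assemble the recursion. The $\beta$-smooth ascent lemma with $\eta=1/\beta$ gives, for $\delta_t:=V^{\pi^*}-V^{\pi_t}$, the inequality $\delta_{t+1}\le\delta_t-\frac{1}{2\beta}\big\|\nabla_\theta V^{\pi_t}\big\|_2^2$; substituting the \L ojasiewicz bound together with $\pi_t(m^*)\ge1/M$ yields \[ \delta_{t+1}\le\delta_t-\frac{1-\gamma}{5M^2}\,\delta_t^2. \] Since the ascent lemma makes $\delta_t$ nonincreasing and $\delta_1\le\frac{1}{1-\gamma}$, the elementary bound $\frac{1}{1-x}\ge1+x$ converts this into $\frac{1}{\delta_{t+1}}\ge\frac{1}{\delta_t}+\frac{1-\gamma}{5M^2}$. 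Telescoping, and using $\frac{1}{\delta_1}\ge1-\gamma\ge\frac{1-\gamma}{5M^2}$ to absorb the base term, gives $\frac{1}{\delta_t}\ge\frac{(1-\gamma)\,t}{5M^2}$, i.e. $V^{\pi^*}-V^{\pi_{\theta_t}}\le\frac{5}{1-\gamma}\frac{M^2}{t}$, as claimed. The only genuinely new work beyond the Theorem~\ref{thm:convergence of policy gradient} template is the invariant of the third paragraph; everything else is the bandit specialization of constants.
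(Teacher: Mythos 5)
Your proof is correct, and it follows the paper's three-ingredient recipe — the closed-form softmax gradient, the $\frac{5}{2(1-\gamma)}$-smoothness bound, the single-coordinate {\L}ojasiewicz inequality $\|\nabla_\theta V^{\pi_\theta}\|_2\ge \pi_\theta(m^*)\left(V^{\pi^*}-V^{\pi_\theta}\right)$, and a $1/t$ recursion — but your treatment of the crucial step, the explicit bound $\inf_{t\ge 1}\pi_t(m^*)\ge 1/M$, is genuinely leaner than the paper's. The paper proves this via invariant-set machinery: it defines $\mathcal{S}_1$ (the $m^*$-gradient coordinate dominates), $\mathcal{S}_2$ (the $m^*$-probability dominates) and $\mathcal{S}_3$ ($\pi(m^*)\ge C$), shows $\mathcal{S}_1$ is invariant under the ascent update — including a delicate case $\pi_t(m^*)<\pi_t(m)$ handled through $1-\exp(\theta_t(m^*)-\theta_t(m))$ and the gradient-ascent lemma — shows $\pi_t(m^*)$ is non-decreasing on $\mathcal{S}_1$, shows $\mathcal{S}_2,\mathcal{S}_3\subset\mathcal{S}_1$, and even invokes the asymptotic convergence result of Agarwal et al.\ to obtain a finite hitting time of $\mathcal{S}_3$; only at the end does it observe that uniform initialization places $\theta_1\in\mathcal{S}_2$, collapsing all of this to $t_0=1$. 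Your induction instead maintains the single invariant $\theta_{t,m^*}=\max_k\theta_{t,k}$: the two-case check on whether $\mathfrak{r}_k$ lies above or below $\bar{\mathfrak{r}}$ shows the $m^*$-gradient coordinate dominates whenever the invariant holds (in essence the paper's Claim 3 combined with Case (a) of Claim 2), so the parameter gap can only grow; since uniform initialization starts inside the invariant set, nothing else is needed — no $\mathcal{S}_3$, no hitting-time argument, no appeal to an external asymptotic-convergence result. The remaining difference is cosmetic: you solve $\delta_{t+1}\le\delta_t-\frac{1-\gamma}{5M^2}\delta_t^2$ by inverse telescoping, $1/\delta_{t+1}\ge 1/\delta_t+\frac{1-\gamma}{5M^2}$ (valid since $\frac{1-\gamma}{5M^2}\delta_t\le\frac{1}{5M^2}<1$), absorbing the base term via $1/\delta_1\ge 1-\gamma$, whereas the paper runs an induction with the monotone map $g(x)=x-x^2/\phi$; both routes yield the stated $\frac{5M^2}{(1-\gamma)t}$ bound.
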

Also, defining regret for a time horizon of $T$ rounds as 
\begin{equation}\label{eq:regret definition MAB}
    \mathcal{R}(T):= \sum\limits_{t=1}^T V^{\pi^*} - V^{\pi_{\theta_t}},
\end{equation}
we show as a corollary to Thm.~\ref{thm:convergence for bandit} that,
\begin{corollary}\label{cor:regret true gradient MAB}
\[\mathcal{R}(T)\leq \min\left\{ \frac{5M^2}{1-\gamma}{\color{blue}\log T}, \sqrt{\frac{5}{1-\gamma}}M{\color{blue}\sqrt{T}} \right\}. \]
\end{corollary}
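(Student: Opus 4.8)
The plan is to prove the two bounds separately and then take their minimum. Throughout, write $\delta_t := V^{\pi^*} - V^{\pi_{\theta_t}} \ge 0$ for the instantaneous suboptimality, so that $\mathcal{R}(T) = \sum_{t=1}^T \delta_t$. The key structural fact, from \eqref{eq:bandits-value function}, is that in the bandit-over-bandits case $V^{\pi}$ is \emph{linear} in the mixture $\pi$; hence $\pi^*$ is a vertex $e_{m^*}$ of the simplex and $\delta_t = \tfrac{1}{1-\gamma}\sum_m \pi_{\theta_t}(m)\Delta_m$, which in particular gives the crude uniform bound $\delta_t \le \tfrac{1}{1-\gamma}$.

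The logarithmic term is immediate. Theorem~\ref{thm:convergence for bandit} supplies the per-round guarantee $\delta_t \le \tfrac{5M^2}{(1-\gamma)\,t}$ for every $t \ge 1$. Summing and using the harmonic estimate $\sum_{t=1}^T \tfrac1t \le 1 + \log T$ gives
\[
\mathcal{R}(T) = \sum_{t=1}^T \delta_t \le \frac{5M^2}{1-\gamma}\sum_{t=1}^T \frac1t \le \frac{5M^2}{1-\gamma}\left(1 + \log T\right),
\]
which is the first term of the minimum after absorbing the additive constant into the $\log T$ dependence.

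For the $\sqrt{T}$ term I would \emph{not} sum $\delta_t \le C/t$ with $C = \tfrac{5M^2}{1-\gamma}$: squaring and applying Cauchy--Schwarz to $\sum_t (C/t)^2 = O(C^2)$ overshoots by a factor $\sqrt{C}$. Instead I reuse the two ingredients already assembled for Theorem~\ref{thm:convergence for bandit} to control $\sum_t \delta_t^2$. The $\beta$-smoothness of $V^{\pi_\theta}$ (Lemma~\ref{lemma:smoothness of V}), with the prescribed $\eta \le 1/\beta$, yields the ascent improvement $\delta_t - \delta_{t+1} \ge \tfrac{\eta}{2}\,\|\nabla_\theta V^{\pi_{\theta_t}}\|_2^2$, while the bandit specialization of Lemma~\ref{lemma:nonuniform lojaseiwicz inequality} (where the discounted-visitation ratio is trivial since $S=1$) gives $\|\nabla_\theta V^{\pi_{\theta_t}}\|_2^2 \ge \kappa\,\delta_t^2$ with $\kappa$ proportional to $c^2/M$. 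Chaining these and telescoping,
\[
\frac{\eta\kappa}{2}\sum_{t=1}^T \delta_t^2 \le \sum_{t=1}^T (\delta_t - \delta_{t+1}) = \delta_1 - \delta_{T+1} \le \delta_1,
\]
so that $\sum_{t=1}^T \delta_t^2 \le \tfrac{2\delta_1}{\eta\kappa}$. Substituting $\eta = \tfrac{2(1-\gamma)}{5}$, the gap bound $\delta_1 \le \tfrac{1}{1-\gamma}$, and the value of $\kappa$, the right-hand side collapses to a constant of order $\tfrac{M^2}{1-\gamma}$, after which Cauchy--Schwarz closes the argument:
\[
\mathcal{R}(T) = \sum_{t=1}^T \delta_t \le \sqrt{T}\left(\sum_{t=1}^T \delta_t^2\right)^{1/2} \le \sqrt{\frac{5}{1-\gamma}}\,M\sqrt{T}.
\]

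The routine half is the logarithmic bound; the real content, and the main obstacle, is the $\sqrt{T}$ bound — specifically making the telescoped estimate $\sum_t \delta_t^2$ collapse to \emph{exactly} the stated constant. Tracking the powers of $(1-\gamma)$ and the dependence on $M$ through the composition of the smoothness lemma and the bandit Łojasiewicz constant is delicate: a naive substitution tends to produce $\tfrac{1}{1-\gamma}$ (rather than $\tfrac{1}{\sqrt{1-\gamma}}$) in the horizon factor, so the bandit-specific form of the Łojasiewicz constant must be used in full rather than the general-MDP version. This in turn requires, as in the discussion following Theorem~\ref{thm:convergence of policy gradient}, that the minimum probability $c$ the iterates place on $m^*$ stay bounded away from zero uniformly in $t$; for the linear bandit objective I would establish this from the monotone-attraction behaviour of $\pi_{\theta_t}(m^*)$ under the softmax ascent step. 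Once that constant is pinned down, the rest of the corollary is assembly.
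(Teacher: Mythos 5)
Your proposal is correct and follows essentially the same route as the paper's own proof: the $\log T$ bound comes from summing the $O(1/t)$ rate of Theorem~\ref{thm:convergence for bandit}, and the $\sqrt{T}$ bound from Cauchy--Schwarz plus telescoping the one-step inequality $\delta_t^2 \le \tfrac{5}{(1-\gamma)c_t^2}\left(\delta_t-\delta_{t+1}\right)$, which is exactly the chained smoothness ($\beta = \tfrac{5}{2(1-\gamma)}$) and bandit {\L}ojasiewicz bound $\norm{\nabla_\theta V^{\pi_\theta}}_2 \ge \pi_\theta(m^*)\,\delta_t$ of Lemma~\ref{lemma:nonuniform lojaseiwicz MAB} (so your $\kappa$ should be $c^2$ with no $1/M$ factor, as you yourself correct at the end), with $c \ge 1/M$ supplied by Lemma~\ref{lemma:c bounded from zero :MAB} under uniform initialization. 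The residual $\sqrt{1/(1-\gamma)}$ bookkeeping you flag is not resolved by the choice of {\L}ojasiewicz constant but by the bound on $\delta_1$, and the paper's own final display has the identical looseness, silently bounding the telescoped sum $\delta_1-\delta_{T+1}$ by $1$ rather than $\tfrac{1}{1-\gamma}$.
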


\begin{proof}
Recall from eq (\ref{eq:bandits-value function}), that the value function for any given policy $\pi \in \cP([M])$, that is a distribution over the given $M$ controllers (which are itself distributions over actions $\cA$) can be simplified as:
\[V^\pi = \frac{1}{1-\gamma} \sm \pi_m \mu\transpose K_m = \frac{1}{1-\gamma} \sm \pi_m \mathfrak{r}^\mu_m \]
where $\mu$ here is the (unknown) vector of mean rewards of the arms $\cA$. Here, $\kr^\mu_m:= \mu\transpose K_m$, $i=1,\cdots, M$, represents the mean reward obtained by choosing to play controller $K_m, m\in {M}$. For ease of notation, we will drop the superscript $\mu$ in the proofs of this section. We first show a simplification of the gradient of the value function w.r.t. the parameter $\theta$. Fix a $m\in [M]$, 
\begin{equation}\label{eq:grad simplification for MAB}
    \frac{\partial}{\partial \theta_{m'}} V^{\pi_\theta} =  \frac{1}{1-\gamma} \sm \frac{\partial}{\partial \theta_m} \pi_\theta(m) \mathfrak{r}_m = \frac{1}{1-\gamma} \sm \pi_\theta(m') \left\{\ind_{mm'}-\pi_\theta(m) \right\} \kr_m. 
\end{equation}
Next we show that $V^\pi$ is $\beta-$ smooth. A function $f:\Real^M\to \Real$ is $\beta-$ smooth, if $\forall \theta',\theta \in \Real^M$
\[\abs{f(\theta') - f(\theta) -\innprod{\frac{d}{d\theta}f(\theta), \theta'-\theta}} \leq \frac{\beta}{2} \norm{\theta'-\theta}_2^2.\]

Let $S:=\frac{d^2}{d\theta^2} V^{\pi_{\theta}} $. This is a matrix of size $M\times M$. Let $1\leq i,j\leq M$.
\begin{align}
    S_{i,j}&=\left(\del\left(\del V^{\pi_{\theta}} \right)\right)_{i,j}\\
    &=\frac{1}{1-\gamma} \frac{d(\pi_\theta(i)(\kr(i)-\pi_\theta\transpose \kr))}{d\theta_j}\\
    &=\frac{1}{1-\gamma} \left(\frac{d\pi_\theta(i)}{d\theta_j}(\kr(i)-\pi_\theta\transpose \kr) + \pi_\theta(i)\frac{d(\kr(i)-\pi_\theta\transpose \kr)}{d\theta_j}      \right)\\
    &=\frac{1}{1-\gamma} \left(\pi_\theta(j)(\kr(i)-\pi_\theta\transpose\kr) - \pi_\theta(i)\pi_\theta(j)(\kr(i)-\pi_\theta\transpose\kr) - \pi_\theta(i)\pi_\theta(j)(\kr(j)-\pi_\theta\transpose\kr) \right).
\end{align}
Next, let $y\in \Real^M$, 
\begin{align*}
    \abs{y\transpose Sy} &= \abs{\si\sj S_{ij}y(i)y(j)}\\
    &=\frac{1}{1-\gamma} \abs{\si\sj  \left(\pi_\theta(j)(\kr(i)-\pi_\theta\transpose\kr) - \pi_\theta(i)\pi_\theta(j)(\kr(i)-\pi_\theta\transpose\kr) - \pi_\theta(i)\pi_\theta(j)(\kr(j)-\pi_\theta\transpose\kr) \right) y(i)y(j) }\\
    &=\frac{1}{1-\gamma} \abs{\si  \pi_\theta(i)(\kr(i)-\pi_\theta\transpose\kr)y(i)^2 - 2\si\sj \pi_\theta(i)\pi_\theta(j)(\kr(i)-\pi_\theta\transpose\kr) y(i)y(j) }\\
    &= \frac{1}{1-\gamma} \abs{ \si  \pi_\theta(i)(\kr(i)-\pi_\theta\transpose\kr)y(i)^2 -2 \si \pi_\theta(i)(\kr(i)-\pi_\theta\transpose\kr) y(i)\sj \pi_\theta(j)y(j)  }\\
    &\leq \frac{1}{1-\gamma} \abs{ \si  \pi_\theta(i)(\kr(i)-\pi_\theta\transpose\kr)y(i)^2} +\frac{2}{1-\gamma}\abs{ \si \pi_\theta(i)(\kr(i)-\pi_\theta\transpose\kr) y(i)\sj \pi_\theta(j)y(j)  } \\
    &\leq  \frac{1}{1-\gamma} \norm{\pi_\theta\odot (\kr-\pi_\theta\transpose\kr)}_\infty \norm{y\odot y }_1 + \frac{2}{1-\gamma} \norm{\pi_\theta\odot (\kr-\pi_\theta\transpose\kr)}_1.\norm{y}_\infty. \norm{\pi_\theta}_1\norm{y}_\infty.
\end{align*}
The last equality is by the assumption that reward are bounded in [0,1].
We observe that,
\begin{align*}
        \norm{\pi_\theta\odot (\kr-\pi_\theta\transpose\kr)}_1 &=\sm \abs{\pi_\theta(i)(\kr(i) -\pi_\theta\transpose\kr)  }\\
        &= \sm \pi_\theta(i)\abs{\kr(i) -\pi_\theta\transpose\kr }\\
        &=\max\limits_{i=1,\ldots, M} \abs{\kr(i) -\pi_\theta\transpose\kr }\leq 1. 
\end{align*}
Next, for any $i\in [M]$,
\begin{align*}
    \abs{\pi_\theta(i) (\kr(i)-\pi_\theta\transpose\kr) } &=\abs{\pi_\theta(i)\kr(i) -\pi_\theta(i)^2r(i)-\sum\limits_{j\neq i}\pi_\theta(i)\pi_\theta(j)\kr(j) }\\
    &=\pi_\theta(i)(1-\pi_\theta(i)) + \pi_\theta(i) (1-\pi_\theta(i)) \leq 2.1/4 =1/2.
\end{align*}
Combining the above two inequalities with the fact that $\norm{\pi_\theta}_1=1$ and $\norm{y}_\infty \leq \norm{y}_2$, we get,
\[\abs{y\transpose Sy} \leq \frac{1}{1-\gamma} \norm{\pi_\theta\odot (\kr-\pi_\theta\transpose\kr)}_\infty \norm{y\odot y }_1 + \frac{2}{1-\gamma} \norm{\pi_\theta\odot (\kr-\pi_\theta\transpose\kr)}_1.\norm{y}_\infty. \norm{\pi_\theta}_1\norm{y}_\infty \leq \frac{1}{1-\gamma}(1/2+2)\norm{y}_2^2. \]
Hence $V^{\pi_\theta}$ is $\beta-$smooth with $\beta = \frac{5}{2(1-\gamma)}$.

We establish a lower bound on the norm of the gradient of the value function at every step $t$ as below (these type of inequalities are called {\L}ojaseiwicz inequalities \cite{lojasiewicz1963equations})
\begin{lemma}\label{lemma:nonuniform lojaseiwicz MAB}[Lower bound on norm of gradient]
\[\norm{\frac{\partial V^{\pi_\theta}}{\partial \theta}}_2 \geq \pi_{\theta_{m^*}} \left(V^{\pi^*}-V^{\pi_{\theta}}\right). \]
\end{lemma}

\underline{Proof of Lemma \ref{lemma:nonuniform lojaseiwicz MAB}.}

\begin{proof}
Recall from the simplification of gradient of $V^\pi$, i.e., eq (\ref{eq:grad simplification for MAB}):
\begin{align*}
    \frac{\partial}{\partial \theta_{m}} V^{\pi_\theta} &=   \frac{1}{1-\gamma} \sum\limits_{m'=1}^M \pi_\theta(m) \left\{\ind_{mm'}-\pi_\theta(m') \right\} \kr_m'\\
    &=\frac{1}{1-\gamma} \pi(m)\left(\kr(m)-\pi\transpose \kr \right).
\end{align*}
Taking norm both sides,
\begin{align*}
    \norm{\frac{\partial}{\partial \theta} V^{\pi_\theta}} &=\frac{1}{1-\gamma} \sqrt{\sm (\pi(m))^2\left(\kr(m)-\pi\transpose \kr \right)^2  }\\
    &\geq \frac{1}{1-\gamma} \sqrt{ (\pi(m^*))^2\left(\kr(m^*)-\pi\transpose \kr \right)^2  }\\
    &=\frac{1}{1-\gamma}  (\pi(m^*))\left(\kr(m^*)-\pi\transpose \kr \right)  \\
    &=\frac{1}{1-\gamma}  (\pi(m^*))\left(\pi^*-\pi\right)\transpose \kr \\
    &=(\pi(m^*)) \left[ V^{\pi^*}-V^{\pi_\theta}\right].
\end{align*}
where $\pi^*=e_{m^*}$.
\end{proof}

We will now prove Theorem \ref{thm:convergence for bandit} and corollary \ref{cor:regret true gradient MAB}. 
\begin{proof}
First, note that since $V^\pi$ is smooth we have:
\begin{align*}
    V^{\pi_{\theta_t}}- V^{\pi_{\theta_{t+1}}} &\leq -\innprod{\frac{d}{d\theta_t}V^{\pi_{\theta_t}}, \theta_{t+1}-\theta_t} +\frac{5}{2(1-\gamma)} \norm{\theta_{t+1}-\theta_t}_2^2\\
    &=-\eta\norm{ \frac{d}{d\theta_t}V^{\pi_{\theta_t}}}_2^2 + \frac{5}{4(1-\gamma)}\eta^2 \norm{ \frac{d}{d\theta_t}V^{\pi_{\theta_t}}}_2^2\\
    &=\norm{ \frac{d}{d\theta_t}V^{\pi_{\theta_t}}}_2^2\left(\frac{5\eta^2}{4(1-\gamma)} -\eta \right)\\
    &= -\left(\frac{1-\gamma}{5}\right) \norm{ \frac{d}{d\theta_t}V^{\pi_{\theta_t}}}_2^2.\\
    &\leq -\left(\frac{1-\gamma}{5}\right) (\pi_{\theta_t}(m^*))^2 \left[ V^{\pi^*}-V^{\pi_\theta}\right]^2 \qquad \text{Lemma \ref{lemma:nonuniform lojaseiwicz MAB}}\\
    &\leq -\left(\frac{1-\gamma}{5}\right) (\underbrace{\inf\limits_{1\leq s\leq t}\pi_{\theta_t}(m^*)}_{=:c_t})^2 \left[ V^{\pi^*}-V^{\pi_\theta}\right]^2.
\end{align*}
The first equality is by smoothness, second inequality is by the update equation in algorithm \ref{alg:mainPolicyGradMDP}.

Next, let $\delta_t:= V^{\pi^*}-V^{\pi_{\theta_t}}$. We have,
\begin{equation}\label{eq:induction hyp MAB}
    \delta_{t+1}-\delta_t \leq -\frac{(1-\gamma)}{5}c_t^2 \delta_t^2.
\end{equation}
\textbf{Claim:}
$\forall t\geq1, \delta_t\leq \frac{5}{c_t^2(1-\gamma)} \frac{1}{t}.$ 
\newline
We prove the claim by using induction on $t\geq 1$.
\newline \underline{Base case.} Since $\delta_t\leq \frac{1}{1-\gamma}$, the claim is true for all $t\leq 5$.
\newline \underline{Induction step:} Let $\phi_t:=\frac{5}{c_t^2(1-\gamma)}$. Fix a $t\geq 2$, assume $\delta_t \leq \frac{\phi_t}{t}$.

Let $g:\Real\to \Real$ be a function defined as $g(x) = x-\frac{1}{\phi_t}x^2$. One can verify easily that $g$ is monotonically increasing in $\left[ 0, \frac{\phi_t}{2}\right]$. Next with equation \ref{eq:induction step}, we have
\begin{align*}
    \delta_{t+1} &\leq \delta_t -\frac{1}{\phi_t} \delta_t^2\\
    &= g(\delta_t)\\
    &\leq g(\frac{\phi_t}{ t})\\
    &\leq \frac{\phi_t}{t} - \frac{\phi_t}{t^2}\\
    &= \phi_t\left(\frac{1}{t}-\frac{1}{t^2} \right)\\
    &\leq \phi_t\left(\frac{1}{t+1}\right).
\end{align*}
This completes the proof of the claim. We will show that $c_t\geq 1/M$ in the next lemma. We first complete the proof of the corollary assuming this.

We fix a $T\geq 1$. Observe that, $\delta_t\leq \frac{5}{(1-\gamma)c_t^2}\frac{1}{t}\leq \frac{5}{(1-\gamma)c_T^2}\frac{1}{t}$.
\[\sum\limits_{t=1}^T V^{\pi^*}-V^{\pi_{\theta_t}} = \frac{1}{1-\gamma} \sum\limits_{t=1}^T (\pi^*-\pi_{\theta_t})\transpose \kr\leq \frac{5\log T}{(1-\gamma)c_T^2}+1. \]
Also we have that, 
\[\sum\limits_{t=1}^T V^{\pi^*}-V^{\pi_{\theta_t}} = \sum\limits_{t=1}^T \delta_t \leq \sqrt{T} \sqrt{\sum\limits_{t=1}^T\delta_t^2}\leq \sqrt{T} \sqrt{\sum\limits_{t=1}^T \frac{5}{(1-\gamma)c_T^2}(\delta_t-\delta_{t+1}) }\leq \frac{1}{c_T}\sqrt{\frac{5T}{(1-\gamma)}}. \]
We next show that with $\theta_m^{(1)}=1/M,\forall m$, i.e., uniform initialization, $\inf_{t\geq 1}c_t=1/M$, which will then complete the proof of Theorem \ref{thm:convergence for bandit} and of corollary \ref{cor:regret true gradient MAB}.

\begin{lemma}\label{lemma:c bounded from zero :MAB}
We have $\inf_{t\geq 1} \pi_{\theta_t}(m^*) >0$. Furthermore, with uniform initialization of the parameters $\theta_m^{(1)}$, i.e., $1/M, \forall m\in [M]$, we have $\inf_{t\geq 1} \pi_{\theta_t}(m^*) = \frac{1}{M}$.
\end{lemma}
\begin{proof}
We will show that there exists $t_0$ such that $\inf_{t\geq 1} \pi_{\theta_t}(m^*) = \min\limits_{1\leq t\leq t_0}\pi_{\theta_t}(m^*)$, where $t_0=\min\left\{t: \pi_{\theta_t}(m^*)\geq C \right\}$. We define the following sets.
\begin{align*}
    \cS_1 &= \left\{\theta: \frac{dV^{\pi_\theta}}{d\theta_{m^*}} \geq \frac{dV^{\pi_\theta}}{d\theta_{m}}, \forall m\neq m^*  \right\}\\
    \cS_2 &= \left\{\theta: \pi_\theta(m^*) \geq \pi_\theta(m), \forall m\neq m^*  \right\} \\
    \cS_3 &= \left\{\theta: \pi_\theta(m^*) \geq C \right\}
\end{align*}
Note that $\cS_3$ depends on the choice of $C$.  Let $C:=\frac{M-\Delta}{M+\Delta}$. We claim the following:
\newline \textbf{Claim 2.} $(i) \theta_t\in \cS_1\implies \theta_{t+1}\in \cS_1$ and $(ii) \theta_t\in \cS_1 \implies \pi_{\theta_{t+1}}(m^*) \geq \pi_{\theta_{t}}(m^*)$.
\begin{proof}[Proof of Claim 2]
$(i)$ Fix a $m\neq m^*$. We will show that if $\frac{dV^{\pi_\theta}}{d\theta_t(m^*)} \geq \frac{dV^{\pi_\theta}}{d\theta_t(m)} $, then $\frac{dV^{\pi_\theta}}{d\theta_{t+1}(m^*)} \geq \frac{dV^{\pi_\theta}}{d\theta_{t+1}(m)} $. This will prove the first part. 
\newline \underline{Case (a): $\pi_{\theta_t}(m^*)\geq \pi_{\theta_t}(m)$}. This implies, by the softmax property, that $\theta_t(m^*) \geq \theta_t(m)$. After gradient ascent update step we have:
\begin{align*}
    \theta_{t+1}(m^*) &= \theta_{t}(m^*) +\eta \frac{dV^{\pi_{\theta_t}}}{d\theta_t(m^*)}\\
    &\geq \theta_t(m) + \eta \frac{dV^{\pi_{\theta_t}}}{d\theta_t(m)}\\
    &= \theta_{t+1}(m).
\end{align*}
This again implies that $\theta_{t+1}(m^*) \geq \theta_{t+1}(m)$. By the definition of derivative of $V^{\pi_{\theta}}$ w.r.t $\theta_t$ (see eq (\ref{eq:grad simplification for MAB})), 
\begin{align*}
    \frac{dV^{\pi_\theta}}{d\theta_{t+1}(m^*)} &= \frac{1}{1-\gamma} \pi_{\theta_{t+1}(m^*)}(\kr(m^*)-\pi_{\theta_{t+1}}\transpose \kr)\\
    &=\frac{1}{1-\gamma} \pi_{\theta_{t+1}(m)}(\kr(m)-\pi_{\theta_{t+1}}\transpose \kr)\\
    &= \frac{dV^{\pi_\theta}}{d\theta_{t+1}(m)}.
\end{align*}
This implies $\theta_{t+1}\in \cS_1$.
\newline \underline{Case (b): $\pi_{\theta_t}(m^*)< \pi_{\theta_t}(m)$}. We first note the following equivalence:
\[\frac{dV^{\pi_{\theta}}}{d\theta(m^*)}\geq \frac{dV^{\pi_{\theta}}}{d\theta(m)} \longleftrightarrow (\kr(m^*)-\kr(m))\left(1-\frac{\pi_\theta(m^*)}{\pi_\theta(m^*)}\right)(\kr(m^*)-\pi_\theta\transpose\kr).\]
which can be simplified as:
\begin{align*}
    (\kr(m^*)-\kr(m))\left(1-\frac{\pi_\theta(m^*)}{\pi_\theta(m^*)}\right)(\kr(m^*)-\pi_\theta\transpose\kr) & = (\kr(m^*)-\kr(m))\left(1-\exp\left(\theta_t(m^*)-\theta_t(m)  \right) \right)(\kr(m^*)-\pi_\theta\transpose\kr).
\end{align*}
The above condition can be rearranged as:
\[\kr(m^*)-\kr(m)\geq \left(1-\exp\left(\theta_t(m^*)-\theta_t(m) \right)\right)\left(\kr(m^*)-\pi_{\theta_{t}}\transpose\kr\right). \]
By lemma \ref{lemma:gradient ascent lemma}, we have that $V^{\pi_{\theta_{t+1}}}\geq V^{\pi_{\theta_{t}}}\implies \pi_{\theta_{t+1}}\transpose \kr \geq \pi_{\theta_{t}}\transpose \kr$. Hence,
\[0<\kr(m^*)-\pi_{\theta_{t+1}}\transpose\kr \leq \pi_{\theta_{t}}\transpose\kr. \]
Also, we note:
\[\theta_{t+1}(m^*) -\theta_{t+1}(m) = \theta_t(m^*) + \eta \frac{dV^{\pi_t}}{d\theta_t(m^*)} - \theta_{t+1}(m) -\eta \frac{dV^{\pi_t}}{d\theta_t(m)} \geq \theta_t(m^*)-\theta_t(m).\]
This implies, $1-\exp\left(\theta_{t+1}(m^*) -\theta_{t+1}(m) \right)\leq 1-\exp\left(\theta_t(m^*)-\theta_t(m) \right)$.

Next, we observe that by the assumption $\pi_t(m^*)< \pi_t(m)$, we have \[1-\exp\left(\theta_t(m^*)-\theta_t(m) \right) =1-\frac{\pi_t(m^*)}{\pi_t(m)}>0. \]
Hence we have,
\begin{align*}
    \left(1-\exp\left(\theta_{t+1}(m^*)-\theta_{t+1}(m) \right)\right)\left(\kr(m^*)-\pi_{\theta_{t+1}}\transpose\kr\right) &\leq \left(1-\exp\left(\theta_t(m^*)-\theta_t(m) \right)\right)\left(\kr(m^*)-\pi_{\theta_{t}}\transpose\kr\right)\\
    &\leq \kr(m^*)-\kr(m). 
\end{align*}
Equivalently,
\[\left(1-\frac{\pi_{t+1}(m^*)}{\pi_{t+1}(m)} \right)(\kr(m^*)-\pi_{t+1}\transpose\kr )\leq \kr(m^*)-\kr(m). \]
Finishing the proof of the claim 2(i).
\newline (ii) Let $\theta_t\in \cS_1$. We observe that:
\begin{align*}
    \pi_{t+1}(m^*) &= \frac{\exp(\theta_{t+1}(m^*))}{\sm \exp(\theta_{t+1}(m))}\\
    &=\frac{\exp(\theta_{t}(m^*)+\eta \frac{dV^{\pi_t}}{d\theta_t(m^*)})}{\sm \exp(\theta_{t}(m)+\eta \frac{dV^{\pi_t}}{d\theta_t(m)})}\\
    &\geq \frac{\exp(\theta_{t}(m^*)+\eta \frac{dV^{\pi_t}}{d\theta_t(m^*)})}{\sm \exp(\theta_{t}(m)+\eta \frac{dV^{\pi_t}}{d\theta_t(m^*)})}\\
    &=\frac{\exp(\theta_{t}(m^*))}{\sm \exp(\theta_{t}(m))} = \pi_t(m^*)
\end{align*}
This completes the proof of Claim 2(ii).
\end{proof}
\textbf{Claim 3.} $\cS_2\subset \cS_1$ and $\cS_3\subset \cS_1$.
\begin{proof}
To show that $\cS_2\subset \cS_1$, let $\theta\in cS_2$. 
We have $\pi_\theta(m^*) \geq \pi_\theta(m), \forall m\neq m^*$. 
\begin{align*}
    \frac{dV^{\pi_\theta}}{d\theta(m^*)} &=\frac{1}{1-\gamma} \pi_\theta(m^*)(\kr(m^*)-\pi_\theta\transpose \kr)\\
    &>\frac{1}{1-\gamma} \pi_\theta(m)(\kr(m)-\pi_\theta\transpose \kr)\\
    &=\frac{dV^{\pi_\theta}}{d\theta(m)}.
\end{align*}
This shows that $\theta\in \cS_1$. For showing the second part of the claim, we assume $\theta\in \cS_3\cap \cS_2^c$, because if $\theta\in \cS_2$, we are done. Let $m\neq m^*$. 
We have,
\begin{align*}
    \frac{dV^{\pi_\theta}}{d\theta(m^*)} - \frac{dV^{\pi_\theta}}{d\theta(m)} &= \frac{1}{1-\gamma} \left(\pi_\theta(m^*)(\kr(m^*)-\pi_\theta\transpose \kr) -\pi_\theta(m)(\kr(m)-\pi_\theta\transpose \kr)   \right)\\
    &= \frac{1}{1-\gamma} \left(2\pi_\theta(m^*)(\kr(m^*)-\pi_\theta\transpose \kr) + \sum\limits_{i\neq m^*,m}^M \pi_\theta(i)(\kr(i)-\pi_\theta\transpose \kr)  \right)\\
    &= \frac{1}{1-\gamma} \left( \left(2\pi_\theta(m^*) + \sum\limits_{i\neq m^*,m}^M \pi_\theta(i)\right)(\kr(m^*)-\pi_\theta\transpose \kr) - \sum\limits_{i\neq m^*,m}^M \pi_\theta(i)(\kr(m^*)-\kr(i)) \right)\\
    &\geq \frac{1}{1-\gamma} \left( \left(2\pi_\theta(m^*) + \sum\limits_{i\neq m^*,m}^M \pi_\theta(i)\right)(\kr(m^*)-\pi_\theta\transpose \kr) - \sum\limits_{i\neq m^*,m}^M \pi_\theta(i) \right)\\
    &\geq  \frac{1}{1-\gamma} \left( \left(2\pi_\theta(m^*) + \sum\limits_{i\neq m^*,m}^M \pi_\theta(i)\right)\frac{\Delta}{M}- \sum\limits_{i\neq m^*,m}^M \pi_\theta(i) \right).
\end{align*}
Observe that, $\sum\limits_{i\neq m^*,m}^M \pi_\theta(i) = 1-\pi(m^*)-\pi(m)$. Using this and rearranging we get,
\[\frac{dV^{\pi_\theta}}{d\theta(m^*)} - \frac{dV^{\pi_\theta}}{d\theta(m)} \geq \frac{1}{1-\gamma}\left(\pi(m^*)\left(1+\frac{\Delta}{M}\right) - \left(1-\frac{\Delta}{M}\right) + \pi(m) \left(1-\frac{\Delta}{M}\right)\right) \geq \frac{1}{1-\gamma}\pi(m) \left(1-\frac{\Delta}{M}\right)\geq 0. \]
The last inequality follows because $\theta\in \cS_3$ and the choice of $C$. This completes the proof of Claim 3.
\end{proof}
\textbf{Claim 4.} There exists a finite $t_0$, such that $\theta_{t_0}\in \cS_3$.
\begin{proof}
The proof of this claim relies on the asymptotic convergence result of \cite{Agarwal2020}. We note that their convergence result hold for our choice of $\eta=\frac{2(1-\gamma)}{5}$. As noted in \cite{Mei2020}, the choice of $\eta$ is used to justify the gradient ascent lemma \ref{lemma:gradient ascent lemma}. Hence we have $\pi_{\theta_t}{\to} 1$ as ${t\to \infty}$. Therefore, there exists a finite $t_0$ such that $\pi_{\theta_{t_0}}(m^*)\geq C$ and hence $\theta_{t_0}\in \cS_3$.
\end{proof}
This completes the proof that there exists a $t_0$ such that $\inf\limits_{t\geq 1}\pi_{\theta_t}(m^*) = \inf\limits_{1\leq t\leq t_0}\pi_{\theta_t}(m^*)$, since once the $\theta_t\in \cS_3$, by Claim 3, $\theta_t\in \cS_1$. Further, by Claim 2, $\forall t\geq t_0$, $\theta_t\in \cS_1$ and $\pi_{\theta_t}(m^*)$ is non-decreasing after $t_0$.
\end{proof}
With uniform initialization $\theta_1(m^*)=\frac{1}{M}\geq \theta_1(m)$, for all $m\neq m^*$. Hence, $\pi_{\theta_1}(m^*)\geq \pi_{\theta_1}(m) $ for all $m\neq m^*$. This implies $\theta_1\in \cS_2$, which implies $\theta_1\in \cS_1$. As established in Claim 2, $\cS_1$ remains invariant under gradient ascent updates, implying $t_0=1$. Hence we have that $\inf\limits_{t\geq 1}\pi_{\theta_t}(m^*) = \pi_{\theta_1}(m^*)=1/M$, completing the proof of Theorem \ref{thm:convergence for bandit} and corollary \ref{cor:regret true gradient MAB}.
\end{proof}

\end{proof}
\subsection{Proofs for MABs with noisy gradients}
When value gradients are unavailable, we follow a direct policy gradient algorithm instead of softmax projection as mentioned in Sec. ~\ref{subsec:theory-estimated-gradients}. The full pseudo-code is provided here in Algorithm \ref{alg:ProjectionFreePolicyGradient}. At each round $t\geq 1$, the learning rate for $\eta$ is chosen asynchronously for each controller $m$, to be $\alpha \pi_t(m)^2$, to ensure that we remain inside the simplex, for some $\alpha \in (0,1)$. To justify its name as a policy gradient algorithm, observe that in order to minimize regret, we need to solve the following optimization problem:
\[\min\limits_{\pi\in \cP([M])} \sm \pi(m)(\kr_\mu(m^*)- \kr_\mu(m)). \]
A direct gradient with respect to the parameters $\pi(m)$ gives us a rule for the policy gradient algorithm. The other changes in the update step (eq \ref{algstep:update noisy gradient}), stem from the fact that true means of the arms are unavailable and importance sampling.

We have the following result.
\begin{restatable}{theorem}{regretnoisyMAB}\label{thm:regret noisy gradient}
With value of $\alpha$ chosen to be less than $\frac{\Delta_{min}}{\mathfrak{r}^\mu_{m^*}-\Delta_{min}}$, $\left(\pi_t\right)$ is a Markov process, with $\pi_t(m^*)\to 1$ as $t\to \infty, a.s.$ Further the regret till any time $T$ is bounded as
\[\cR(T) \leq \frac{1}{1-\gamma}\sum\limits_{m\neq m^*} \frac{\Delta_m}{\alpha\Delta_{min}^2} \log T + C,\]
\end{restatable}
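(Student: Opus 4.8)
The plan is to recast the regret, via the linearity of the value function in \eqref{eq:bandits-value function}, as
\[
\cR(T)=\frac{1}{1-\gamma}\sum_{t=1}^{T}\sum_{m\neq m^*}\pi_t(m)\Delta_m,
\]
so that the whole problem reduces to bounding $\mathbb{E}\big[\sum_{t\le T}\pi_t(m)\big]$ for each suboptimal $m$. That $(\pi_t)$ is a Markov process is immediate: the update \eqref{algstep:update noisy gradient} is a deterministic function of $\pi_t$ and the round-$t$ randomness $(m_t,a_t,R_{m_t})$, whose conditional law given the past depends only on $\pi_t$. The analytic engine is the one-step conditional drift. Writing $m_*(t)=\argmax_m \pi_t(m)$ and using $\mathbb{E}[R_m\ind_m\mid\pi_t]=\pi_t(m)\kr_m$ together with $\ind_m\ind_{m_*(t)}=0$, a direct computation gives, for every $m\neq m_*(t)$,
\[
\mathbb{E}\!\left[\pi_{t+1}(m)-\pi_t(m)\mid\pi_t\right]=\alpha\,\pi_t(m)^2\big(\kr_m-\kr_{m_*(t)}\big).
\]

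From this identity $\pi_t(m^*)$ always has nonnegative drift: it equals $\alpha\pi_t(m^*)^2\Delta_{m_*(t)}\ge \alpha\pi_t(m^*)^2\Delta_{min}$ when $m_*(t)\neq m^*$, and $\alpha\sum_{m\neq m^*}\pi_t(m)^2\Delta_m$ when $m_*(t)=m^*$. Hence $\pi_t(m^*)$ is a $[0,1]$-valued submartingale and Doob's convergence theorem gives $\pi_t(m^*)\to L$ almost surely. To identify $L=1$ I would (i) use the strictly positive drift $\alpha\pi_t(m^*)^2\Delta_{min}$ to show that the crossing time $\tau:=\inf\{t:\pi_t(m^*)\ge \tfrac12\}$ is almost surely finite with finite expectation, and (ii) show that $\{\pi(m^*)>\tfrac12\}$ is absorbing, so that $m_*(t)=m^*$ for all $t\ge\tau$. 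On this event each suboptimal mass $\pi_t(m)$ is, for $t\ge\tau$, a nonnegative supermartingale with increments of conditional mean $-\alpha\Delta_m\pi_t(m)^2$; summing the drifts shows $\sum_t\mathbb{E}[\pi_t(m)^2]<\infty$, which together with the a.s.\ convergence of $\pi_t(m)$ forces $\pi_t(m)\to0$, hence $L=1$.

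For the regret I would split at $\tau$. Before $\tau$ each step costs at most $\tfrac{1}{1-\gamma}$ in value, so this phase contributes at most $\tfrac{1}{1-\gamma}\mathbb{E}[\tau]$, and the crossing-time estimate is what introduces the $\Delta_{min}$-dependence into the bound. After $\tau$ we are in the clean regime $m_*(t)=m^*$, where taking expectations of the drift identity and applying Jensen ($\mathbb{E}[\pi_t(m)^2]\ge(\mathbb{E}\pi_t(m))^2$) shows that $a_t:=\mathbb{E}[\pi_t(m)]$ obeys the logistic recursion $a_{t+1}\le a_t-\alpha\Delta_m a_t^2$. An induction identical to the one used in Theorem~\ref{thm:convergence for bandit} then yields $a_t\le \frac{1}{\alpha\Delta_m\,t}\le\frac{1}{\alpha\Delta_{min}\,t}$, so $\sum_{t\le T}\mathbb{E}[\pi_t(m)]=O\!\big(\tfrac{\log T}{\alpha\Delta_{min}}\big)$. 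Multiplying by $\Delta_m/(1-\gamma)$, summing over $m\neq m^*$, and folding the finite pre-$\tau$ cost into the additive constant produces a bound of the stated form $\frac{1}{1-\gamma}\sum_{m\neq m^*}\frac{\Delta_m}{\alpha\Delta_{min}^2}\log T+C$. A cleaner intuition for the decay rate comes from the reciprocal: writing $\pi_{t+1}(m)=\pi_t(m)(1+\alpha Z_t)$ with $Z_t\in[-1,1]$ and $\mathbb{E}[Z_t\mid\pi_t]=-\pi_t(m)\Delta_m$, the inequality $\tfrac{1}{1+x}\ge1-x$ gives $\mathbb{E}[1/\pi_{t+1}(m)\mid\pi_t]\ge 1/\pi_t(m)+\alpha\Delta_m$, i.e.\ $1/\pi_t(m)$ grows at least linearly.

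The main obstacle is the adaptivity of the running argmax $m_*(t)$: all of the clean drift identities, and in particular the logistic decay, only hold once $m_*(t)$ has locked onto $m^*$. The crux is therefore the pair of facts that $m^*$ becomes the argmax in finite expected time and that the event $\{m_*(t)=m^*\}$ is (essentially) absorbing thereafter. Proving absorption is exactly where the smallness hypothesis $\alpha<\Delta_{min}/(\kr_{m^*}-\Delta_{min})$ is used: it guarantees that the $\alpha$-controlled downward fluctuations of $\pi_t(m^*)$ are too small for any suboptimal mass to overtake it. Making the finite-expected-crossing-time claim rigorous—controlling excursions of the submartingale $\pi_t(m^*)$ while its drift degrades like $\pi_t(m^*)^2$ near the boundary—is the most delicate step, and is also what forces the weaker $\Delta_{min}^{-2}$ dependence in place of the $\Delta_m^{-1}$ one would read off from the idealized locked regime alone.
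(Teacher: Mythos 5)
Your blueprint---the regret decomposition $\mathcal{R}(T)=\frac{1}{1-\gamma}\sum_{m\neq m^*}\Delta_m\,\mathbb{E}\bigl[\sum_{t\leq T}\pi_t(m)\bigr]$, the one-step drift identity $\mathbb{E}[\pi_{t+1}(m)-\pi_t(m)\mid\pi_t]=\alpha\pi_t(m)^2(\mathfrak{r}_m-\mathfrak{r}_{m_*(t)})$, a reciprocal-process estimate for the expected time to reach mass $\tfrac12$, and a Jensen/logistic decay once $m^*$ leads---is the same as the paper's (which follows Theorem 1 of \cite{Denisov2020RegretAO}), and your observation that $\pi_t(m^*)$ is a bounded submartingale along the whole path is correct. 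The genuine gap is your step (ii): the event $\{\pi_t(m^*)>\tfrac12\}$ is \emph{not} absorbing, and everything you do after $\tau$ (permanent lock-in $m_*(t)=m^*$, the clean supermartingale decay of each suboptimal mass, the identification $L=1$, and the post-$\tau$ regret bound) rests on that claim. Concretely, take $\pi_t(m^*)=0.51$, $\pi_t(m)=0.49$, let controller $m$ be drawn and return $R_m=1$: the update gives $\pi_{t+1}(m)=\pi_t(m)(1+\alpha)$ and $\pi_{t+1}(m^*)=\pi_t(m^*)-\alpha\pi_t(m)$, so $\pi_{t+1}(m^*)<\tfrac12$ as soon as $\alpha>0.0204$, and $m$ even overtakes $m^*$ once $\pi_t(m)(1+2\alpha)>\pi_t(m^*)$. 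The hypothesis $\alpha<\Delta_{min}/(\mathfrak{r}^\mu_{m^*}-\Delta_{min})$ does not rule this out (it permits $\alpha$ as large as $1$ when $\Delta_{min}=\mathfrak{r}^\mu_{m^*}/2$): it is a condition on \emph{conditional means}, used in the paper's Lemma~\ref{lemma:NoisyMAB:finite rec time} only to make $1/\pi_t(m^*)-ct$ a supermartingale, with $c=\alpha\mathfrak{r}^*/(1+\alpha)-\alpha(\mathfrak{r}^*-\Delta_{min})(1+\epsilon)>0$, while $m^*$ is \emph{not} the leader, hence to bound the expected crossing time; it says nothing about pathwise fluctuations and cannot deliver absorption.

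The paper's proof is built precisely to avoid the absorption claim, and your argument needs the same device. Rather than splitting time at a single $\tau$, it introduces the successive up/down crossing times of level $\tfrac12$ and the embedded process $p(s)$ obtained by excising every excursion below $\tfrac12$. Since the favorable drift holds only when $m^*$ leads, it is $p(s)$---not $\pi_t(m^*)$---that satisfies the quantified submartingale bound $\mathbb{E}[p(s+1)\mid H(t_s)]\geq p(s)+\alpha\Delta_{min}^2\bigl(1-p(s)\bigr)^2/\bigl(\sum_{m'\neq m^*}\Delta_{m'}\bigr)$, giving $\mathbb{E}[1-p(s)]=O(1/s)$ and hence the $\log T$ term. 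The excised times are then controlled by the separate technical Lemma~\ref{lemma:technical lemm noisy MAB}, $\sum_{t\geq1}\mathbb{P}\{\pi_t(m^*)<\tfrac12\}<\infty$, which is what simultaneously yields the almost-sure convergence $\pi_t(m^*)\to1$ and the additive constant $C$ in the regret bound; your pre-$\tau$/post-$\tau$ split must be replaced by this above-$\tfrac12$/below-$\tfrac12$ split. Two smaller inaccuracies: the $\Delta_{min}^{-2}$ in the $\log T$ coefficient comes from the Jensen aggregation over all suboptimal arms in the drift of $p(s)$, not from the crossing-time estimate (whose contribution is folded into $C$); and your logistic recursion for $\mathbb{E}[\pi_t(m)]$ is started at the random time $\tau$, which requires the embedded-chain indexing (or a strong Markov argument) to be made rigorous.
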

where $C\bydef \frac{1}{1-\gamma}\sum\limits_{t\geq 1}\mathbb{P}\left\{\pi_{t}(m^*(t)) \leq \frac{1}{2}\right\}<\infty$.
%
%
%

\begin{proof}
The proof is an extension of that of Theorem 1 of \cite{Denisov2020RegretAO} for the setting that we have. The proof is divided into three main parts. In the first part we show that the recurrence time of the process $\{\pi_t(m^*)\}_{t\geq 1}$ is almost surely finite. Next we bound the expected value of the time taken by the process $\pi_t(m^*)$ to reach 1. Finally we show that almost surely, $\lim\limits_{t\to\infty}\pi_t(m^*) \to 1$, in other words the process  $\{\pi_t(m^*)\}_{t\geq 1}$ is transient. We use all these facts to show a regret bound.
\newline
Recall $m_*(t):=\argmax\limits_{m\in [M]} \pi_t(m)$.
We start by defining the following quantity which will be useful for the analysis of algorithm \ref{alg:ProjectionFreePolicyGradient}. 

Let $\tau\bydef \min \left\{t\geq 1: \pi_t(m^*) > \frac{1}{2}  \right\}$. 

Next, let $\cS:=\left\{\pi\in \cP([M]): \frac{1-\alpha}{2} \leq \pi(m^*) < \frac{1}{2}\right\}$. 

In addition, we define for any $a \in \Real$,
$\cS_a:=\left\{\pi\in \cP([M]): \frac{1-\alpha}{a} \leq \pi(m^*) < \frac{1}{x}\right\}$.
Observe  that if $\pi_1(m^*)\geq 1/a$ and $\pi_2(m^*) < 1/a$ then $\pi_1\in \cS_a$.  This fact follows just by the update step of the algorithm \ref{alg:ProjectionFreePolicyGradient}, and choosing $\eta=\alpha\pi_t(m)$ for every $m\neq m^*$.
\begin{lemma}\label{lemma:NoisyMAB:finite rec time}
For $\alpha>0$ such that $\alpha<\frac{\Delta_{min}}{\kr(m^*)-\Delta_{min}}$, we have that 
\[\sup\limits_{\pi\in \cS}\expect{\tau\given \pi_1=\pi}<\infty. \]
\end{lemma}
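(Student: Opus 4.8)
The plan is to treat this as a Foster--Lyapunov hitting-time estimate for the scalar process $x_t \bydef \pi_t(m^*)$, whose target region is $\{x_t > 1/2\}$, so that $\tau = \min\{t\ge 1: x_t > 1/2\}$. I would pick the Lyapunov function $\phi(x) = 1/x$ and aim to show that, as long as $x_t < 1/2$, the one-step drift is bounded above by a strictly negative constant, i.e. $\expect{\phi(x_{t+1}) - \phi(x_t)\given \mathcal{F}_t} \le -c$ for some $c = c(\alpha,\Delta_{min})>0$. Granting this, $\phi(x_{t\wedge\tau}) + c\,(t\wedge\tau)$ is a nonnegative supermartingale, so optional stopping together with monotone convergence gives $\expect{\tau\given \pi_1=\pi} \le \phi(\pi(m^*))/c$. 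Since every $\pi\in\cS$ has $\pi(m^*)\ge (1-\alpha)/2$, we get $\phi(\pi(m^*)) \le 2/(1-\alpha)$, which is uniform over $\cS$ and yields the claimed $\sup_{\pi\in\cS}\expect{\tau\given\pi_1=\pi}<\infty$.

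The drift computation is the heart of the argument, and I would split it according to whether $m^*$ is the current $\argmax$ $m_*(t)$ or not. The essential simplification is the unbiasedness of the importance-weighted estimator: since $\ind_m$ marks the event $\{m_t=m\}$, whose conditional probability is $\pi_t(m)$, we have $\expect{R_m\ind_m/\pi_t(m)\given\mathcal{F}_t} = \kr(m)$. Substituting this into the update $\pi_{t+1}(m)=\pi_t(m)+\alpha\pi_t(m)^2\big(R_m\ind_m/\pi_t(m) - R_{m_*(t)}\ind_{m_*(t)}/\pi_t(m_*(t))\big)$ shows the bare increment of $x_t$ has positive conditional mean: $\alpha\,x_t^2\,\Delta_{m_*(t)}$ when $m^*\neq m_*(t)$, and $\sum_{m\neq m^*}\alpha\,\pi_t(m)^2\Delta_m$ when $m^*= m_*(t)$. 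Both are strictly positive because $\Delta_m>0$ for $m\neq m^*$. The reason for working with $\phi(x)=1/x$ rather than $x_t$ directly is exactly that this bare drift degenerates like $x_t^2$ as $x_t\to 0$, whereas passing through $1/x$ rescales it into a drift that stays bounded away from $0$ uniformly on $(0,1/2)$; a $-\log x$ potential would not fix this, as its drift still vanishes with $x_t$.

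Concretely, I would evaluate $\expect{1/x_{t+1}\given\mathcal{F}_t}$ by conditioning on the three outcomes of the round --- controller $m^*$ is drawn (probability $x_t$), controller $m_*(t)$ is drawn, or some other controller is drawn --- so that each outcome multiplies $x_t$ by an explicit factor $(1+\alpha R_{m^*})$, $(1-\alpha x_t R_{m_*(t)}/\pi_t(m_*(t)))$, or $1$. Bounding $\expect{1/(1+\alpha R_{m^*})}$ and the corresponding $1/(1-\cdot)$ term by their worst-case values over reward laws on $[0,1]$ with the prescribed means (the extremal laws sit at the endpoints, since $t\mapsto 1/(1\pm\alpha t)$ is convex) collapses the estimate to the clean form $\expect{\phi(x_{t+1})-\phi(x_t)\given\mathcal{F}_t} \le -\alpha\,\kr(m^*)/(1+\alpha) + (\text{a nonnegative term governed by }\kr(m_*(t)))$. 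Writing $\kr(m_*(t)) = \kr(m^*)-\Delta_{m_*(t)} \le \kr(m^*)-\Delta_{min}$ and imposing the hypothesis $\alpha < \Delta_{min}/(\kr(m^*)-\Delta_{min})$ --- equivalently $\alpha\kr(m^*)/(1+\alpha) < \Delta_{min}$ --- is precisely what pushes the right-hand side below a negative constant $-c$. The case $m^*= m_*(t)$ is handled analogously, using in addition $\sum_{m\neq m^*}\pi_t(m)^2 \ge (1-x_t)^2/(M-1) \ge 1/(4(M-1))$ on $\{x_t<1/2\}$ to keep the drift bounded away from zero.

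I expect the main obstacle to be making the drift simultaneously negative \emph{and} bounded away from $0$ uniformly over $x_t\in(0,1/2)$: the importance-weighted increments are multiplicative and heavy (the factor $1/\pi_t(m)$ can be large), so the second-order contributions to $\expect{1/x_{t+1}}$ must be controlled carefully, and it is exactly the balance between these second-order terms and the first-order negative drift that pins down the threshold on $\alpha$. A secondary technical point is that the updates are well defined --- the multiplier $1-\alpha x_t R_{m_*(t)}/\pi_t(m_*(t))$ stays positive because $x_t\le \pi_t(m_*(t))$ and $\alpha<1$ --- and that $\cS$ is genuinely the set of entry points from above: a one-step downward jump from $x_t\ge 1/2$ is at most $\alpha/2$, landing in $[(1-\alpha)/2,1/2)$, which is what makes the uniform bound over $\cS$ meaningful.
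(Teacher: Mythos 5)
Your proposal shares the paper's skeleton --- the Lyapunov function $1/\pi_t(m^*)$, optional stopping plus monotone convergence, and the role of the condition on $\alpha$ --- but its central claim, that the conditional drift of $1/\pi_t(m^*)$ is bounded above by a negative constant on the \emph{whole} region $\{\pi_t(m^*)<1/2\}$, is false under the stated hypothesis, and this is where the argument breaks. In the case $m_*(t)\neq m^*$ (with Bernoulli rewards, as in the paper's own computation), the exact drift is
\[
\expect{\frac{1}{\pi_{t+1}(m^*)}-\frac{1}{\pi_t(m^*)}\given \pi_t}
= -\frac{\alpha\,\kr(m^*)}{1+\alpha}
+ \frac{\alpha\,\kr(m_*(t))}{1-\alpha\,\pi_t(m^*)/\pi_t(m_*(t))},
\]
and your worst-case bound $\pi_t(m^*)/\pi_t(m_*(t))\le 1$ makes this negative only when $(1+\alpha)(\kr(m^*)-\Delta_{min})/(1-\alpha)<\kr(m^*)$, i.e.\ $\alpha<\Delta_{min}/(2\kr(m^*)-\Delta_{min})$, which is strictly stronger than the hypothesis $\alpha<\Delta_{min}/(\kr(m^*)-\Delta_{min})$. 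This is not slack in the estimate: take $M=2$, $\kr(m^*)=1$, $\kr(m)=0.8$ (so $\Delta_{min}=0.2$ and the hypothesis allows $\alpha<0.25$), $\alpha=0.2$, and $\pi_t=(0.499,0.501)$; the drift equals $-0.2/1.2+0.16/(1-0.2\cdot 0.499/0.501)\approx -0.167+0.200>0$. Your ``analogous'' case $m_*(t)=m^*$ (possible once $\pi_t(m^*)\ge 1/M$) fails too: with $M=3$, $\pi_t=(0.4,0.35,0.25)$, the same rewards and $\alpha$, the drift of $1/\pi_t(m^*)$ is $\approx -0.188+0.220>0$ even though $\pi_t(m^*)$ itself drifts upward (this is the convexity/variance effect you flagged but did not control). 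Consequently no constant $c>0$ makes $1/\pi_{t\wedge\tau}(m^*)+c\,(t\wedge\tau)$ a supermartingale up to level $1/2$, and the uniform bound over $\cS$ you extract from optional stopping does not follow.

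The paper's proof avoids exactly this: it establishes the negative drift only on the deep sub-level set $\{\pi_t(m^*)\le 1/x\}$ for $x>M$ chosen so that $x/(x-\alpha M)\le 1+\epsilon$, where $\epsilon>0$ satisfies $(1+\epsilon)(1+\alpha)<\kr(m^*)/(\kr(m^*)-\Delta_{min})$ (such $\epsilon$ exists precisely because of the hypothesis on $\alpha$). In that region $m^*$ can never be the leader, and the importance factor obeys $1/(1-\alpha\,\pi_t(m^*)/\pi_t(m_*(t)))\le x/(x-\alpha M)\le 1+\epsilon$, which is what makes the stated threshold on $\alpha$ sufficient; optional stopping then bounds the expected hitting time of level $1/x$ by $x/(c(1-\alpha))$. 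The remaining passage from level $1/x$ up to level $1/2$ --- exactly the stretch on which your drift inequality fails --- is handled in the paper not by a drift argument at all, but by separate Markov-chain (regeneration/geometric-trials) arguments cited from Denisov and Walton. Your proposal has no substitute for this second stage, because it presumes the drift estimate extends all the way to $1/2$. To repair it you would either have to restrict the drift region as the paper does and then supply or cite the lifting argument, or strengthen the hypothesis to $\alpha<\Delta_{min}/(2\kr(m^*)-\Delta_{min})$ and separately fix the $m_*(t)=m^*$ case --- which would prove a weaker lemma than the one stated.
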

\begin{proof}
The proof here is for completeness. We first make note of the following useful result: For a sequence of positive real numbers $\{a_n \}_{n\geq 1}$ such that the following condition is met:
\[a(n+1)\leq a(n) - b.a(n)^2, \]
for some $b>0$, the following is always true:
\begin{equation*}\label{eq:useful derivative like inequality MAB}
    a_n\leq \frac{a_1}{1+bt}.
\end{equation*}
This inequality follows by rearranging and observing the $a_n$ is a non-increasing sequence. A complete proof can be found in eg. (\cite{Denisov2020RegretAO}, Appendix A.1). 
Returning to the proof of lemma, we proceed by showing that the sequence $1/{\pi_t(m^*)}-ct$ is a supermartingale for some $c>0$. Let $\Delta_{min}:=\Delta$ for ease of notation. Note that if the condition on $\alpha$ holds then there exists an $\epsilon>0$, such that $(1+\epsilon)(1+\alpha)< \kr^*/(\kr^*-\Delta)$, where $\kr^* :=\kr(m^*)$. We choose $c$ to be 
\[c:= \alpha. \frac{\kr^*}{1+\alpha} - \alpha(\kr^*-\Delta)(1+\epsilon) >0. \]
Next, let $x$ to be greater than $M$ and satisfying:
\[\frac{x}{x-\alpha M}\leq 1+\epsilon. \]
Let $\xi_x:=\min\{t\geq 1: \pi_t(m^*) > 1/x \}$. Since for $t=1,\ldots, \xi_x-1$, $m_*(t)\neq m^*$, we have $\pi_{t+1}(m^*) = (1+\alpha)\pi_t(m^*)$ w.p. $\pi_t(m^*)\kr^*$ and $\pi_{t+1}(m^*) = \pi_t(m^*)+ \alpha\pi_t(m^*)^2/\pi_t(m_*)^2$ w.p. $\pi_t(m_*)\kr_*(t)$, where $\kr_*(t):=\kr(m_*(t))$.  

Let $y(t):=1/{\pi_t(m^*)}$, then we observe by a short calculation that,
\begin{align*}
y(t+1)&=
    \begin{cases}
    y(t) - \frac{\alpha}{1+\alpha}y(t), & w.p. \frac{\kr^*}{y(t)}\\
    y(t) + \alpha \frac{y(t)}{\pi_t(m_*(t))y(t)-\alpha}. & w.p. \pi_t(m_*)\kr_*(t)\\
    y(t) & otherwise.
    \end{cases}
\end{align*}
We see that,
\begin{align*}
&\expect{y(t+1)\given H(t)}-y(t)\\
&= \frac{\kr^*}{y(t)}. (y(t) - \frac{\alpha}{1+\alpha}y(t)) + \pi_t(m_*)\kr_*(t).(y(t) + \alpha \frac{y(t)}{\pi_t(m_*(t))y(t)-\alpha}) - y(t)(\frac{\kr^*}{y(t)}+ \pi_t(m_*)\kr_*(t)) \\
&\leq \alpha(\kr^*-\Delta)(1+\epsilon)-\frac{\alpha \kr^*}{1+\alpha}=-c. 
\end{align*}
The inequality holds because $\kr_*(t)\leq \kr^*\Delta$ and that $\pi_t(m_*)>1/M$. By the Optional Stopping Theorem \cite{Durrett11probability:theory},
\[-c\expect{\xi_x\land t }\geq \expect{y(\xi_x \land t)-\expect{y(1)}} \geq -\frac{x}{1-\alpha}.  \]
The final inequality holds because $\pi_1(m^*)\geq \frac{1-\alpha}{x}$. 

Next, applying the monotone convergence theorem gives theta $\expect{\xi_x}\leq \frac{x}{c(1-\alpha)}$. Finally to show the result of lemma \ref{lemma:NoisyMAB:finite rec time}, we refer the reader to (Appendix A.2, \cite{Denisov2020RegretAO}), which follow from standard Markov chain arguments.
\end{proof}

Next we define an embedded Markov Chain $\{p(s), s\in \mathbb{Z}_+ \}$ as follows. First let $\sigma(k):= \min \left\{t\geq \tau(k): \pi_t(m^*) <\frac{1}{2} \right\}$ and $\tau(k):= \min \left\{t\geq \sigma(k-1): \pi_t (m^*) \geq \frac{1}{2} \right\}$.  Note that within the region $[\tau(k), \sigma(k))$, $\pi_t(m^*)\geq 1/2$ and in $[\sigma(k), \tau(k+1))$, $\pi_t(m^*)< 1/2$. We next analyze the rate at which $\pi_t(m^*)$ approaches 1. Define 
\begin{align*}
    p(s):= \pi_{t_s}(m^*) & \text{ where } & t_s=s+ \sum\limits_{i=0}^k(\tau(i+1)-\sigma(i))\\
    & \text{ for } & s\in \left[\sum\limits_{i=0}^k(\sigma(i)-\tau(i)),\sum\limits_{i=0}^{k+1}(\sigma(i)-\tau(i))  \right)
\end{align*}
Also let,
\[\sigma_s := \min\left\{t>0: \pi_{t+t_s}(m^*)>1/2     \right\} \]
and, \[\tau_s:=\min\left\{t>\sigma_s: \pi_{t+t_s}(m^*)\leq 1/2 \right\} \]
\begin{lemma}\label{lemma:noisy MAB expected bound on process }
The process $\{p(s)\}_{s\geq 1}$, is a submartingale. Further, $p(s)\to 1$, as $s\to \infty$. Finally,
\[\expect{p(s)} \geq 1- \frac{1}{1+ \alpha\frac{\Delta^2}{\left(\sum\limits_{m'\neq m^*}\Delta_{m'} \right)} s}. \]
\end{lemma}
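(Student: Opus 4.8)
The plan is to study the one-step conditional drift of $\pi_t(m^*)$, show it is nonnegative in both regimes $\{\pi(m^*)\ge\tfrac12\}$ and $\{\pi(m^*)<\tfrac12\}$, lift this to the embedded chain $\{p(s)\}$ using the a.s.\ finite recurrence time of Lemma~\ref{lemma:NoisyMAB:finite rec time}, and then sharpen the drift into a quadratic lower bound on the region $\{\pi(m^*)\ge\tfrac12\}$ to obtain both convergence and the explicit rate.

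First I would compute the drift. Writing $\ind_m=\ind\{m_t=m\}$ and using $\expect{R_m\ind_m\given H(t)}=\pi_t(m)\kr(m)$, the update step \eqref{algstep:update noisy gradient} gives, for every $m\neq m_*(t)$,
\[
\expect{\pi_{t+1}(m)-\pi_t(m)\given H(t)}=\alpha\,\pi_t(m)^2\bigl(\kr(m)-\kr(m_*(t))\bigr).
\]
Since $\pi_{t+1}(m_*(t))-\pi_t(m_*(t))=-\sum_{m\neq m_*(t)}\bigl(\pi_{t+1}(m)-\pi_t(m)\bigr)$, summing yields
\[
\expect{\pi_{t+1}(m_*(t))-\pi_t(m_*(t))\given H(t)}=\alpha\sum_{m\neq m_*(t)}\pi_t(m)^2\bigl(\kr(m_*(t))-\kr(m)\bigr).
\]
When $\pi_t(m^*)\ge\tfrac12$ the argmax is $m_*(t)=m^*$, so the drift of $\pi_t(m^*)$ equals $\alpha\sum_{m\neq m^*}\pi_t(m)^2\Delta_m\ge 0$; when $\pi_t(m^*)<\tfrac12$, $m^*$ is a non-argmax coordinate and its drift equals $\alpha\pi_t(m^*)^2\Delta_{m_*(t)}>0$. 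Hence $\pi_t(m^*)$ is a bounded submartingale for all $t$. Because $p(s)$ is $\pi_t(m^*)$ sampled at the above-$\tfrac12$ times $t_s$, which are a.s.\ finite stopping times by Lemma~\ref{lemma:NoisyMAB:finite rec time}, optional sampling gives $\expect{p(s+1)\given H(t_s)}\ge p(s)$, so $\{p(s)\}$ is a submartingale.

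Next I would sharpen the drift on $\{\pi(m^*)\ge\tfrac12\}$. With $z_t:=1-\pi_t(m^*)=\sum_{m\neq m^*}\pi_t(m)$, Cauchy--Schwarz gives $z_t^2\le\bigl(\sum_{m\neq m^*}\Delta_m\pi_t(m)^2\bigr)\bigl(\sum_{m\neq m^*}\Delta_m^{-1}\bigr)$, and $\Delta_m^{-1}\le\Delta_m/\Delta^2$ gives $\sum_{m\neq m^*}\Delta_m^{-1}\le\Delta^{-2}\sum_{m'\neq m^*}\Delta_{m'}$; hence the drift of $\pi_t(m^*)$ there is at least $c\,z_t^2$ with $c:=\alpha\Delta^2/\sum_{m'\neq m^*}\Delta_{m'}$. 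Writing $\zeta(s):=1-p(s)$, the first step of each interval $[t_s,t_{s+1})$ is an above-$\tfrac12$ step and contributes at least $c\,\zeta(s)^2$ to the decrease of $z$, while every later step of the interval has nonnegative $\pi(m^*)$-drift; telescoping over the interval (finite a.s.) yields $\expect{\zeta(s+1)\given H(t_s)}\le\zeta(s)-c\,\zeta(s)^2$ for every $s$.

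To finish, taking expectations and using Jensen ($\expect{\zeta(s)^2}\ge\expect{\zeta(s)}^2$) gives $\beta_{s+1}\le\beta_s-c\beta_s^2$ for $\beta_s:=\expect{\zeta(s)}$. Since $\beta_1\le\tfrac12$ and $c\le\alpha\Delta<1$, the inequality $\tfrac{1}{\beta_{s+1}}\ge\tfrac{1}{\beta_s}+c$ (as used in Lemma~\ref{lemma:NoisyMAB:finite rec time}) iterates to $\beta_s\le\tfrac{1}{1+cs}$, i.e.\ $\expect{p(s)}\ge1-\tfrac{1}{1+cs}$. For the a.s.\ limit, the bounded submartingale $p(s)$ converges a.s., and summing $\expect{\zeta(s)^2}\le c^{-1}(\beta_s-\beta_{s+1})$ gives $\sum_s\expect{\zeta(s)^2}\le\beta_1/c<\infty$, forcing $\zeta(s)\to0$ a.s., i.e.\ $p(s)\to1$. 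The main obstacle I anticipate is the inter-excursion transition: between the last above-$\tfrac12$ time of one excursion and the first of the next, the chain dips below $\tfrac12$, and I must verify that this sojourn preserves both the submartingale property and the quadratic drift. Both rest on the fact that $\pi_t(m^*)$ also has nonnegative drift below $\tfrac12$, together with the a.s.\ finiteness of the recurrence time from Lemma~\ref{lemma:NoisyMAB:finite rec time}, which is exactly what justifies the optional-sampling and telescoping interchanges above.
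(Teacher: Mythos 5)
Your proposal is correct and follows the same skeleton as the paper's proof: both rest on the one-step drift computation at the sampled times $t_s$ (where $\pi_{t_s}(m^*)\ge 1/2$ guarantees $m_*(t_s)=m^*$), a quadratic lower bound on that drift, and the recursion $\beta_{s+1}\le\beta_s-c\,\beta_s^2$ iterated to the stated rate; your Cauchy--Schwarz step is an equivalent substitute for the paper's Jensen step and yields the identical constant $c=\alpha\Delta^2/\sum_{m'\neq m^*}\Delta_{m'}$. You diverge from the paper in two places, both legitimately. First, to bridge the excursions below $1/2$, the paper never analyzes the drift there: it uses the pointwise inequality $p(s+1)\ge\pi_{t_s+1}(m^*)$ (if the chain dips after one step, then by construction $p(s+1)\ge 1/2>\pi_{t_s+1}(m^*)$), so only the single step out of $t_s$ matters and no optional sampling is required; your route --- nonnegative drift of $\pi_t(m^*)$ in all regimes plus optional sampling over the a.s.\ finite excursion --- is heavier but sound, given the boundedness of $\pi_t(m^*)$ and Lemma~\ref{lemma:NoisyMAB:finite rec time}. (One small imprecision: for $M\ge 3$, $\pi_t(m^*)<1/2$ does not force $m^*$ to be a non-argmax coordinate; but in that sub-case the argmax drift formula $\alpha\sum_{m\neq m^*}\pi_t(m)^2\Delta_m\ge 0$ applies, so your conclusion survives.) Second, for the a.s.\ limit, the paper combines Doob's convergence theorem with the stopping times $\phi_a=\min\{s:p(s)\ge(a-1)/a\}$ and the optional stopping theorem to force $\limsup_s p(s)=1$; your argument --- summing $\mathbb{E}\left[(1-p(s))^2\right]\le c^{-1}(\beta_s-\beta_{s+1})$ to conclude $\sum_s(1-p(s))^2<\infty$ a.s.\ and hence $p(s)\to 1$ a.s.\ --- is shorter, avoids the stopping-time construction entirely, and even renders the preliminary martingale-convergence step unnecessary.
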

\begin{proof}
We first observe that,
\begin{align*}
    p(s+1) &=
    \begin{cases}
    \pi_{t_s+1}(m^*) & if\, \pi_{t_s+1}(m^*) \geq 1/2\\
    \pi_{t_s+\tau+s}(m^*) & if\, \pi_{t_s+1}(m^*) < 1/2
    \end{cases}
\end{align*}
Since $\pi_{t_s+\tau_s}(m^*)\geq 1/2$, we have that,
\[p(s+1)\geq \pi_{t_s+1}(m^*) \text{ and } p(s)= \pi_{t_s}(m^*). \]
Since at times $t_s$, $\pi_{t_s}(m^*)>1/2$, we know that $m^*$ is the leading arm. Thus by the update step, for all $m\neq m^*$,
\[\pi_{t_s+1}(m) = \pi_{t_s}(m) +\alpha \pi_{t_s}(m)^2\left[\frac{\ind_mR_m(t_s)}{\pi_{t_s}(m)} - \frac{\ind_{m^*}R_{m^*}(t_s)}{\pi_{t_s}(m^*)} \right]. \]
Taking expectations both sides,
\[\expect{\pi_{t_s+1}(m)\given H(t_s)} - {\pi_{t_s}(m)} = \alpha\pi_{t_s}(m)^2(\kr_m-\kr_{m^*}) = -\alpha \Delta_m\pi_{t_s}(m)^2. \]
Summing over all $m\neq m^*$:
\[-\expect{\pi_{t_s+1}(m^*)\given H(t_s)} + {\pi_{t_s}(m^*)} = -\alpha\sum\limits_{m\neq m^*}\Delta_m\pi_{t_s}(m)^2. \]
By Jensen's inequality, 
\begin{align*}
\sum\limits_{m\neq m^*}\Delta_m\pi_{t_s}(m)^2 &= \left(\sum\limits_{m'\neq m^*}\Delta_{m'} \right) \sum\limits_{m\neq m^*}\frac{\Delta_m}{\left(\sum\limits_{m'\neq m^*}\Delta_{m'} \right)}\pi_{t_s}(m)^2 \\
&\geq \left(\sum\limits_{m'\neq m^*}\Delta_{m'} \right) \left(\sum\limits_{m\neq m^*}\frac{\Delta_m\pi_{t_s}(m)}{\left(\sum\limits_{m'\neq m^*}\Delta_{m'} \right)} \right)^2\\
&\geq \left(\sum\limits_{m'\neq m^*}\Delta_{m'} \right) \frac{\Delta^2\left(\sum\limits_{m\neq m^*}\pi_{t_s}(m)\right)^2}{\left(\sum\limits_{m'\neq m^*}\Delta_{m'} \right)^2}\\
&= \frac{\Delta^2\left(1-\pi_{t_s}(m^*)\right)^2}{\left(\sum\limits_{m'\neq m^*}\Delta_{m'} \right)}.
\end{align*}
Hence we get,
\[p(s) - \expect{p(s+1)\given H(t_s)} \leq  -\alpha \frac{\Delta^2\left(1-p(s)\right)^2}{\left(\sum\limits_{m'\neq m^*}\Delta_{m'} \right)} \implies \expect{p(s+1)\given H(t_s)}\geq p(s) +  \alpha \frac{\Delta^2\left(1-p(s)\right)^2}{\left(\sum\limits_{m'\neq m^*}\Delta_{m'} \right)}. \]
This implies immediately that $\{p(s)\}_{s\geq 1}$ is a submartingale.

Since, $\{p(s)\}$ is non-negative and bounded by 1, by Martingale Convergence Theorem, $\lim_{s\to \infty} p(s)$ exists. We will now show that the limit is 1.  Clearly, it is sufficient to show that $\limsup\limits_{s\to \infty} p(s)=1$. For $a>2$, let 
\[\phi_a:= \min\left\{s\geq 1: p(s)\geq \frac{a-1}{a} \right\}.  \]
As is shown in \cite{Denisov2020RegretAO}, it is sufficient to show $\phi_a<\infty$, with probability 1, because then one can define a sequence of stopping times for increasing $a$, each finite w.p. 1. which implies that $p(s)\to 1$. By the previous display, we have 
\[\expect{p(s+1)\given H(t_s)}- p(s) \geq  \alpha \frac{\Delta^2}{\left(\sum\limits_{m'\neq m^*}\Delta_{m'} \right)a^2} \]
as long as $p(s)\leq \frac{a-1}{a}$. Hence by applying Optional Stopping Theorem and rearranging we get,
\[\expect{\phi_a}\leq \lim_{s\to \infty} \expect{\phi_a \land s} \leq \frac{\left(\sum\limits_{m'\neq m^*}\Delta_{m'} \right)a^2}{\alpha \Delta}(1-\expect{p(1)}) <\infty. \]
Since $\phi_a$ is a non-negative random variable with finite expectation, $\phi_a<\infty a.s.$
Let $q(s)=1-p(s)$. We have :
\[\expect{q(s+1)} -\expect{q(s)} \leq -\alpha \frac{\Delta^2\left(q(s)\right)^2}{\left(\sum\limits_{m'\neq m^*}\Delta_{m'} \right)}. \]
By the useful result \ref{eq:useful derivative like inequality MAB}, we get,
\[\expect{q(s)} \leq \frac{\expect{q(1)}}{1+ \alpha\frac{\Delta^2\expect{q(1)}}{\left(\sum\limits_{m'\neq m^*}\Delta_{m'} \right)}s } \leq  \frac{1}{1+ \alpha\frac{\Delta^2}{\left(\sum\limits_{m'\neq m^*}\Delta_{m'} \right)} s}.\]
This completes the proof of the lemma.
\end{proof}
Finally we provide a lemma to tie the results above. We refer (Appendix A.5 \cite{Denisov2020RegretAO}) for the proof of this lemma. 
\begin{lemma}\label{lemma:technical lemm noisy MAB}
\[\sum\limits_{t\geq 1}\prob{\pi_t(m^*)<1/2} <\infty. \]
Also, with probability 1, $\pi_t(m^*)\to 1$, as $t\to \infty$.
\end{lemma}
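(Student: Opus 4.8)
The plan is to transfer both claims from the two preceding lemmas by moving between the original clock $\{\pi_t(m^*)\}$ and the embedded, excursion-skipping chain $\{p(s)\}$. With $\tau(k),\sigma(k)$ as in Lemma~\ref{lemma:noisy MAB expected bound on process }, we have $\pi_t(m^*)\geq 1/2$ on each $[\tau(k),\sigma(k))$ and $\pi_t(m^*)<1/2$ on each excursion $[\sigma(k),\tau(k+1))$; let $L_k:=\tau(k+1)-\sigma(k)$ and let $D$ be the (random) number of excursions. Because
\[\sum_{t\geq 1}\prob{\pi_t(m^*)<1/2}=\expect{\sum_{t\geq 1}\ind\{\pi_t(m^*)<1/2\}}=\expect{\sum_{k}L_k},\]
the first assertion is exactly the finiteness of the expected time spent below $1/2$.

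I would first dispatch the almost-sure limit, the easier half. By Lemma~\ref{lemma:noisy MAB expected bound on process } the bounded submartingale $\{p(s)\}$ converges a.s. to some $p_\infty\in[1/2,1]$, and since the same lemma gives $\expect{1-p(s)}\leq \frac{1}{1+c s}\to 0$, bounded convergence yields $\expect{p_\infty}=1$, hence $p_\infty=1$ a.s. A single update of Algorithm~\ref{alg:ProjectionFreePolicyGradient} moves $\pi_t(m^*)$ by at most a quantity of order $\alpha$ (the leading controller carries mass at least $1/M$, so every importance weight stays controlled), so once $p(s)>1/2+O(\alpha)$ the next original step cannot fall below $1/2$. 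As $p(s)\to 1$ a.s., on almost every path this occurs from some finite index onward, so only finitely many excursions happen, $\pi_t(m^*)\geq 1/2$ for all large $t$, the two clocks coincide on that tail, and $\pi_t(m^*)=p(\cdot)\to 1$.

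For the summability I would decouple excursion lengths from their count. When $\pi_t(m^*)$ crosses below $1/2$ the same one-step bound places $\pi_{\sigma(k)}$ in $\mathcal{S}=\{\pi:\tfrac{1-\alpha}{2}\leq\pi(m^*)<\tfrac12\}$, so by the strong Markov property at $\sigma(k)$ together with Lemma~\ref{lemma:NoisyMAB:finite rec time}, $\expect{L_k\mid\mathcal{F}_{\sigma(k)}}\leq B:=\sup_{\pi\in\mathcal{S}}\expect{\tau\mid\pi_1=\pi}<\infty$ on $\{D\geq k\}$. The tower rule then gives
\[\expect{\sum_k L_k}=\sum_{k\geq 1}\expect{L_k\,\ind\{D\geq k\}}\leq B\sum_{k\geq 1}\prob{D\geq k}=B\,\expect{D},\]
reducing everything to $\expect{D}<\infty$. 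Here I would exploit the supermartingale $z(t):=1-\pi_t(m^*)$, which while $\pi_t(m^*)\geq 1/2$ obeys $\expect{z(t+1)\mid\mathcal{F}_t}\leq z(t)-c\,z(t)^2$ (the drift computed in Lemma~\ref{lemma:noisy MAB expected bound on process }); the maximal inequality for nonnegative supermartingales bounds the probability of a fresh excursion after the $k$-th return by $2\bigl(1-\pi_{\tau(k)}(m^*)\bigr)$.

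The crux, and the main obstacle, is upgrading $D<\infty$ a.s. to $\expect{D}<\infty$. The quantitative estimate available from Lemma~\ref{lemma:noisy MAB expected bound on process }, namely $\expect{1-p(s)}\leq 1/(1+c s)$, is only of harmonic order and hence \emph{not} summable, so one cannot bound $\prob{D\geq k}$ by naively summing one-step downcrossing probabilities. The delicate point is to show that the random return values $\pi_{\tau(k)}(m^*)$ approach $1$ fast enough, exploiting the strictly positive quadratic drift of $z$ above the barrier at each return, that the conditional excursion probabilities $2\bigl(1-\pi_{\tau(k)}(m^*)\bigr)$ decay summably in $k$. This refined bookkeeping is exactly what is carried out in \cite[Appendix~A.5]{Denisov2020RegretAO}, which our process inherits after the reductions above.
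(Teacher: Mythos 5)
Your proposal is correct as far as it goes, and it is actually \emph{more} detailed than the paper's own treatment: the paper offers no argument for this lemma at all, simply deferring it to Appendix~A.5 of \cite{Denisov2020RegretAO} --- the same reference your final step invokes, so the external dependence is identical. What you add around that citation is genuine and sound. Your proof of the almost-sure convergence claim is self-contained: bounded convergence applied to the a.s.\ limit of the bounded submartingale $\{p(s)\}$, together with $\mathbb{E}[1-p(s)]\leq 1/(1+cs)\to 0$ from Lemma~\ref{lemma:noisy MAB expected bound on process }, forces the limit to equal $1$; and the one-step Lipschitz bound (a single update moves $\pi_t(m^*)$ by at most order $\alpha$ when $\pi_t(m^*)\geq 1/2$, since then $m^*$ is the leading controller and the importance weight $1/\pi_t(m_*(t))\leq 2$ stays controlled) correctly shows that only finitely many excursions below $1/2$ occur on almost every path, after which the embedded and original clocks advance together and $p(s)\to 1$ transfers to $\pi_t(m^*)\to 1$. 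Your reduction of the summability claim is also valid: a downcrossing lands the state in $\mathcal{S}$, so the strong Markov property at $\sigma(k)$ combined with Lemma~\ref{lemma:NoisyMAB:finite rec time} bounds each conditional excursion length by a uniform constant $B$, yielding $\sum_{t}\mathbb{P}\{\pi_t(m^*)<1/2\}\leq B\,\mathbb{E}[D]$. Finally, you are right about where the real difficulty sits: $\mathbb{E}[D]<\infty$ does not follow from the harmonic-rate bound (which is not summable), and proving summable decay of the per-return excursion probabilities $2\bigl(1-\pi_{\tau(k)}(m^*)\bigr)$ is precisely the bookkeeping done in the cited appendix. So neither you nor the paper gives a self-contained proof of that final step; what your route buys, relative to the paper's bare citation, is a transparent and checkable reduction of the lemma to exactly that one quantitative ingredient.
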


\underline{Proof of regret bound:}
Since $\kr^*-\kr(m)\leq 1$, we have by the definition of regret (see eq \ref{eq:regret definition MAB})
\[\cR(T) = \expect{\frac{1}{1-\gamma}\sum\limits_{t=1}^T\left(\sm \pi^*(m)\kr_m - \pi_t(m)\kr_m \right)}.\]
Here we recall that $\pi^*=e_{m^*}$, we have:
\begin{align*}
\cR(T) &= \frac{1}{1-\gamma}\expect{\sum\limits_{t=1}^T\left(\sm (\pi^*(m)\kr_m - \pi_t(m)\kr_m) \right)}\\
&=\frac{1}{1-\gamma}\expect{\sum\limits_{m=1}^M\left(\sum\limits_{t=1}^T (\pi^*(m)\kr_m - \pi_t(m)\kr_m) \right)}\\
&=\frac{1}{1-\gamma}\expect{\sum\limits_{t=1}^T\left(\kr^* -\sm \pi_t(m)\kr_m \right)}\\
&=\frac{1}{1-\gamma}\expect{\left(\sum\limits_{t=1}^T\kr^* -\sum\limits_{t=1}^T\sm \pi_t(m)\kr_m \right)}\\
&=\frac{1}{1-\gamma}\expect{\left(\sum\limits_{t=1}^T\kr^*(1-\pi_t(m^*)) -\sum\limits_{t=1}^T\sum\limits_{m\neq m^*} \pi_t(m)\kr_m \right)}\\
&=\frac{1}{1-\gamma}\expect{\left(\sum\limits_{t=1}^T\sum\limits_{m\neq m^*}\kr^*\pi_t(m) -\sum\limits_{t=1}^T\sum\limits_{m\neq m^*} \pi_t(m)\kr_m \right)}\\
&= \frac{1}{1-\gamma} \sum\limits_{m\neq m^*} (\kr^*-\kr_m)\expect{\sum\limits_{t=1}^T \pi_t(m)}.
\end{align*}
Hence we have,
\begin{align*}
\cR(T) &= \frac{1}{1-\gamma} \sum\limits_{m\neq m^*} (\kr^*-\kr_m)\expect{\sum\limits_{t=1}^T \pi_t(m)}\\
&\leq \frac{1}{1-\gamma} \sum\limits_{m\neq m^*}\expect{\sum\limits_{t=1}^T \pi_t(m)}\\
&=\frac{1}{1-\gamma} \expect{\sum\limits_{t=1}^T (1-\pi_t(m^*))}\\
\end{align*}
We analyze the following  term:
\[\expect{\sum\limits_{t=1}^T  (1-\pi_t(m^*))} =\expect{\sum\limits_{t=1}^T  (1-\pi_t(m^*))\ind\{\pi_t(m^*)\geq 1/2 \}}+\expect{\sum\limits_{t=1}^T  (1-\pi_t(m^*))\ind\{\pi_t(m^*)<1/2 \}} \]
\[=\expect{\sum\limits_{t=1}^T  (1-\pi_t(m^*))\ind\{\pi_t(m^*)\geq 1/2 \}} +C_1.  \]
where, $C_1:=\sum\limits_{t=1}^\infty \prob{\pi_t(m^*)<1/2}<\infty$ by Lemma \ref{lemma:technical lemm noisy MAB}. Next we observe that,
\[\expect{\sum\limits_{t=1}^T  (1-\pi_t(m^*))\ind\{\pi_t(m^*)\geq 1/2 \}} = \expect{\sum\limits_{s=1}^T  q(s)\ind\{\pi_t(m^*)\geq 1/2 \}} \leq \expect{\sum\limits_{s=1}^T  q(s)} \]
\[= \sum\limits_{t=1}^T \frac{1}{1+ \alpha\frac{\Delta^2}{\left(\sum\limits_{m'\neq m^*}\Delta_{m'} \right)} s} \leq \sum\limits_{t=1}^T \frac{{\left(\sum\limits_{m'\neq m^*}\Delta_{m'} \right)}}{ \alpha{\Delta^2} s} \]
\[ \leq \frac{{\left(\sum\limits_{m'\neq m^*}\Delta_{m'} \right)}}{ \alpha{\Delta^2} }\log T. \]

Putting things together, we get,
\begin{align*}
\cR(T) &\leq \frac{1}{1-\gamma} \left( \frac{{\left(\sum\limits_{m'\neq m^*}\Delta_{m'} \right)}}{ \alpha{\Delta^2} }\log T + C_1 \right)\\
&=\frac{1}{1-\gamma} \left( \frac{{\left(\sum\limits_{m'\neq m^*}\Delta_{m'} \right)}}{ \alpha{\Delta^2} }\log T \right) + C .
\end{align*}
This completes the proof of Theorem \ref{thm:regret noisy gradient}.

\end{proof}

\section{Proofs for MDPs}

First we recall the policy gradient theorem.
\begin{theorem}[Policy Gradient Theorem \cite{Sutton2000}]\label{thm:policy gradient theorem}
\[\frac{\partial}{\partial \theta}V^{\pi_\theta} (\mu) = \frac{1}{1-\gamma}\sum\limits_{s\in \cS} d_\mu^{\pi_\theta}(s) \sa \frac{\partial \pi_\theta(a|s)}{\partial \theta}Q^{\pi_\theta}(s,a). \]
\end{theorem}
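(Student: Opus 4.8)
The plan is to follow the classical derivation of the policy gradient theorem, exploiting the Bellman recursion relating $V^{\pi_\theta}$ and $Q^{\pi_\theta}$. First I would start from the identity $V^{\pi_\theta}(s) = \sa \pi_\theta(a|s) Q^{\pi_\theta}(s,a)$, which holds at every state $s \in \cS$. Differentiating both sides with respect to $\theta$ and applying the product rule gives
\[
\dt V^{\pi_\theta}(s) = \sa \left( \dt \pi_\theta(a|s) \right) Q^{\pi_\theta}(s,a) + \sa \pi_\theta(a|s)\, \dt Q^{\pi_\theta}(s,a).
\]
Next I would substitute $Q^{\pi_\theta}(s,a) = r(s,a) + \gamma \sum_{s'\in\cS} \tP(s'|s,a) V^{\pi_\theta}(s')$; since $r(s,a)$ is independent of $\theta$, the second term collapses to $\gamma \sa \pi_\theta(a|s) \sum_{s'\in\cS} \tP(s'|s,a)\, \dt V^{\pi_\theta}(s')$.

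Writing $\phi(s) := \sa (\dt \pi_\theta(a|s)) Q^{\pi_\theta}(s,a)$ and denoting the one-step state-transition kernel induced by $\pi_\theta$ by $P^{\pi_\theta}(s'|s) := \sa \pi_\theta(a|s)\tP(s'|s,a)$, the display above becomes the fixed-point relation
\[
\dt V^{\pi_\theta}(s) = \phi(s) + \gamma \sum_{s'\in\cS} P^{\pi_\theta}(s'|s)\, \dt V^{\pi_\theta}(s').
\]
I would then unroll this recursion: iterating $k$ times and letting $k\to\infty$ yields $\dt V^{\pi_\theta}(s) = \sum_{t=0}^\infty \gamma^t \sum_{s'\in\cS} \prob{s_t = s' \given s_0 = s, \pi_\theta}\, \phi(s')$, where $\prob{s_t = s' \given s_0=s,\pi_\theta}$ is the $t$-step transition probability. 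Recognizing that $\sum_{t=0}^\infty \gamma^t \prob{s_t = s' \given s_0 = s, \pi_\theta} = \frac{1}{1-\gamma} d_s^{\pi_\theta}(s')$ by the definition of the discounted state-visitation measure, and substituting the definition of $\phi$, gives the per-state form $\dt V^{\pi_\theta}(s) = \frac{1}{1-\gamma} \sum_{s'\in\cS} d_s^{\pi_\theta}(s') \sa (\dt \pi_\theta(a|s')) Q^{\pi_\theta}(s',a)$.

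Finally, since $V^{\pi_\theta}(\mu) = \mathbb{E}_{s_0\sim\mu}\!\left[V^{\pi_\theta}(s_0)\right]$, I would take the expectation over $s_0 \sim \mu$ and use linearity together with $\mathbb{E}_{s_0\sim\mu}[d_{s_0}^{\pi_\theta}(\cdot)] = d_\mu^{\pi_\theta}(\cdot)$ to obtain the claimed identity (after relabeling the summation variable $s'\to s$). I expect the main technical obstacle to be justifying the two limiting interchanges: differentiating the infinite-horizon return term-by-term (equivalently, differentiating under the infinite sums defining $V^{\pi_\theta}$ and $Q^{\pi_\theta}$) and unrolling the fixed-point recursion to an infinite series. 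Both are legitimate because $\gamma \in (0,1)$ and the rewards are bounded, so the relevant series converge geometrically, uniformly in $\theta$ on compact sets, while $\theta \mapsto \pi_\theta(a|s)$ is $C^\infty$ (the softmax map is smooth); a dominated-convergence argument makes the interchange rigorous. The remaining steps are routine algebraic bookkeeping.
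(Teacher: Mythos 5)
Your proposal is correct: it is the classical unrolling derivation of the policy gradient theorem (differentiate the Bellman identity, iterate the resulting fixed-point relation, and recognize the discounted visitation measure), and it applies verbatim to this paper's mixture-of-controllers parameterization since $\theta \mapsto \pi_\theta(a|s)$ is smooth. The paper itself gives no proof of this statement — it is quoted as a known result from the cited reference — and your argument is essentially the same one found there, so there is nothing to compare beyond noting that your handling of the term-by-term differentiation and the geometric convergence is the standard justification.
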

Let $s\in \cS$ and $m\in [m]$.
Let $\Q^{\pi_\theta}(s,m)\bydef \sum\limits_{a\in \cA}K_m(s,a)Q^{\pi_\theta}(s,a) $. Also let $\A(s,m) \bydef \Q(s,m)-V(s)$.
\begin{restatable}[Gradient Simplification]{lemma}{gradientofV}\label{lemma:Gradient simplification}
The softmax policy gradient with respect to the parameter $\theta\in \Real^M$ is $\frac{\partial}{\partial \theta_m}V^{\pi_{\theta}}(\mu) = \frac{1}{1-\gamma}\sum\limits_{s\in \cS}d_\mu^{\pi_\theta}(s) \pi_\theta(m)\tilde{A}(s,m)$,
where $\tilde{A}(s,m):=\Q(s,m)-V(s)$ and $\Q(s,m):= \sum\limits_{a\in \cA}K_m(s,a)Q^{\pi_\theta}(s,a) $, and $d_\mu^{\pi_\theta}(.)$ is the \textit{discounted state visitation measure} starting with an initial distribution $\mu$ and following policy $\pi_\theta$.
\end{restatable}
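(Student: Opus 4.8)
The plan is to start directly from the Policy Gradient Theorem (Theorem~\ref{thm:policy gradient theorem}) and substitute the defining relation of the improper softmax class, namely $\pi_\theta(a\mid s)=\sm \pi_\theta(m)K_m(s,a)$ from~\eqref{eq:policy action selection criteria}. The only genuinely new ingredient is computing how the induced action distribution $\pi_\theta(a\mid s)$ responds to a change in a single weight $\theta_m$; everything else is bookkeeping with the value/action-value identities.

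First I would record the softmax Jacobian. Since $\pi_\theta(m')=e^{\theta_{m'}}/\sum_{k}e^{\theta_k}$, a one-line computation gives $\frac{\partial \pi_\theta(m')}{\partial\theta_m}=\pi_\theta(m')\left(\ind_{m'm}-\pi_\theta(m)\right)$. Differentiating the mixture relation~\eqref{eq:policy action selection criteria} term-by-term in $m'$ (while $m$ is held fixed as the coordinate being perturbed) then yields
\begin{equation*}
\frac{\partial \pi_\theta(a\mid s)}{\partial\theta_m}=\sum_{m'=1}^{M}\frac{\partial\pi_\theta(m')}{\partial\theta_m}K_{m'}(s,a)=\pi_\theta(m)\Big(K_m(s,a)-\sum_{m'=1}^{M} \pi_\theta(m')K_{m'}(s,a)\Big)=\pi_\theta(m)\big(K_m(s,a)-\pi_\theta(a\mid s)\big),
\end{equation*}
where the last equality reuses~\eqref{eq:policy action selection criteria}.

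Next I would substitute this into Theorem~\ref{thm:policy gradient theorem}, pulling the factor $\pi_\theta(m)$ (which does not depend on $a$) out of the inner sum over actions:
\begin{equation*}
\frac{\partial}{\partial\theta_m}V^{\pi_\theta}(\mu)=\frac{1}{1-\gamma}\sum_{s\in\cS}d_\mu^{\pi_\theta}(s)\,\pi_\theta(m)\left[\sa K_m(s,a)Q^{\pi_\theta}(s,a)-\sa \pi_\theta(a\mid s)Q^{\pi_\theta}(s,a)\right].
\end{equation*}
The first bracketed sum is exactly $\Q^{\pi_\theta}(s,m)$ by its definition, while the second is $\sa \pi_\theta(a\mid s)Q^{\pi_\theta}(s,a)=V^{\pi_\theta}(s)$, the standard identity relating $V$ and $Q$. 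Hence the bracket equals $\Q^{\pi_\theta}(s,m)-V^{\pi_\theta}(s)=\A(s,m)$, which gives the claimed formula.

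I do not anticipate a serious obstacle here: the statement is essentially the chain rule applied to the Policy Gradient Theorem, specialized to our parameterization. The one point requiring mild care is ensuring the $-\pi_\theta(m)\pi_\theta(a\mid s)$ cross-term collapses cleanly so that the $M-1$ ``off-diagonal'' softmax contributions reassemble into $V^{\pi_\theta}(s)$ rather than leaving residual terms; this is precisely where the mixture identity~\eqref{eq:policy action selection criteria} and the baseline identity $\sa\pi_\theta(a\mid s)Q^{\pi_\theta}(s,a)=V^{\pi_\theta}(s)$ do the work, and it is what produces the advantage $\A(s,m)$ in place of the raw action-value.
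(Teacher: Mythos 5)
Your proposal is correct and follows essentially the same route as the paper's own proof: both start from the Policy Gradient Theorem, differentiate the mixture identity $\pi_\theta(a\mid s)=\sum_{m'}\pi_\theta(m')K_{m'}(s,a)$ via the softmax Jacobian $\frac{\partial\pi_\theta(m')}{\partial\theta_m}=\pi_\theta(m')(\ind_{m'm}-\pi_\theta(m))$, pull out the factor $\pi_\theta(m)$, and split the inner sum into $\Q^{\pi_\theta}(s,m)$ minus $\sum_a \pi_\theta(a\mid s)Q^{\pi_\theta}(s,a)=V^{\pi_\theta}(s)$ to obtain $\A(s,m)$. No gaps; the cross-term collapse you flag is handled exactly as in the paper.
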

The interpretation of $\tilde{A}(s,m)$ is the advantage of following controller $m$ at state $s$ and then following the policy $\pi_\theta$ for all time versus following $\pi_\theta$ always.
As mentioned in section \ref{sec:theoretical-convergence-results}, we proceed by proving smoothness of the $V^{\pi}$ function over the space $\Real^M$.
\begin{proof}
From the policy gradient theorem \ref{thm:policy gradient theorem}, we have:
\begin{align*}
    \frac{\partial}{\partial \theta_{m'}} V^{\pi_\theta}(\mu) &= \frac{1}{1-\gamma}\sum\limits_{s\in \cS} d_\mu^{\pi_\theta}(s) \sa \frac{\partial \pi_{\theta_{m'}}(a|s)}{\partial \theta}Q^{\pi_\theta}(s,a)\\
    &=\frac{1}{1-\gamma}\sum\limits_{s\in \cS} d_\mu^{\pi_\theta}(s) \sa \frac{\partial }{\partial {\theta_{m'}}}\left(\sm \pi_\theta(m)K_m(s,a)\right)Q^{\pi_\theta}(s,a)\\
    &=\frac{1}{1-\gamma}\sum\limits_{s\in \cS} d_\mu^{\pi_\theta}(s) \sm \sa \left(\frac{\partial }{\partial {\theta_{m'}}} \pi_\theta(m)\right)K_m(s,a)Q(s,a)\\
    &=\frac{1}{1-\gamma}\sum\limits_{s\in \cS} d_\mu^{\pi_\theta}(s) \sa \pi_{m'}\left(K_{m'}(s,a) - \sm \pi_mK_m(s,a) \right)Q(s,a)\\
    &=\frac{1}{1-\gamma}\sum\limits_{s\in \cS} d_\mu^{\pi_\theta}(s)\pi_{m'} \sa \left(K_{m'}(s,a) - \sm \pi_mK_m(s,a) \right)Q(s,a)\\
    &=\frac{1}{1-\gamma}\sum\limits_{s\in \cS} d_\mu^{\pi_\theta}(s)\pi_{m'} \left[\sa K_{m'}(s,a)Q(s,a) -\sa\sm \pi_mK_m(s,a)Q(s,a) \right]\\
    &=\frac{1}{1-\gamma}\sum\limits_{s\in \cS} d_\mu^{\pi_\theta}(s)\pi_{m'} \left[\Q(s,m') -V(s) \right]\\
    &=\frac{1}{1-\gamma}\sum\limits_{s\in \cS} d_\mu^{\pi_\theta}(s)\pi_{m'} \A^{\pi_\theta}(s,m').
\end{align*}
\end{proof}
\allowdisplaybreaks

\smoothnesslemmaMDP*
\begin{proof}
\allowdisplaybreaks
The proof uses ideas from \cite{Agarwal2020} and \cite{Mei2020}. Let $\theta_{\alpha} = \theta+\alpha u$, where $u\in \Real^M$, $\alpha\in \Real$. For any $s\in \cS$, 
\begin{align*}
\allowdisplaybreaks
    \sum\limits_a \abs{\frac{\partial\pi_{\theta_{\alpha}}(a| s)}{\partial \alpha}\Big|_{\alpha=0}} &= \sum\limits_a \abs{\innprod{\frac{\partial\pi_{\theta_{\alpha}}(a| s)}{\partial \theta_{\alpha}}\Big|_{\alpha=0}, \frac{\partial\theta_\alpha}{\partial\alpha}}} = \sum\limits_a \abs{\innprod{\frac{\partial\pi_{\theta_{\alpha}}(a| s)}{\partial \theta_{\alpha}}\Big|_{\alpha=0}, u}} \\
    &=\sum\limits_a\abs{\sum\limits_{m''=1}^M\sum\limits_{m=1}^M\pi_{\theta_{m''}}\left(\ind_{mm''}-\pi_{\theta_m}\right)K_m(s,a)u(m'') }\\
    &= \sum\limits_a\abs{\sum\limits_{m''=1}^M\pi_{\theta_{m''}}\left(K_{m''}(s,a)u(m'')-\sum\limits_{m=1}^M K_m(s,a)u(m'')\right) }\\
    &\leq \sum\limits_a\sum\limits_{m''=1}^M\pi_{\theta_{m''}}K_{m''}(s,a)\abs{u(m'')} + \sum\limits_a\sum\limits_{m''=1}^M\sum\limits_{m=1}^M\pi_{\theta_{m''}}\pi_{\theta_{m}}K_{m}(s,a)\abs{u(m'')} \\
    &= \sum\limits_{m''=1}^M\pi_{\theta_{m''}}\abs{u(m'')}\underbrace{\sum\limits_a K_{m''}(s,a)}_{=1} + \sum\limits_{m''=1}^M\sum\limits_{m=1}^M\pi_{\theta_{m''}}\pi_{\theta_{m}}\abs{u(m'')}\underbrace{\sum\limits_a K_{m}(s,a) }_{=1}\\
    &=\sum\limits_{m''=1}^M\pi_{\theta_{m''}}\abs{u(m'')}+ \sum\limits_{m''=1}^M\sum\limits_{m=1}^M\pi_{\theta_{m''}}\pi_{\theta_{m}}\abs{u(m'')}\\
    &=2\sum\limits_{m''=1}^M \pi_{\theta_{m''}}\abs{u(m'')} \leq 2\norm{u}_2.
\end{align*}
Next we bound the second derivative.
\allowdisplaybreaks
\[\sum\limits_a \abs{\frac{\partial^2 \pi_{\theta_{\alpha}}(a\given s)}{\partial\alpha^2}\given_{\alpha=0}} = \sum
\limits_a \abs{\innprod{\frac{\partial}{\partial\theta_\alpha}\frac{\partial \pi_{\theta_{\alpha}}(a\given s)}{\partial\alpha}\given_{\alpha=0},u}}=\sum
\limits_a \abs{\innprod{\frac{\partial^2 \pi_{\theta_{\alpha}}(a\given s)}{\partial\alpha^2}\given_{\alpha=0}u,u}}.\]
Let $H^{a,\theta}\bydef \frac{\partial^2 \pi_{\theta_{\alpha}}(a\given s)}{\partial \theta^2} \in \Real^{M\times M}$. We have,
\begin{align*}
\allowdisplaybreaks
    H^{a,\theta}_{i,j} &= \frac{\partial}{\partial \theta_j}\left(\sum\limits_{m=1}^{M} \pi_{\theta_i}\left(\ind_{mi}-\pi_{\theta_m}\right)K_m(s,a) \right)\\
    &= \frac{\partial}{\partial \theta_j}\left(\pi_{\theta_i}K_i(s,a)-\sum\limits_{m=1}^{M} \pi_{\theta_i}\pi_{\theta_m}K_m(s,a) \right)\\
    &=\pi_{\theta_j}(\ind_{ij}-\pi_{\theta_i})K_i(s,a) -\sum\limits_{m=1}^M K_m(s,a) \frac{\partial \pi_{\theta_i}\pi_{\theta_m}}{\partial\theta_j}\\
    &= \pi_j(\ind_{ij}-\pi_i)K_i(s,a) - \sum\limits_{m=1}^M K_m(s,a)\left(\pi_j(\ind_{ij}-\pi_i)\pi_m + \pi_i\pi_j(\ind_{mj}-\pi_m)\right)\\
    &= \pi_j\left( (\ind_{ij}-\pi_i)K_i(s,a) - \sum\limits_{m=1}^M \pi_m(\ind_{ij}-\pi_i)K_m(s,a) -\sum\limits_{m=1}^M\pi_i(\ind_{mj}-\pi_m)K_m(s,a)   \right).
\end{align*}

Plugging this into the second derivative, we get,
\begin{align*}
\allowdisplaybreaks
\begin{split}
    & \abs{\innprod{\frac{\partial^2}{\partial\theta^2}\pi_\theta(a|s)u,u}}\\
    &= \abs{\sum\limits_{j=1}^M\sum\limits_{i=1}^MH_{i,j}^{a,\theta}u_iu_j}\\
    &=\abs{\sum\limits_{j=1}^M\sum\limits_{i=1}^M  \pi_j\left( (\ind_{ij}-\pi_i)K_i(s,a) - \sum\limits_{m=1}^M \pi_m(\ind_{ij}-\pi_i)K_m(s,a) -\sum\limits_{m=1}^M\pi_i(\ind_{mj}-\pi_m)K_m(s,a)   \right)u_iu_j  }\\
    &=\Bigg|\sum\limits_{i=1}^M \pi_iK_i(s,a)u_i^2 - \si\sj\pi_i\pi_jK_i(s,a)u_iu_j - \si\sm\pi_i\pi_mK_m(s,a)u_i^2\\ &\qquad +\si\sj\sm \pi_i\pi_j\pi_m K_m(s,a)u_iu_j - \si\sj\pi_i\pi_j K_j(s,a)u_iu_j  \\
    &\qquad+\si\sj\sm \pi_i\pi_j\pi_m K_m(s,a)u_iu_j \Bigg|\\
    &= \Bigg| \si\pi_iK_i(s,a) u_i^2 -2\si\sj\pi_i\pi_jK_i(s,a)u_iu_j\\
    &\qquad - \si\sm\pi_i\pi_mK_m(s,a) u_i^2 + 2\si\sj\sm\pi_i\pi_j\pi_mK_m(s,a)u_iu_j\Bigg|\\
    &= \Bigg|\si\pi_iu_i^2\left(K_i(s,a) -\sm\pi_mK_m(s,a) \right) -2\si\pi_iu_i\sj\pi_ju_j\left( K_i(s,a) -\sm\pi_mK_m(s,a) \right)\Bigg|\\
    &\leq \si\pi_iu_i^2\underbrace{\abs{K_i(s,a) -\sm\pi_mK_m(s,a)}}_{\leq 1} +2 \si\pi_i\abs{u_i} \sj\pi_j\abs{u_j} \underbrace{\abs{K_i(s,a) -\sm\pi_mK_m(s,a)}}_{\leq 1}\\
    &\leq \norm{u}_2^2 +2 \si\pi_i\abs{u_i}\sj\pi_j\abs{u_j}\leq 3\norm{u}^2_2.
 \end{split}
\end{align*}

The rest of the proof is similar to \cite{Mei2020} and we include this for completeness. Define $P(\alpha)\in \Real^{S\times S}$, where $\forall (s,s'),$
\[\left[P(\alpha) \right]_{(s,s')} = \sa \pi_{\theta_\alpha} (a\given s).\tP(s'|s,a). \]
The derivative w.r.t. $\alpha$ is,
\[ \left[\frac{\partial}{\partial \alpha} P(\alpha)\Big|_{\alpha=0} \right]_{(s,s') } = \sa \left[\frac{\partial}{\partial \alpha}\pi_{\theta_\alpha} (a\given s)\Big|_{\alpha=0}\right].\tP(s'|s,a).\]
For any vector $x\in \Real^S$, 
\[\left[\frac{\partial}{\partial \alpha} P(\alpha)\Big|_{\alpha=0}x \right]_{(s) } = \sum\limits_{s'\in \cS}\sa \left[\frac{\partial}{\partial \alpha}\pi_{\theta_\alpha} (a\given s)\Big|_{\alpha=0}\right].\tP(s'|s,a). x(s'). \]

The $l_\infty$ norm can be upper-bounded as,
\begin{align*}
    \norm{\frac{\partial}{\partial \alpha} P(\alpha)\Big|_{\alpha=0}x}_\infty &= \max\limits_{s\in \cS}\abs{ \sum\limits_{s'\in \cS}\sa \left[\frac{\partial}{\partial \alpha}\pi_{\theta_\alpha} (a\given s)\Big|_{\alpha=0}\right].\tP(s'|s,a). x(s') }\\
    &\leq \max\limits_{s\in \cS} \sum\limits_{s'\in \cS}\sa \abs{\frac{\partial}{\partial \alpha}\pi_{\theta_\alpha} (a\given s)\Big|_{\alpha=0}}.\tP(s'|s,a). \norm{x}_\infty\\
    &\leq 2\norm{u}_2\norm{x}_\infty.
\end{align*}
Now we find the second derivative,
\begin{align*}
    \left[\frac{\partial^2P(\alpha)}{\partial\alpha^2} \Big|_{\alpha=0}\right]_{(s,s')} = \sa \left[\frac{\partial^2\pi_{\theta_\alpha}(a|s)}{\partial \alpha^2}\Big|_{\alpha=0} \right]\tP(s'|s,a)
\end{align*}
taking the $l_\infty$ norm,
\begin{align*}
\norm{\left[\frac{\partial^2P(\alpha)}{\partial\alpha^2} \Big|_{\alpha=0}\right]x}_{\infty} &= \max_s \abs{\sum\limits_{s'\in \cS}\sa \left[\frac{\partial^2\pi_{\theta_\alpha}(a|s)}{\partial \alpha^2}\Big|_{\alpha=0} \right]\tP(s'|s,a)x(s')}\\
&\leq \max_s \sum\limits_{s'\in \cS}\left[\abs{\frac{\partial^2\pi_{\theta_\alpha}(a|s)}{\partial \alpha^2}\Big|_{\alpha=0}} \right]\tP(s'|s,a)\norm{x}_\infty \leq 3\norm{u}_2\norm{x}_\infty.
\end{align*}
Next we observe that the value function of $\pi_{\theta_\alpha}:$
\[V^{\pi_{\theta_\alpha}}(s) = \underbrace{\sa \pi_{\theta_{\alpha}}(a|s)r(s,a)}_{r_{\theta_\alpha}} + \gamma \sa \pi_{\theta_{\alpha}}(a|s) \sum\limits_{s'\in \cS} \tP(s'|s,a)V^{\pi_{\theta_{\alpha}}}(s').  \]
In matrix form,
\begin{align*}
    V^{\pi_{\theta_{\alpha}}} = r_{\theta_{\alpha}} + \gamma P(\alpha)V^{\pi_{\theta_{\alpha}}}\\
    \implies \left(Id-\gamma P(\alpha) \right)V^{\pi_{\theta_{\alpha}}} = r_{\theta_{\alpha}}\\
    V^{\pi_{\theta_{\alpha}}} = \left(Id-\gamma P(\alpha) \right)^{-1}r_{\theta_{\alpha}}.
\end{align*}
Let $M(\alpha):=\left(Id-\gamma P(\alpha) \right)^{-1}= \sum\limits_{t=0}^\infty \gamma^t [P(\alpha)]^t $.
Also, observe that 
\[\mathbf{1} = \frac{1}{1-\gamma} \left(Id-\gamma P(\alpha) \right) \mathbf{1}\implies M(\alpha)\mathbf{1}=\frac{1}{1-\gamma}\mathbf{1}.\]
\[\implies \forall i \norm{[M(\alpha)]_{i,:}}_1 = \frac{1}{1-\gamma}\]
where $[M(\alpha)]_{i,:}$ is the $i^{th}$ row of $M(\alpha)$.
Hence for any vector $x\in \Real^S$, $\norm{M(\alpha)x}_\infty\leq \frac{1}{1-\gamma} \norm{x}_\infty.$

By assumption \ref{assumption:bounded reward}, we have $\norm{r_{\theta_\alpha}}_{\infty} = \max_s \abs{r_{\theta_\alpha}(s)} \leq 1$. Next we find the derivative of $r_{\theta_\alpha}$ w.r.t $\alpha$.
\begin{align*}
    \abs{\frac{\partial r_{\theta_\alpha}(s)}{\partial \alpha}} &= \abs{\left(\frac{\partial r_{\theta_\alpha}(s)}{\partial \theta_\alpha}\right)\transpose \frac{\partial \theta_\alpha}{\partial \alpha}}\\
    &\leq \abs{\sum\limits_{m''=1}^M\sm \sa \pi_{\theta_\alpha}(m'') (\ind_{mm''}-\pi_{\theta_\alpha}(m))K_m(s,a) r(s,a)u(m'')  }\\
    &=\abs{\sum\limits_{m''=1}^M \sa \pi_{\theta_\alpha}(m'')K_{m''}(s,a) r(s,a)u(m'') - \sum\limits_{m''=1}^M \sm\sa \pi_{\theta_\alpha}(m'')\pi_{\theta_\alpha}(m)K_m(s,a) r(s,a)u(m'')}\\
    &\leq \abs{\sum\limits_{m''=1}^M \sa \pi_{\theta_\alpha}(m'')K_{m''}(s,a) r(s,a) - \sum\limits_{m''=1}^M \sm\sa \pi_{\theta_\alpha}(m'')\pi_{\theta_\alpha}(m)K_m(s,a) r(s,a)}\norm{u}_\infty \leq \norm{u}_2.\\
\end{align*}
Similarly, we can calculate the upper-bound on second derivative,
\begin{align*}
    \norm{\frac{\partial r_{\theta_\alpha}}{\partial \alpha^2}}_\infty &= \max_s \abs{ \frac{\partial r_{\theta_\alpha}(s)}{\partial \alpha^2} }\\
    &=\max_s \abs{\left( \frac{\partial}{\partial \alpha} \left\{ \frac{\partial r_{\theta_\alpha}(s)}{\partial \alpha}\right\} \right)\transpose \frac{\partial \theta_\alpha}{\partial \alpha}}\\
    &= \max_s \abs{\left( \frac{\partial^2 r_{\theta_\alpha}(s)}{\partial \alpha^2}   \frac{\partial \theta_\alpha}{\partial \alpha}  \right)\transpose   \frac{\partial \theta_\alpha}{\partial \alpha} }
    &\leq 5/2 \norm{u}_2^2.
\end{align*}
Next, the derivative of the value function w.r.t $\alpha$ is given by,
\[\frac{\partial V^{\pi_{\theta_\alpha}}(s)}{\partial \alpha} = \gamma e_s\transpose M(\alpha) \frac{\partial P(\alpha)}{\partial\alpha}M(\alpha)r_{\theta_\alpha} + e_s\transpose M(\alpha) \frac{\partial r_{\theta_\alpha}}{\partial \alpha}.\]
And the second derivative,
\begin{align*}
\begin{split}
    \frac{\partial^2 V^{\pi_{\theta_\alpha}}(s)}{\partial \alpha^2} &= \underbrace{2\gamma^2e_s\transpose M(\alpha)\frac{\partial P(\alpha)}{\partial\alpha} M(\alpha)\frac{\partial P(\alpha)}{\partial\alpha}M(\alpha)r_{\theta_\alpha}}_{T1} + \underbrace{\gamma e_s\transpose M(\alpha) \frac{\partial^2 P(\alpha)}{\partial\alpha^2} M(\alpha)r_{\theta_\alpha}}_{T2}\\& + \underbrace{2\gamma  e_s\transpose M(\alpha)\frac{\partial P(\alpha)}{\partial\alpha} M(\alpha)\frac{\partial r_{\theta_\alpha}}{\partial \alpha}}_{T3}+ \underbrace{e_s\transpose M(\alpha)\frac{\partial^2 r_{\theta_\alpha}}{\partial\alpha^2}}_{T4}.
\end{split}
\end{align*}
We use the above derived bounds to bound each of the term in the above display. The calculations here are same as shown for Lemma 7 in \cite{Mei2020}, except for the particular values of the bounds. Hence we directly, mention the final bounds that we obtain and refer to \cite{Mei2020} for the detailed but elementary calculations.
\begin{align*}
    \abs{T1} &\leq \frac{4}{(1-\gamma)^3} \norm{u}_2^2\\
    \abs{T2} &\leq \frac{3}{(1-\gamma)^2} \norm{u}_2^2\\
    \abs{T3} &\leq \frac{2}{(1-\gamma)^2} \norm{u}_2^2\\
    \abs{T4} &\leq \frac{5/2}{(1-\gamma)} \norm{u}_2^2.
\end{align*}
Combining the above bounds we get,
\[\abs{\frac{\partial^2 V^{\pi_{\theta_\alpha}}(s)}{\partial \alpha^2}\Big|_{\alpha=0}}\leq \left(\frac{8\gamma^2}{(1-\gamma)^3} + \frac{3\gamma}{(1-\gamma)^2} + \frac{4\gamma}{(1-\gamma)^2} + \frac{5/2}{(1-\gamma)} \right)\norm{u}_2^2 \]
\[=\frac{7\gamma^2+4\gamma+5}{2(1-\gamma)^3}\norm{u}_2. \]
Finally, let $y\in \Real^M$ and fix a $\theta \in \Real^M$:
\begin{align*}
    \abs{y\transpose\frac{\partial^2 V^{\pi_{\theta}}(s)}{\partial\theta^2}y} &=\abs{\frac{y}{\norm{y}_2}\transpose\frac{\partial^2 V^{\pi_{\theta}}(s)}{\partial\theta^2}\frac{y}{\norm{y}_2}}.\norm{y}_2^2\\
    &\leq \max\limits_{\norm{u}_2=1}\abs{\innprod{\frac{\partial^2 V^{\pi_{\theta}}(s)}{\partial\theta^2}u,u }}.\norm{y}_2^2\\
    &= \max\limits_{\norm{u}_2=1}\abs{\innprod{\frac{\partial^2 V^{\pi_{\theta_\alpha}}(s)}{\partial\theta_\alpha^2}\Big|_{\alpha=0}\frac{\partial\theta_\alpha}{\partial \alpha},\frac{\partial\theta_\alpha}{\partial \alpha} }}.\norm{y}_2^2\\
    &= \max\limits_{\norm{u}_2=1}\abs{\frac{\partial^2V^{\pi_{\theta_\alpha}}(s)}{\partial\alpha^2}\Big|_{\alpha=0}}.\norm{y}_2^2\\
    &\leq \frac{7\gamma^2+4\gamma+5}{2(1-\gamma)^3}\norm{y}_2^2.
\end{align*}
Let $\theta_\xi:= \theta + \xi(\theta'-\theta)$ where $\xi\in [0,1]$. By Taylor's theorem $\forall s,\theta,\theta'$,
\[\abs{V^{\pi_{\theta'}}(s) -V^{\pi_{\theta}}(s) -\innprod{\frac{\partial V^{\pi_{\theta}}(s)}{\partial \theta} } } = \frac{1}{2}.\abs{(\theta'-\theta)\transpose \frac{\partial^2V^{\pi_{\theta_\xi}}(s) }{\partial \theta_\xi^2}(\theta'-\theta) } \]
\[\leq \frac{7\gamma^2+4\gamma+5}{4(1-\gamma)^3}\norm{\theta'-\theta}_2^2. \]
Since $V^{\pi_\theta}(s)$ is $\frac{7\gamma^2+4\gamma+5}{2(1-\gamma)^3}$ smooth for every $s$, $V^{\pi_\theta}(\mu)$ is also $\frac{7\gamma^2+4\gamma+5}{2(1-\gamma)^3}-$ smooth.
\end{proof}

\allowdisplaybreaks
\begin{lemma}[Value Difference Lemma-1]\label{lemma:value diffence lemma}
For any two policies $\pi$ and $\pi'$, and for any state $s\in \cS$, the following is true.
\[V^{\pi'}(s)-V^{\pi}(s) = \frac{1}{1-\gamma} \sum\limits_{s'\in \cS}d_s^{\pi'}(s')\sm \pi'_m \tilde{A}(s',m).\]
\end{lemma}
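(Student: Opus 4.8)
The plan is to recognize this identity as the classical performance-difference (Kakade--Langford) lemma specialized to our improper mixture parameterization, where throughout $\tilde{A}(s',m) = \tilde{Q}^{\pi}(s',m) - V^{\pi}(s')$ denotes the advantage with respect to the \emph{baseline} policy $\pi$ (the first policy). First I would record the elementary bridge from the controller-level advantage to the usual action-level advantage $A^{\pi}(s',a) = Q^{\pi}(s',a) - V^{\pi}(s')$. Since $\sa K_m(s',a) = 1$,
\[
\sa K_m(s',a)\,A^{\pi}(s',a) = \sa K_m(s',a) Q^{\pi}(s',a) - V^{\pi}(s') = \tilde{Q}^{\pi}(s',m) - V^{\pi}(s') = \tilde{A}(s',m),
\]
so that, using the mixture decomposition $\pi'(a\mid s') = \sm \pi'_m K_m(s',a)$,
\[
\sa \pi'(a\mid s')\,A^{\pi}(s',a) = \sm \pi'_m \sa K_m(s',a) A^{\pi}(s',a) = \sm \pi'_m \tilde{A}(s',m).
\]
It therefore suffices to prove the action-level form $V^{\pi'}(s) - V^{\pi}(s) = \frac{1}{1-\gamma}\sum_{s'\in\cS} d_s^{\pi'}(s')\sa \pi'(a\mid s')A^{\pi}(s',a)$ and substitute this at the very end.

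The core step is a discounted telescoping identity along trajectories generated by $\pi'$ starting from $s$. I would start from
\[
V^{\pi'}(s) - V^{\pi}(s) = \mathbb{E}^{\pi'}_{s}\!\left[\sum_{t=0}^{\infty}\gamma^{t}\Big(r(s_t,a_t) + \gamma V^{\pi}(s_{t+1}) - V^{\pi}(s_t)\Big)\right],
\]
which holds because $\sum_{t\ge 0}\gamma^{t}\big(\gamma V^{\pi}(s_{t+1}) - V^{\pi}(s_t)\big)$ telescopes to $-V^{\pi}(s_0) = -V^{\pi}(s)$ (the boundary term $\gamma^{T+1}V^{\pi}(s_{T+1})$ vanishing as $T\to\infty$ because rewards are bounded and $\gamma\in(0,1)$), while $\mathbb{E}^{\pi'}_{s}\big[\sum_t \gamma^{t} r(s_t,a_t)\big] = V^{\pi'}(s)$. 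Conditioning the $t$-th summand on $s_t$ and averaging over $a_t\sim\pi'(\cdot\mid s_t)$ and $s_{t+1}\sim\tP(\cdot\mid s_t,a_t)$ collapses it to $\sa \pi'(a\mid s_t)Q^{\pi}(s_t,a) - V^{\pi}(s_t) = \sa \pi'(a\mid s_t)A^{\pi}(s_t,a)$.

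Finally I would interchange the (absolutely convergent) expectation and summation and regroup by the visited state: writing $\mathbb{E}^{\pi'}_{s}[f(s_t)] = \sum_{s'\in\cS}\mathbb{P}\big(s_t = s' \mid s_0 = s, \pi'\big) f(s')$ and recalling that $d_s^{\pi'}(s') = (1-\gamma)\sum_{t\ge 0}\gamma^{t}\mathbb{P}\big(s_t = s' \mid s_0 = s, \pi'\big)$ yields
\[
V^{\pi'}(s) - V^{\pi}(s) = \sum_{t=0}^{\infty}\gamma^{t}\sum_{s'\in\cS}\mathbb{P}\big(s_t = s'\mid s_0 = s, \pi'\big)\sa \pi'(a\mid s')A^{\pi}(s',a) = \frac{1}{1-\gamma}\sum_{s'\in\cS} d_s^{\pi'}(s')\sa \pi'(a\mid s')A^{\pi}(s',a).
\]
Substituting the controller-level identity from the first paragraph then gives the claim.

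The only genuinely delicate point is justifying the telescoping identity together with the Fubini-type interchange of expectation and infinite sum; both follow routinely from $\lvert r\rvert\le 1$ and $\gamma\in(0,1)$, which make every series absolutely summable and force the boundary term to zero. I therefore expect no real obstacle here: once the controller-to-action advantage conversion is in place, the lemma reduces to the standard performance-difference argument and is essentially a bookkeeping exercise.
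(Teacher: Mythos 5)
Your proof is correct, but it reaches the identity by a genuinely different route than the paper. The paper never leaves the controller level: it expands both values as $V^{\pi}(s)=\sum_{m=1}^{M}\pi_m\tilde{Q}^{\pi}(s,m)$, applies an add--subtract decomposition
$V^{\pi'}(s)-V^{\pi}(s)=\sum_{m}\pi'_m\bigl(\tilde{Q}^{\pi'}(s,m)-\tilde{Q}^{\pi}(s,m)\bigr)+\sum_{m}(\pi'_m-\pi_m)\tilde{Q}^{\pi}(s,m)$, observes that in the first group the one-step rewards cancel so that it equals $\gamma\sum_{a}\pi'(a|s)\sum_{s'}P(s'|s,a)\bigl[V^{\pi'}(s')-V^{\pi}(s')\bigr]$ (i.e.\ the value difference satisfies a Bellman-type recursion whose source term is $\sum_{m}(\pi'_m-\pi_m)\tilde{Q}^{\pi}(s,m)$), unrolls this recursion to produce the $\frac{1}{1-\gamma}\sum_{s'}d_s^{\pi'}(s')$ weighting, and finally rewrites the source term as $\sum_{m}\pi'_m\tilde{A}(s',m)$ using $\sum_{m}\pi_m\tilde{Q}^{\pi}(s',m)=V^{\pi}(s')$ together with $\sum_m \pi'_m = 1$. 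You instead (i) prove a bridge identity $\sum_{m}\pi'_m\tilde{A}(s',m)=\sum_{a}\pi'(a|s')A^{\pi}(s',a)$ and (ii) invoke the classical Kakade--Langford telescoping argument along $\pi'$-trajectories to get the action-level statement, which you then translate back. What your route buys is modularity and transparency: it exhibits the lemma as exactly the standard performance-difference lemma applied to the induced state-action policies, with the controller-level form following from linearity of the mixture, and it makes the analytic bookkeeping (Fubini interchange, vanishing boundary term $\gamma^{T+1}V^{\pi}(s_{T+1})$) explicit, which the paper glosses over when it silently unrolls its recursion. What the paper's route buys is that it stays entirely inside the mixture parameterization---no action-level advantage is ever introduced---and its intermediate expression $\sum_{m}(\pi'_m-\pi_m)\tilde{Q}^{\pi}(s',m)$ is structurally the same object that appears in the companion Value Difference Lemma-2, so the two lemmas share a single template. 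Both arguments rest on the same convergence facts (bounded rewards and $\gamma\in(0,1)$), so neither is more restrictive; your correct identification that $\tilde{A}$ is the advantage of the \emph{baseline} policy $\pi$ while the visitation measure belongs to $\pi'$ matches the paper's usage exactly.
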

\begin{proof}
\begin{align*}
\allowdisplaybreaks
   V^{\pi'}(s) - V^{\pi}(s) &= \sm\pi_m'\Q'(s,m) - \sm \pi_m \Q(s,m)\\
   &=\sm \pi_m'\left(\Q'(s,m)-\Q(s,m) \right) +\sm (\pi_m'-\pi_m)\Q(s,m)\\
   &=\sm(\pi_m'-\pi_m)\Q(s,m) +\underbrace{\sm\pi_m'\sum\limits_{a\in \cA} K_m(s,a)}_{=\sum\limits_{a\in \cA}\pi_\theta(a|s)}\sum\limits_{s'\in \cS} \tP(s'|s,a)\left[V^{\pi'}(s')-V^{\pi}(s') \right]\\
   &=\frac{1}{1-\gamma}\sum\limits_{s'\in \cS} d_s^{\pi'}(s') \sum\limits_{m'=1}^M (\pi'_{m'}-\pi_{m'})\Q(s',m')\\
   &=\frac{1}{1-\gamma}\sum\limits_{s'\in \cS} d_s^{\pi'}(s') \sum\limits_{m'=1}^M \pi'_{m'}(\Q{s',m'}-V(s'))\\
   &=\frac{1}{1-\gamma}\sum\limits_{s'\in \cS} d_s^{\pi'}(s') \sum\limits_{m'=1}^M \pi'_{m'}\tilde{A}(s',m').
\end{align*}

\end{proof}
\begin{lemma}(Value Difference Lemma-2)\label{lemma:value diffrence lemma-2}
For any two policies $\pi$ and $\pi'$ and state $s\in \cS$, the following is true.
\[V^{\pi'}(s)-V^{\pi}(s) = \frac{1}{1-\gamma} \sum\limits_{s'\in \cS} d_s^{\pi}(s')\sm (\pi_m'-\pi_m)\Q^{\pi'}(s',m). \]
\end{lemma}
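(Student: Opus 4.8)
The plan is to mirror the proof of Lemma~\ref{lemma:value diffence lemma} (Value Difference Lemma-1), changing only which cross-term is added and subtracted, so that the transition kernel that survives is the one induced by $\pi$ rather than $\pi'$ --- this is exactly what produces the visitation measure $d_s^{\pi}$ in the statement. Writing $\Q^{\pi}(s,m)=\sa K_m(s,a)Q^{\pi}(s,a)$ as before, and using the identities $V^{\pi'}(s)=\sm\pi_m'\Q^{\pi'}(s,m)$ and $V^{\pi}(s)=\sm\pi_m\Q^{\pi}(s,m)$, I would start from
\[
V^{\pi'}(s)-V^{\pi}(s)=\sm(\pi_m'-\pi_m)\Q^{\pi'}(s,m)+\sm\pi_m\left(\Q^{\pi'}(s,m)-\Q^{\pi}(s,m)\right),
\]
i.e.\ I add and subtract $\sm\pi_m\Q^{\pi'}(s,m)$, whereas Lemma-1 added and subtracted $\sm\pi_m'\Q^{\pi}(s,m)$.

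The second step is to expand the correction term via the Bellman equation. Since $\Q^{\pi'}(s,m)-\Q^{\pi}(s,m)=\gamma\sa K_m(s,a)\sum_{s'\in\cS}\tP(s'\given s,a)\left[V^{\pi'}(s')-V^{\pi}(s')\right]$ and $\sm\pi_m K_m(s,a)=\pi(a\given s)$, this correction collapses to $\gamma\sum_{s'\in\cS}P^{\pi}(s'\given s)\,\delta(s')$, where $\delta(s):=V^{\pi'}(s)-V^{\pi}(s)$ and $P^{\pi}(s'\given s):=\sa\pi(a\given s)\tP(s'\given s,a)$ is the one-step state kernel under $\pi$. This yields the one-step recursion
\[
\delta(s)=g(s)+\gamma\sum_{s'\in\cS}P^{\pi}(s'\given s)\,\delta(s'),\qquad g(s):=\sm(\pi_m'-\pi_m)\Q^{\pi'}(s,m).
\]

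Finally I would unroll this fixed-point relation (equivalently, invert $I-\gamma P^{\pi}$ as a Neumann series), obtaining $\delta(s)=\sum_{t\geq0}\gamma^t\sum_{s'\in\cS}\prob{s_t=s'\given s_0=s,\pi}\,g(s')$, and then recognize the discounted occupancy: by the definition of $d_s^{\pi}$ in the glossary, $\sum_{t\geq0}\gamma^t\prob{s_t=s'\given s_0=s,\pi}=\frac{1}{1-\gamma}d_s^{\pi}(s')$. Substituting gives precisely $V^{\pi'}(s)-V^{\pi}(s)=\frac{1}{1-\gamma}\sum_{s'\in\cS}d_s^{\pi}(s')\sm(\pi_m'-\pi_m)\Q^{\pi'}(s',m)$, as claimed.

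The only real subtlety --- and the single point where this proof diverges from Lemma-1 --- is the choice of cross-term in the first step: adding and subtracting $\sm\pi_m\Q^{\pi'}(s,m)$ is what forces the residual expectation to be taken under $\pi$, so the $\pi$-kernel (and hence $d_s^{\pi}$) appears; mixing against $\pi'$ as in Lemma-1 would instead leave $d_s^{\pi'}$. Everything else is the standard telescoping / geometric-series argument and requires no new estimates.
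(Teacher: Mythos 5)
Your proposal is correct and follows essentially the same route as the paper's own proof: the paper likewise adds and subtracts $\sm \pi_m \Q^{\pi'}(s,m)$, cancels the reward terms in $\Q^{\pi'}(s,m)-\Q^{\pi}(s,m)$ via the Bellman equation so that only the one-step kernel under $\pi$ survives, and then unrolls the resulting recursion into the discounted visitation measure $d_s^{\pi}$. Your explicit Neumann-series unrolling is just a more detailed spelling-out of the final step the paper performs implicitly, and your observation about which cross-term forces which visitation measure is exactly the distinction between the paper's Lemma~\ref{lemma:value diffence lemma} and Lemma~\ref{lemma:value diffrence lemma-2}.
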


\begin{proof}
We will use $\Q$ for $\Q^{\pi}$ and $\Q'$ for $\Q^{\pi'}$ as a shorthand.
\begin{align*}
\begin{split}
    V^{\pi'}(s)-V^{\pi}(s) &= \sm \pi_m'\Q'(s,m) - \sm \pi_m\Q(s,m)\\
    &=\sm(\pi_m'-\pi_m)\Q'(s,m) + \sm \pi_m(\Q'(s,m)-\Q(s,m))\\
    &= \sm(\pi_m'-\pi_m)\Q'(s,m) + \\ & \gamma \sm \pi_m\left(\sum\limits_{a\in \cA} K_m(s,a)\sum\limits_{s'\in \cS} \tP(s'|s,a)V'(s') - \sum\limits_{a\in \cA} K_m(s,a)\sum\limits_{s'\in \cS} \tP(s'|s,a)V(s') \right)\\
    &=\sm(\pi_m'-\pi_m)\Q'(s,m) + \gamma\sum\limits_{a\in \cA} \pi_\theta(a|s) \sum\limits_{s'\in \cS} \tP(s'|s,a) \left[V'(s)-V(s') \right]\\
    &=\frac{1}{1-\gamma} \sum\limits_{s'\in \cS} d_s^{\pi}(s') \sm (\pi'_m-\pi_m) \Q'(s',m).
\end{split}
\end{align*}

\end{proof}

\begin{assumption}\label{assumption:bounded reward}
The reward $r(s,a)\in [0,1]$, for all pairs $(s,a)\in \cS\times \cA$.
\end{assumption}
\begin{assumption}\label{assumption:positivity of advantage}
Let $\pi^*\bydef \argmax\limits_{\pi\in \cP_M} V^\pi(s_0)$. We make the following assumption.
\[\mathbb{E}_{m\sim \pi^*} \left[Q^{\pi_\theta}(s,m) \right] -V^{\pi_\theta}(s) \geq 0, \forall s\in \cS,\forall \pi_\theta \in \Pi. \]
\end{assumption}

Let the best controller be a point in the $M-simplex$, i.e., $K^* \bydef \sm\pi^*_mK_m$.
\nonuniformLE*
\begin{proof}
\allowdisplaybreaks
\begin{align*}
    \norm{\frac{\partial}{\partial\theta}V^{\pi_\theta}(\mu)}_2 &=\left(\sm \left(\frac{\partial V^{\pi_\theta}(\mu)}{\partial\theta_m}\right)^2  \right)^{1/2}\\
    &\geq \frac{1}{\sqrt{M}} \sm \abs{\frac{\partial V^{\pi_\theta}(\mu)}{\partial\theta_m}} \text{      \quad\quad (Cauchy-Schwarz)}\\
    &=\frac{1}{\sqrt{M}} \sm \frac{1}{1-\gamma} \abs{\sum\limits_{s\in \cS} d_\mu^{\pi_\theta}(s)\pi_m\tilde{A}(s,m)} \text{\quad \quad Lemma \ref{lemma:Gradient simplification}}\\
    &\geq\frac{1}{\sqrt{M}} \sm \frac{\pi_m^*\pi_m}{1-\gamma} \abs{\sum\limits_{s\in \cS} d_\mu^{\pi_\theta}(s)\tilde{A}(s,m)}\\
    &\geq \left(\min\limits_{m:\pi^*_{\theta_{m}}>0} \pi_{\theta_m} \right) \frac{1}{\sqrt{M}} \sm \frac{\pi_m^*}{1-\gamma} \abs{\sum\limits_{s\in \cS} d_\mu^{\pi_\theta}(s)\tilde{A}(s,m)}\\
    &\geq \left(\min\limits_{m:\pi^*_{\theta_{m}}>0} \pi_{\theta_m} \right) \frac{1}{\sqrt{M}} \abs{\sm \frac{\pi_m^*}{1-\gamma} \sum\limits_{s\in \cS} d_\mu^{\pi_\theta}(s)\tilde{A}(s,m)}\\
    &= \left(\min\limits_{m:\pi^*_{\theta_{m}}>0} \pi_{\theta_m} \right) \abs{\frac{1}{\sqrt{M}} \sum\limits_{s\in \cS} d_\mu^{\pi_\theta}(s){\sm \frac{\pi_m^*}{1-\gamma} \tilde{A}(s,m)}}\\
    &= \left(\min\limits_{m:\pi^*_{\theta_{m}}>0} \pi_{\theta_m} \right) \frac{1}{\sqrt{M}} \sum\limits_{s\in \cS} d_\mu^{\pi_\theta}(s){\sm \frac{\pi_m^*}{1-\gamma} \tilde{A}(s,m)} \text{\quad \quad Assumption \ref{assumption:positivity of advantage}}\\
    &\geq \frac{1}{\sqrt{M}}\frac{1}{1-\gamma}\left(\min\limits_{m:\pi^*_{\theta_{m}}>0} \pi_{\theta_m} \right) \norm{\frac{d_{\rho}^{\pi^*}}{d_{\mu}^{\pi_\theta}}}_{\infty}^{-1} \sum\limits_{s\in \cS} d_{\rho}^*(s) \sm \pi_m^*\tilde{A}(s,m)\\
    &=\frac{1}{\sqrt{M}}\left(\min\limits_{m:\pi^*_{\theta_{m}}>0} \pi_{\theta_m} \right) \norm{\frac{d_{\rho}^{\pi^*}}{d_{\mu}^{\pi_\theta}}}_{\infty}^{-1} \left[V^*(\rho) -V^{\pi_\theta}(\rho) \right] \text{\quad \quad Lemma \ref{lemma:value diffence lemma}}. 
\end{align*}
\end{proof}
\subsection{Proof of the Theorem \ref{thm:convergence of policy gradient}}
\maintheorem*
Let $\beta:=\frac{7\gamma^2+4\gamma+5}{\left(1-\gamma\right)^2}$. We have that,
\begin{align*}
    V^*(\rho) -V^{\pi_\theta}(\rho) &= \frac{1}{1-\gamma} \sum\limits_{s\in \cS} d_\rho^{\pi_\theta}(s) \sm (\pi^*_m-\pi_m)\Q^{\pi^*}(s,m) \text{$\qquad$ (Lemma \ref{lemma:value diffrence lemma-2})}\\
    &= \frac{1}{1-\gamma} \sum\limits_{s\in \cS} \frac{d_\rho^{\pi_\theta}(s)}{d_\mu^{\pi_\theta}(s)}d_\mu^{\pi_\theta}(s) \sm (\pi^*_m-\pi_m)\Q^{\pi^*}(s,m) \\
    &\leq \frac{1}{1-\gamma} \norm{\frac{1}{d_\mu^{\pi_\theta}}}_{\infty} \sum\limits_{s\in \cS} \sm (\pi^*_m-\pi_m)\Q^{\pi^*}(s,m) \\
    &\leq \frac{1}{(1-\gamma)^2} \norm{\frac{1}{\mu}}_\infty \sum\limits_{s\in \cS} \sm (\pi^*_m-\pi_m)\Q^{\pi^*}(s,m) \\
    &=\frac{1}{(1-\gamma)} \norm{\frac{1}{\mu}}_\infty\left[V^*(\mu) -V^{\pi_\theta}(\mu) \right] \text{$\qquad$ (Lemma \ref{lemma:value diffrence lemma-2})}.
\end{align*}
Let $\delta_t\bydef V^*(\mu) -V^{\pi_{\theta_t}}(\mu)$.
\begin{align*}
    \delta_{t+1}-\delta_t &= V^{\pi_{\theta_{t}}}(\mu)-V^{\pi_{\theta_{t+1}}}(\mu) \text{\qquad (Lemma \ref{lemma:smoothness of V})}\\
    &\leq -\frac{1}{2\beta} \norm{\frac{\partial}{\partial\theta}V^{\pi_{\theta_{t}}}(\mu)}^2_2 \text{\qquad (Lemma \ref{lemma:gradient ascent lemma}  )}\\
    &\leq -\frac{1}{2\beta} \frac{1}{{M}}\left(\min\limits_{m:\pi^*_{\theta_{m}}>0} \pi_{\theta_m} \right)^2 \norm{\frac{d_{\rho}^{\pi^*}}{d_{\mu}^{\pi_\theta}}}_{\infty}^{-2}  \delta_t^2 \text{\qquad (Lemma \ref{lemma:nonuniform lojaseiwicz inequality})}\\
    &\leq -\frac{1}{2\beta} \left(1-\gamma\right)^2 \frac{1}{{M}}\left(\min\limits_{m:\pi^*_{\theta_{m}}>0} \pi_{\theta_m} \right)^2 \norm{\frac{d_{\rho}^{\pi^*}}{d_{\mu}^{\pi_\theta}}}_{\infty}^{-2}  \delta_t^2\\
    &\leq -\frac{1}{2\beta} \left(1-\gamma\right)^2 \frac{1}{{M}}\left(\inf\limits_{t\geq 1}\min\limits_{m:\pi^*_{\theta_{m}}>0} \pi_{\theta_m} \right)^2 \norm{\frac{d_{\rho}^{\pi^*}}{d_{\mu}^{\pi_\theta}}}_{\infty}^{-2}  \delta_t^2\\\\
    &= -\frac{1}{2\beta} \frac{1}{M}\left(1-\gamma\right)^2 \norm{\frac{d_\mu^{\pi^*}}{\mu}}_{\infty}^{-2}c^2 \delta_t^2,\\
\end{align*}   
where $c\bydef \inf\limits_{t\geq 0} \min\limits_{m:\pi^*_m>0}\pi_{\theta_t}(m)$. 
\begin{assumption}\label{assumption:positive c}
We assume that the constant $c>0$. 
\end{assumption}
Hence we have that,
\begin{equation}\label{eq:induction step}
\delta_{t+1} \leq \delta_t - \frac{1}{2\beta} \frac{\left(1-\gamma\right)^2}{M} \norm{\frac{d_\mu^{\pi^*}}{\mu}}_{\infty}^{-2}c^2 \delta_t^2.
\end{equation}
The rest of the proof follows from a induction argument over $t\geq1$.

\underline{Base case:} Since $\delta_t\leq \frac{1}{1-\gamma}$, and $c \in (0,1)$, the result holds for all $t\leq \frac{2\beta M}{(1-\gamma)}\norm{\frac{d_\mu^{\pi^*}}{\mu}}_{\infty}^2.$ 

For ease of notation, let $\phi\bydef \frac{2\beta M}{c^2(1-\gamma)^2}\norm{\frac{d_\mu^{\pi^*}}{\mu}}_{\infty}^2$. We need to show that $\delta_t\leq \frac{\phi}{ t}$, for all $t\geq 1$.

\underline{Induction step:} Fix a $t\geq 2$, assume $\delta_t \leq \frac{\phi}{t}$.

Let $g:\Real\to \Real$ be a function defined as $g(x) = x-\frac{1}{\phi}x^2$. One can verify easily that $g$ is monotonically increasing in $\left[ 0, \frac{\phi}{2}\right]$. Next with equation \ref{eq:induction step}, we have
\begin{align*}
    \delta_{t+1} &\leq \delta_t -\frac{1}{\phi} \delta_t^2\\
    &= g(\delta_t)\\
    &\leq g(\frac{\phi}{ t})\\
    &\leq \frac{\phi}{t} - \frac{\phi}{t^2}\\
    &= \phi\left(\frac{1}{t}-\frac{1}{t^2} \right)\\
    &\leq \phi\left(\frac{1}{t+1}\right).
\end{align*}
This completes the proof. 

\begin{lemma}\label{lemma:gradient ascent lemma}
Let $f:\Real^M\to \Real$ be $\beta-$smooth. Then gradient ascent with learning rate $\frac{1}{\beta}$ guarantees, for all $x,x'\in \Real^M$:
\[f(x)-f(x') \leq -\frac{1}{2\beta}\norm{\frac{df(x)}{dx}}_2^2. \]
\end{lemma}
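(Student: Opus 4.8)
The plan is to read this as the classical guaranteed-ascent lemma, and the first thing I would do is pin down the (implicit) meaning of $x'$: it denotes the iterate obtained from $x$ by a single gradient ascent step of size $1/\beta$, i.e. $x' = x + \frac{1}{\beta}\frac{df(x)}{dx}$. This identification is essential, since the right-hand side depends only on $x$, so the inequality cannot hold for a genuinely arbitrary $x'$; with this reading the statement becomes the standard bound, and it is exactly this iterate that plays the role of $\theta_{t+1}$ (with $\eta=1/\beta$) in the proof of Theorem~\ref{thm:convergence of policy gradient}.

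The only ingredient I would use is the definition of $\beta$-smoothness recalled earlier, namely that for all $\theta,\theta'\in\Real^M$,
\[
\abs{f(\theta') - f(\theta) - \innprod{\tfrac{d}{d\theta}f(\theta), \theta'-\theta}} \le \frac{\beta}{2}\norm{\theta'-\theta}_2^2 .
\]
Applying it with $\theta=x$, $\theta'=x'$ and retaining only the lower branch of the absolute value yields the quadratic lower bound $f(x') - f(x) \ge \innprod{\tfrac{df(x)}{dx},\, x'-x} - \frac{\beta}{2}\norm{x'-x}_2^2$.

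Then I would substitute the update $x'-x = \frac{1}{\beta}\frac{df(x)}{dx}$. The linear term contributes $\frac{1}{\beta}\norm{\tfrac{df(x)}{dx}}_2^2$, while the quadratic penalty contributes $-\frac{\beta}{2}\cdot\frac{1}{\beta^2}\norm{\tfrac{df(x)}{dx}}_2^2 = -\frac{1}{2\beta}\norm{\tfrac{df(x)}{dx}}_2^2$; summing gives $f(x')-f(x) \ge \frac{1}{2\beta}\norm{\tfrac{df(x)}{dx}}_2^2$, and rearranging produces the claimed $f(x)-f(x') \le -\frac{1}{2\beta}\norm{\tfrac{df(x)}{dx}}_2^2$.

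There is essentially no analytic obstacle here: the entire content is one application of the smoothness inequality followed by elementary algebra. The two things to watch are the sign convention — this is \emph{ascent} rather than descent, so the relevant move is along $+\tfrac{df(x)}{dx}$ and we extract a \emph{lower} bound on $f(x')-f(x)$ from the two-sided smoothness bound — and the choice of step size, since $1/\beta$ is precisely the maximizer of $\eta \mapsto \eta\norm{\tfrac{df(x)}{dx}}_2^2 - \frac{\beta}{2}\eta^2\norm{\tfrac{df(x)}{dx}}_2^2$, which is what makes the factor $\frac{1}{2\beta}$ appear and the bound tight.
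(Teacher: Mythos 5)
Your proof is correct and follows essentially the same route as the paper: apply the $\beta$-smoothness inequality at $x$ with $x'$ taken to be the ascent iterate $x+\frac{1}{\beta}\frac{df(x)}{dx}$, then substitute and simplify. In fact your write-up is cleaner than the paper's, which leaves the inner product's second argument implicit and has sign typos in its intermediate display, and your explicit observation that $x'$ must be read as the gradient-ascent update (not an arbitrary point) is exactly the reading the paper intends when it invokes this lemma in Theorem~\ref{thm:convergence of policy gradient}.
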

\begin{proof}
\begin{align*}
    f(x) -f(x') &\leq -\innprod{\frac{\partial f(x)}{\partial x}} + \frac{\beta}{2}.\norm{x'-x}_2^2\\
    &= \frac{1}{\beta} \norm{\frac{df(x)}{dx}}_2^2 + \frac{\beta}{2}\frac{1}{\beta^2}\norm{\frac{df(x)}{dx}}_2^2\\
    &= -\frac{1}{2\beta}\norm{\frac{df(x)}{dx}}_2^2.
\end{align*}
\end{proof}
\section{Simulation Details}\label{sec:simulation-details}
In this section we describe some details of Sec.~\ref{sec:simulation-results}. Recall that since neither value functions nor value gradients are available in closed-form, we modify SoftMax PG (Algorithm~\ref{alg:mainPolicyGradMDP}) to make it generally implementable using a combination of (1) rollouts to estimate the value function of the current (improper) policy and (2) a stochastic approximation-based approach to estimate its value gradient.

{\bfseries{Some particulars of the Stationary Queues simulations.}} Here, we justify the value of the two policies which always follow one fixed queue, that is plotted as straight line in Figure \ref{subfig:Value function comparisom}. Let us find the value of the policy which always serves queue 1. The calculation for the other expert (serving queue 2 only) is similar. Let $q_i(t)$ denote the length of queue $i$ at time $t$. We note that since the expert (policy) always recommends to serve one of the queue, the expected \textit{cost} suffered in any round $t$ is $c_t=q_1(t)+q_2(t) = 0 + t.\lambda_2$. Let us start with empty queues at $t=0$. 
 \begin{align*}
     V^{Expert 1}(\mathbf{\underline{0}}) &= \expect{\sum\limits_{t=0}^T \gamma^tc_t\given Expert 1}\\
     &= \sum\limits_{t=0}^T \gamma^t. t. \lambda_2\\
     &\leq\lambda_2.\frac{\gamma}{(1-\gamma)^2}.
 \end{align*}
With the values, $\gamma=0.9$ and $\lambda_2=0.49$, we get $V^{Expert 1}(\mathbf{\underline{0}})\leq 44$, which is in good agreement with the bound shown in the figure.

\textbf{Choice of hyperparameters.} In the simulations, we set learning rate to be $10^{-4}$, $\#\tt{runs}=10,\#\tt{rollouts}=10, \tt{lt}= 30$, discount factor $\gamma=0.9$ and $\alpha=1/\sqrt{\#\tt{runs}}$. All the simulations have been run for 20 trials and the results shown are averaged over them. We capped the queue sizes at $B=500$.

\end{document}